\documentclass{article}

\usepackage{arxiv}

\usepackage[utf8]{inputenc} 
\usepackage[T1]{fontenc}    
\usepackage{xr-hyper}
\usepackage{hyperref}       
\usepackage{url}            
\usepackage{booktabs}       
\usepackage{amsfonts}       
\usepackage{nicefrac}       
\usepackage{microtype}      
\usepackage{lipsum}
\usepackage{graphicx}
\usepackage{amsmath}
\usepackage{verbatim}
\usepackage[title]{appendix}
\usepackage{array,multirow,graphicx}
 \usepackage{float}
 \usepackage{caption}
\usepackage{subcaption}
\usepackage[title]{appendix}
\graphicspath{ {./images/} }

\usepackage{cleveref}

\usepackage{amsthm}

\newtheorem{proposition}{Proposition}

\makeatletter
\newcommand*{\addFileDependency}[1]{
  \typeout{(#1)}
  \@addtofilelist{#1}
  \IfFileExists{#1}{}{\typeout{No file #1.}}
}
\makeatother

\title{On Out-of-sample Embedding in UMAP}

\author{
Mohammad Tariqul Islam$^{1,2}$, Jason W. Fleischer$^{2},\thanks{Corresponding author.}$ \\
$^1$Media Lab, Massachusetts Institute of Technology, Cambridge, MA \\
$^2$Electrical and Computer Engineering, Princeton University, Princeton, NJ \\
\texttt{mhdtariq@mit.edu, jasonf@princeton.edu} \\
}

\date{\vspace{-5ex}}

\begin{document}
\maketitle

\begin{abstract}
Neighbor embedding algorithms reveal correlations in high-dimensional data by constructing an equivalent graph representation in a lower-dimensional space. An increasingly popular algorithm is Uniform Manifold Learning and Projection (UMAP), which uses algebraic topology to map distances between the two spaces. While it works well on many types of data sets, UMAP has trouble adding out-of-sample points to a pre-existing mapping. In particular, UMAP often places new points on the periphery of the found clusters, rather than in their interiors with their correlated neighbors. Here, we overcome this ``repulsion effect'' by optimizing pairwise interactions within the original k-nearest-neighbor graph. Moreover, we show that parameterizing UMAP obtains better embeddings than non-parametric algorithms, particularly as the data gets more complex (e.g., medical images). We also show that the repulsion effect is naturally mitigated when a parameterized UMAP is employed to embed the data. We characterize different UMAP approaches using trustworthiness, nearest neighbor classifiers, and by analyzing attractive and repulsive forces in the embeddings.
\end{abstract}


\section{Introduction}\label{sec:intro}

Neighbor embedding algorithms are unsupervised algorithms that identify groups of related data. They are generally regarded as nonlinear dimensionality reduction methods. The methods define a pairwise metric between points in the high-dimensional space that is used to make a similar graph in a lower-dimensional space. Iterative neighborhood embedding methods, like t-distributed stochastic neighbor embedding (t-SNE)~\cite{maaten2008visualizing} and the recently introduced uniform manifold approximation and projection (UMAP)~\cite{mcinnes2018umap}, have become standard methods due to their ability to scale up to millions of data points. In this section, we give a brief description of the neighbor embedding framework. Afterwards, we focus our attention on UMAP, as it has many significant properties that are not present in the other methods.

\subsection{Neighbor Embedding Framework}\label{sec:ne_framework}

The first step in the neighbor embedding framework is to construct a high-dimensional graph~\cite{hinton2002stochastic}. 
For $n$-dimensional data points $\{\mathbf{x}_i\}$ where $\mathbf{x}_i\in\mathbb{R}^n$ and $i = 1, 2, 3, \dots, N$, the high-dimensional weighted graph is given by a pairwise metric
\begin{align}
p_{ij} = f_H(d_H(\mathbf{x}_i,\mathbf{x}_j)|\{\mathbf{x}_i\}), \label{eq:1}
\end{align}
where $d_H(\cdot,\cdot)$ is a distance metric between two points and $f_H(\cdot)$ 
is a function that describes the weighted relation between points.

The low-dimensional graph is formulated in a similar manner by
\begin{align}
q_{ij} = f_L(d_L(\mathbf{x}_i,\mathbf{x}_j)|\{\mathbf{y}_i\}), \label{eq:2}
\end{align}
where the subscript $L$ refers to the lower $d$-dimensional embedding space and
$\mathbf{y}_i\in\mathbb{R}^d$, with $i=1, 2, 3, \dots, N$.

Finally, a relation between the high-dimensional graph and the low-
dimensional graph is established by optimizing a loss function
\begin{align}
\mathcal{L} = \sum_{i,j} l(p_{ij}, q_{ij}).
\end{align}

This embedding framework has been used extensively in sciences, including RNA sequences~\cite{macosko2015highly,kobak2019art,cao2019single, becht2019dimensionality, packer2019lineage}, representation learning~\cite{badamdorj2022contrastive,islam2024deciphering}, medical applications~\cite{hong2018predicting,fleischer2020late,wang2022medclip,islam2024outlier,vazifeh2025manifold} and communication engineering~\cite{anjinappa2021coverage,xu2022boundary}.

\subsection{Related works}
In the base neighbor embedding algorithm, the correlations and corresponding graph are calculated explicitly, without a parameter representation. Classic methods in this regime include Sammon mapping~\cite{sammon1969nonlinear}, Isomap~\cite{tenenbaum2000global}, locally linear embedding~\cite{roweis2000nonlinear}, Laplacian Eigenmaps\cite{belkin2002laplacian}, and t-SNE~\cite{maaten2008visualizing}. Iterative algorithms, such as t-SNE, have excellent visualization performance but slow implementation speed. 

The factors responsible for the speed of the algorithm are twofold: 1) the initial computation of the dense high-dimensional graph and 2) the normalization operation required in every optimization step.
Several methods have emerged to accelerate the calculations. 
In the graph construction step, Tang et al.~\cite{tang2016visualizing} used random projection trees and neighbor exploration to construct an approximate $k$-NN graph. 
In the optimization step, Maaten~\cite{van2014accelerating} and Yang et al.~\cite{yang2013scalable} used restricted sampling via the Burns-Hut approximation~\cite{barnes1986hierarchical}, Mikolov et al. introduced negative sampling~\cite{mikolov2013distributed}, and Linderman et al.~\cite{linderman2019fast} implemented fast Fourier transform interpolation. 

In UMAP, McInnes et al.~\cite{mcinnes2018umap} skipped the normalization step altogether and formulated the low-dimensional embedding by pairwise interactions alone. 
The resulting algorithm uses the cross-entropy loss function that depends explicitly on attractive and repulsive forces between points. 
The attractive forces preserve the local neighborhoods, while the repulsive forces ensure that points that are originally far apart stay far apart. 
A balance of these forces results in optimal dimensionality reduction~\cite{damrich2021umap}. Extending these, Islam and Fleischer~\cite{islam2025shape} showed that the attractive forces and their different forms guide the structure formation, whereas repulsive forces guide the intercluster distances.

A recent demonstration shows that t-SNE and UMAP are samples from a spectrum of neighbor embeddings~\cite{bohm2022attraction}. This resulted in a search for other algorithms (assuming situated under a spectrum) by varying the loss functions and optimization tricks~\cite{damrich2023contrastive, ijcai2023p406, abe2024nonlinear}.
Other recent trends in the literature include methods for extending the embedding for incremental datasets~\cite{ko2020progressive,senanayake2020self}, manifold alignment~\cite{islam2022manifold}, expanding the neighbor embedding framework for triplets (TriMap)~\cite{amid2019trimap}, and considering neighbors at different scales (PaCMAP)~\cite{wang2021understanding}.

The force balance becomes problematic when new points are considered, i.e., out-of-sample points that weren't included in the original embedding. 
Indeed, the formulation in Section~\ref{sec:ne_framework} requires all the weights to be computed again from scratch when new data is added. 
However, if there is a parametric transformation from high-dimensional data to low-dimensional data, such reconstruction can be avoided. 
Unfortunately, parametric embeddings are hard to obtain due to the costly normalization required during each optimization step. 
One remedy is to divide the whole dataset into small subsets of data and optimized all subsets on the same parameters~\cite{van2009learning,bunte2012general}. 
As UMAP forgoes the normalization operation, we can avoid such steps and parameterize it directly.
Approaches in this direction have been considered further down the network pipeline, e.g., as a learned regularizer in the bottleneck of an autoencoder~\cite{duque2020extendable}, directly in the loss function itself~\cite{sainburg2020parametric}, or by computing weight functions in the intermediate layers of the neural network~\cite{zhou2021deep}. Recent studies focus on global vs. local structure preservation, and tuning the repulsive forces extending PaCMAP principles~\cite{huang2024navigating}.
However, their effect on the visualization performance of out-of-sample embeddings has not been discussed.

\subsection{Contribution}

We focus on understanding and mitigating  out-of-sample embedding errors in UMAP. Overall, the contributions of the paper are given below:

\begin{itemize}
    \item We show that during embedding out-of-sample test points in UMAP, repulsive forces can result in test points accumulating in the periphery of clusters. We denote this phenomenon as `Repulsion Effect'.
    
    \item We analyze placements of out-of-sample points through attractive and repulsive forces. We introduce the attractive force ratio (AFR) and the repulsive force ratio (RFR) as metrics that characterize the force balance in an embedding.
    
    \item We develop several deep neural network techniques that learn the embeddings, enabling parameterized versions of UMAP. We show that parameterized UMAP mitigates the adverse effects of repulsion and gives better embeddings.
    
\end{itemize}

To the best of our knowledge, this is the first paper that thoroughly analyzes the test embedding performance of the UMAP algorithm, identifies the core issue with test embeddings, and provides solutions to it. 
We also uncover the conditions in which parameterized embeddings work best, viz., fields with complex data that are generated continuously, such as those in biomedical and healthcare settings. For a general discussion on modern parametric approaches, see~\cite{sainburg2020parametric} and~\cite{huang2024navigating}.

We organize the paper as follows. Section~\ref{sec:UMAP_SEC} describes the UMAP algorithm and the approach it takes to embed out-of-sample points. We also formalize the `repulsion effect'. In Section~\ref{sec:parameterizing}, we describe several methods to parameterize UMAP. Section~\ref{sec:experiments_pumap} provides the experimental setup, evaluation metrics, and the results on three representative datasets. Finally, Section~\ref{sec:conclusions} concludes the paper. Code and trained networks are available at \url{https://github.com/tariqul-islam/OOS_UMAP/}.

\section{Uniform Manifold Approximation and Projection (UMAP)} \label{sec:UMAP_SEC}

In this section, we review the UMAP algorithm~\cite{mcinnes2018umap}. We then focus on embedding out-of-sample points, with an emphasis on the competition between attractive and repulsive forces. 

\subsection{Embedding Algorithm}\label{sec:UMAP_ALGO}\label{sec:umap_def}
In UMAP, a high-dimensional weighted relation between points is constructed using a $k$-NN graph. For each data point $\mathbf{x}_i$, the relation is given by
\begin{align}
    p_{i|j} =
        \begin{cases}
            \exp{\left(-\frac{d_H(\mathbf{x}_i,\mathbf{x}_j)-\rho_i}{\sigma_i}\right)} & \text{if } x_j\in \text{KNN}(\mathbf{x}_i,k) \\
            0 & \text{otherwise}
        \end{cases}, \label{eq:UMAP_HIGH_DIM}
\end{align}
where $\text{KNN}(\mathbf{x}_i,k)$ is the set of $k$-nearest neighbors of the point $\mathbf{x}_i$, $\rho_i = \min_{\mathbf{x}_j\in \text{KNN}(\mathbf{x}_i,k)} d_H(\mathbf{x}_i,\mathbf{x}_j)$ and $\sigma_i$ is a scaling parameter set such that $\sum_j  p_{i|j} = \log_2(k)$. The parameter $\rho_i$ ensures that the point $\mathbf{x}_i$ has strong connectivity ($p_{i|j}=1$) to at least one of the nearest neighbors, and the scaling parameter $\sigma_i$ ensures the uniform manifold approximation. The adjacency matrix of the high-dimensional undirected weighted graph is obtained by (the probabilistic t-conorm), 
\begin{align}
    p_{ij} = p_{i|j} + p_{j|i} - p_{i|j} \times p_{j|i}.
\end{align}

The low-dimensional weight is given by a differentiable function
\begin{align}
    q_{ij} = 
    \frac{1}{1+a(||\mathbf{y}_i-\mathbf{y}_j||_2^2)^b}, \label{eq:low_dim_umap}
\end{align}
where the parameters $a$ and $b$ determine how crowded the low-dimensional points become after embedding. These two parameters are chosen by fitting $q_{ij}$ to 
\begin{align}
\tiny
    \Psi(\mathbf{y}_i,\mathbf{y}_j) = 
    \begin{cases}
    1 & \text{if } ||\mathbf{y}_i-\mathbf{y}_j||_2<m_d \\
    \exp(-(||\mathbf{y}_i-\mathbf{y}_j||_2 - m_d)) & \text{otherwise}
    \end{cases},
\end{align}
where $m_d$ is a user-defined parameter that regulates the minimum distance between two low-dimensional points. 

In order to obtain the final low-dimensional embedding, the UMAP algorithm aims to minimize the cross-entropy loss function
\begin{align}
    \mathcal{L} = 
    \sum_{i,j} p_{ij} \log\left( \frac{p_{ij}}{q_{ij}} \right) +
    \left(1-p_{ij}\right) \log\left( \frac{1-p_{ij}} {1-q_{ij}} \right) \label{eq:crs_loss_fun}
\end{align}
from an initialization of the embedding $\{\mathbf{y_i}\}$. 
The first term of the loss function provides attractive forces and the second term provides repulsive forces. 
However, computing attractive-repulsive forces for all pairs of points is costly. 
Thus, UMAP takes a negative sampling approach~\cite{mikolov2013distributed,tang2016visualizing} (for a detailed treatment of the effect of negative sampling on the loss function see~\cite{damrich2021umap}). 
In the optimization cycle for each edge with $p_{ij}$ in the $k$-NN graph, termed a positive edge, a number of random edges are sampled, termed negative edges. 
The attractive forces are minimized for the positive edges, and the repulsive forces are minimized for the negative edges. 
For the negative edges, the term $\left(1-p_{ij}\right)$ is assumed to be $1$ (this assumption is often justified as most of the $p_{ij}$ entries are zeros). 
In order to obtain a fast convergence, in each iteration, the method samples one positive edge with probability $p_{ij}$ and applies the attractive force. 
After that, it randomly samples $n_s$ negative edges and applies repulsive forces to each of the negative samples. 
The parameter $n_s$, known as the negative sampling rate, is typically set to $5$.

\subsection{Embedding Out-of-sample Points in an Existing Embedding}\label{sec:embed_out_of_sample}

The simplest rule-based method optimizes the low-dimensional embedding of the new point alone while keeping the existing embedding intact~\cite{berman2014mapping,polivcar2019embedding}. 
An adaptation for UMAP, using a $k$-NN graph of the sample concerning the training dataset, appears in the UMAP software itself~\cite{mcinnes2018umap-software}. However, to our knowledge, no description of the method is available. 
For completeness, we give such a description below. 

Let $\mathbf{u}\in\mathbb{R}^n$ be the out-of-sample test point and $\mathbf{v}\in\mathbb{R}^d$ be its UMAP embedding. The high-dimensional weighted graph for this point is given by replacing the in-sample point $x_i$ in Eq.~(\ref{eq:UMAP_HIGH_DIM}) with the point~$u$
\begin{align}
    p_{\mathbf{u}|j} = 
    \begin{cases}
        \exp\left(-\frac{d_H(\mathbf{u},\mathbf{x}_j)- \rho_\mathbf{u}}{\sigma_{\mathbf{u}}}\right) & \text{if } \mathbf{x}_j \in \text{KNN}(\mathbf{u},k)\\
        0 & \text{otherwise}
    \end{cases}, \label{eq:UMAP_REF_HIGH_DIM}
\end{align}
which is normalized to $p_{\mathbf{u}j}$ by
\begin{align}
    p_{\mathbf{u}j} = \frac{p_{\mathbf{u}|j}}{\sum_j  p_{\mathbf{u}|j}}.
\end{align}
Next, we define the low-dimensional weight as 
\begin{align}
    q_{\mathbf{v}j} = 
    \frac{1}{1+a(||\mathbf{v}-\mathbf{y}_j||_2^2)^b}. \label{eq:low_dim_umap_test}
\end{align} 
From an initialization of $\mathbf{v}$, we aim to minimize the cross-entropy loss function with respect to $\mathbf{v}$:
\begin{align}
    \mathcal{L} = 
    \sum_{j} p_{\mathbf{u}j} \log\left( \frac{p_{\mathbf{u}j}}{q_{\mathbf{v}j}} \right) +
    \left(1-p_{\mathbf{u}j}\right) \log\left( \frac{1-p_{\mathbf{u}j}} {1-q_{\mathbf{v}j}} \right). \label{eq:test_loss_function}
\end{align}
Similar to the embedding algorithm in Section~\ref{sec:umap_def}, the attractive and the repulsive forces are applied to a single positive edge and $n_s$ negative edges. The negative sampling parameter, $n_s$, primarily controls the amount of repulsive forces during optimization.

\begin{figure}[t]
    \centering
    \includegraphics[width=0.49\textwidth]{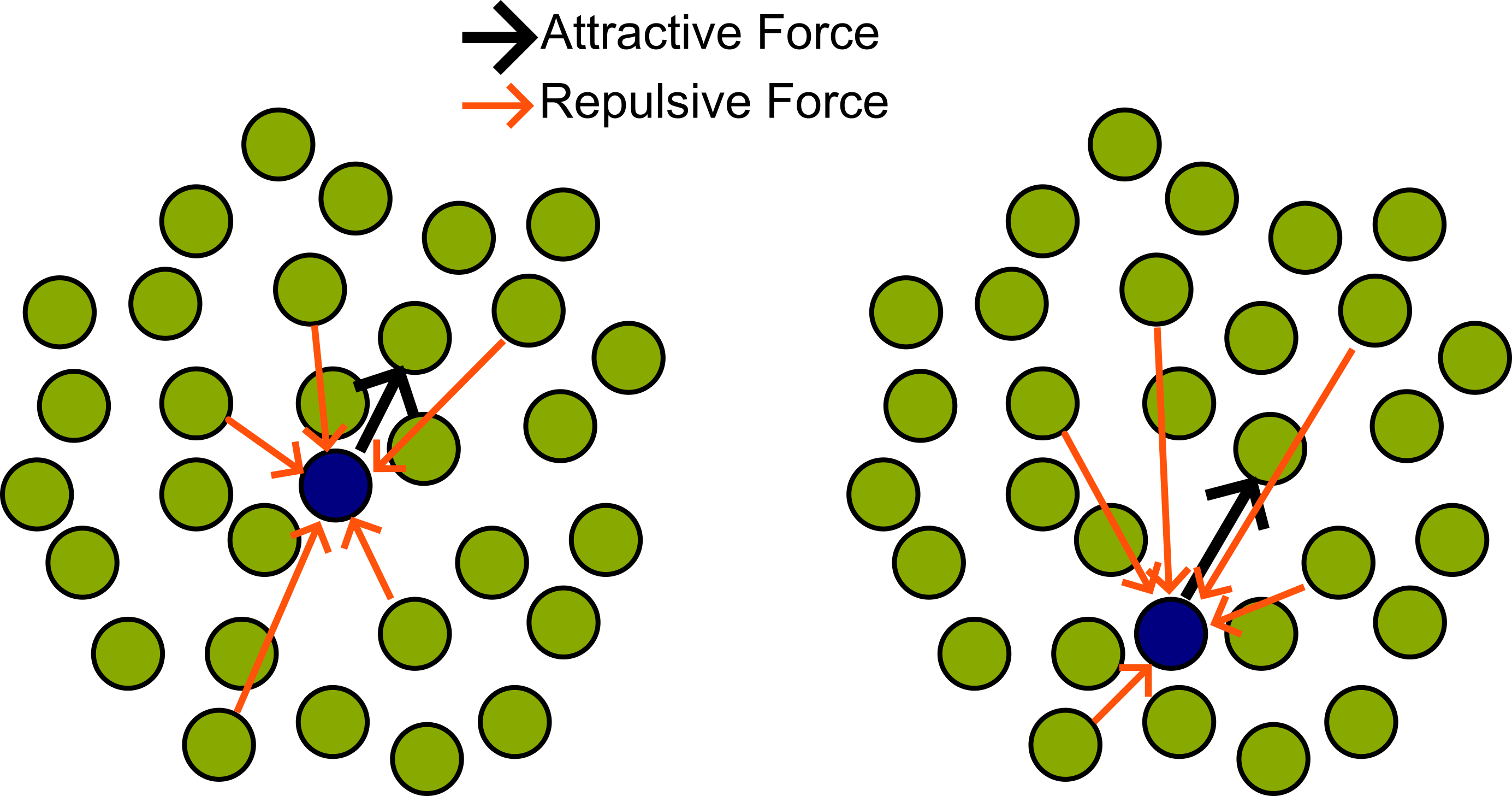} \\
    (a) \hspace{0.24\textwidth} (b) \\
    \caption{Influence of attractive force from a neighbor and repulsive forces from negative samples on an out-of-sample test point (blue). (a) When the point is in the interior of the cluster, the negative samples often surround the test point, balancing the overall forces. (b) However, when the point is near the periphery, the negative samples often provide (an overall) directional force that may cause the test point to be pushed out of the cluster, resulting in what we call the `repulsion effect'.}
    \label{fig:rep_eff_sketch}
\end{figure}

\subsection{Repulsion Effect}\label{sec:repulsion_effect}

Two problems arise during the embedding of out-of-sample points: 1) the lack of normalization while approximating the low-dimensional structure in Eq.~(\ref{eq:low_dim_umap_test}) and 2) the stochastic nature of the optimization from the negative sampling rate parameter $n_s$. 
Since there is no normalization, each update from optimizing Eq.~(\ref{eq:test_loss_function}) works as a freewheel, i.e., there is no mitigating factor that reduces overshooting of the out-of-sample point from a desired location. 
Secondly, as the negative sampling parameter $n_s$ is typically $>1$, the out-of-sample point undergoes more repulsion than attraction.
The update equations of an out-of-sample point $v$ with the intermediate steps are as follows:
\begin{align}
    v^{(t+a)} &= v^{(t)} - \eta f_a(v^{(t)},y^{(a)}), \\
    v^{(t+a+1)} &= v^{(t+a)} + \eta f_r(v^{(t+a)}, y^{(1)}), \\
    \dots, & \nonumber \\
    v^{(t+1)} = v^{(t+a+n_s)} &= v^{(t+a+n_s-1)} + \eta f_r(v^{(t+a+n_s-1)}, y^{(n_s)}),
\end{align}
where $t$ is the iteration number, $v^{(t+a)}$ is the update due to attractive forces from the neighbor $y^{(a)}$,  $\{v^{(t+a+1)},\dots,v^{(t+a+n_s)}\}$ are updates due to repulsive forces from the negative samples $\{y^{(1)},\dots,y^{(n_s)}\}$, $\eta$ is the learning rate, and $f_a$ and $f_r$ define the attractive and repulsive forces, respectively.

These multiple repulsions are less of a problem if the repulsive forces are distributed evenly in all directions, which prevents the overall effect from being directional (Fig.~\ref{fig:rep_eff_sketch}(a)).
If most of the repulsive forces are in the same direction, however (e.g., near a boundary), the resultant repulsive force pushes the point towards the periphery of the original embedding (Fig.~\ref{fig:rep_eff_sketch}(b)). 
Strong directional repulsive forces may even shoot the test point out of the cluster boundary.  
We dub this phenomenon the ``repulsion effect''. 
On the other hand, if $n_s=0$ for an out-of-sample point, the attractive force will dominate and manifest the point in the middle of the embedding, close to its nearest neighbor partner. 
Consequently, to obtain an optimal placement, the values of $k$ and $n_s$ require tuning for each out-of-sample point, which is time-consuming. 
Moreover, the user typically does not know the best placement of the out-of-sample point, making manual tuning a futile effort.  

\section{Parameterizing UMAP}\label{sec:parameterizing}

To parameterize UMAP, we define $\mathbf{y}_i=f_\theta(x_i)$, where $\{\theta\}$ is a set of controlling parameters and $f_\theta:\mathbb{R}^n\to\mathbb{R}^d$ is a transformation from the high-dimensional input to the low-dimensional output. 
We assume that the function $f_\theta(\cdot)$ is differentiable so that it can be optimized using gradient methods. 
There are many approaches to obtaining a parameterized version of UMAP, e.g., optimizing the cross-entropy loss function (as in the original UMAP algorithm) or the mean-squared error. 
In what follows, we denote the class of parameterized versions of UMAP as P-UMAP.

\subsection{Minimizing Cross Entropy Error: P-UMAP-CE}

Here, we formulate the optimization as a `learning UMAP from first principles' concept. First, we define the low-dimensional weights similar to Eq.~(\ref{eq:low_dim_umap}):
\begin{align}
    q^{\{\text{P}\}}_{ij} = 
    \frac{1}{(1+a(||f_\theta(\mathbf{x}_i)-f_\theta(\mathbf{x}_j)||_2^2)^b)}.
\end{align}
Then, we minimize the cross-entropy loss function 
\begin{align}
    \mathcal{L}^{\{\text{P}\}}(\theta) =
    \sum_{i,j} p_{ij} \log\left( \frac{p_{ij}}{q^{\{\text{P}\}}_{ij}} \right) +
    \left(1-p_{ij}\right) \log\left( \frac{1-p_{ij}} {1-q^{\{\text{P}\}}_{ij}} \right)\label{eq:ce_loss_pumap}
\end{align}
with respect to $\theta$ to obtain a low-dimensional embedding. This is analogous to the formulation of~\cite{sainburg2020parametric}.

\subsection{Regression using Mean Squared Error: P-UMAP-MSE}

In this approach, we obtain a reference embedding, $\{y_i\}$, by running the UMAP algorithm. 
After that, we learn a parameterized approximation simply by minimizing the mean-squared error:
\begin{align}
    \mathcal{L}^{\{\text{MSE}\}}(\theta) = \sum_{i} ||f_\theta(\mathbf{x}_i) - \mathbf{y}_i ||_2^2,
\end{align}
yielding a straightforward, inexpensive approach.

\subsection{Combination Approach: P-UMAP-CEMSE}
In this approach, we obtain a parameterized version of UMAP by minimizing both the cross-entropy error and the mean-squared error in the combined loss function
\begin{align}
    \mathcal{L}^{\{\text{P2}\}}(\theta) = \mathcal{L}^{\{\text{P}\}}(\theta) + \mathcal{L}^{\{\text{MSE}\}}(\theta).
\end{align}
This joint function combines the quick learning of MSE error with the data-driven structure of CE.

\section{Experiments}\label{sec:experiments_pumap}

We conduct numerical experiments to evaluate the placement of out-of-sample points in the embeddings.
We first describe the experimental setup and the metrics used to evaluate the embeddings. 
We then describe the visualization performance in different datasets. 
Finally, we characterize the placement of new points in terms of attractive and repulsive forces.

\subsection{Experimental Setup}\label{sec:experimental_setup}\label{sec:neural_network_architecture}
For parameterizing UMAP, we employed a seven-layer, fully-connected, feed-forward neural network. We describe each layer as:
\begin{align}
    f_l(\mathbf{x}) = \sigma(\mathbf{\gamma}_l (\mathbf{W}_l \mathbf{x} + \mathbf{b_l})),
\end{align}
where $l$ is the layer number, $\sigma(\cdot)$ is the sigmoid activation function, and $\mathbf{W}_l$, $\mathbf{b}_l$, and $\mathbf{\gamma}_l$ are learnable weights, biases, and scaling parameters, respectively (forming the parameters, $\theta$, of the network). The number of output neurons of the layers is 500, 300, 200, 100, 100, 100, and $d$, respectively. For visualization, $d=2$ as usual.

The Adam~\cite{kingma2014adam} optimization algorithm is used for its faster and more efficient approach. 
The learning rate is set to $0.001$ initially and then decreased successively by 1/10th every five epochs. 
To optimize the network, we followed the same negative sampling principle of UMAP and used $n_s=5$ in all experiments. 
Depending on the dataset and learning mechanism, we changed the number of epochs for training. 
For P-UMAP-MSE and P-UMAP-CEMSE, optimizing for 20 epochs proved sufficient. 
For P-UMAP-CE, which is a slow learner, we used 40 epochs. 
Any changes from these settings are discussed explicitly. 

In what follows, we performed the experiments in Python 3.6.9 and utilized UMAP software v0.4.6~\cite{mcinnes2018umap-software}  for non-parametric embeddings and PyTorch software package~\cite{paszke2019pytorch} for the deep learning experiments. 
Our codes for different metrics use numpy~\cite{harris2020array}, numba v0.51~\cite{lam2015numba}, and scikit-learn~\cite{scikit-learn}.

\subsection{Evaluation Metrics}\label{sec:evaluation_metrics}

A single evaluation metric often fails to capture the diverse properties of an embedding. 
Therefore, we evaluate in a multitude of ways for a comprehensive assessment. 
Following conventional practice, we assess the dimensionality reduction based on two metrics: the trustworthiness ($T_\kappa$)~\cite{venna2001neighborhood} and the $k$-NN classifier error~\cite{cover1967nearest}. To quantify the repulsion effect, we analyze the accumulation of points at the periphery of a cluster. We characterize this by 1) counting the number of points that accumulate in the periphery (accumulation) and 2) quantifying the balance of attractive and repulsive forces around the periphery in terms of force ratios. These metrics are described in detail below.

\subsubsection{Trustworthiness}
The trustworthiness metric measures whether the local structure in the low-dimensional space conforms with that of the high-dimensional data:
\begin{align}
    T_\kappa = 1 - \frac{2}{n\kappa(2n-3\kappa-1)}  \sum_{i=1}^n \sum_{y_j\in \text{KNN}(y_i,\kappa)} \max(0,(r(i,j)-\kappa)),
\end{align}
where $KNN(y_i,\kappa)$ is the $k$-NN graph in the low-dimensional space and $r(i,j)$ is the rank of $x_j$ in the high-dimensional $k$-NN graph~($KNN(x_i,\kappa)$).  
The parameter $\kappa~(<\frac{N}{2})$ refers to the number of nearest neighbors considered in the low dimension and works as a scale of the local neighborhood. 
To understand the structure preservation at different scales, we consider values of $\kappa$ of $5$, $30$, and $100$. 
Moreover, we consider the trustworthiness of training data alone, which is traditional, and the trustworthiness of the combined training and test data to see the relative rank of the out-of-sample test points. 
Previous literature~\cite{sainburg2020parametric} computed trustworthiness by sampling 10,000 points from the embedding; here, we used all the available samples in each dataset to paint a more accurate picture.

\subsubsection{$k$-NN Classifier}
We employ 1-NN and 5-NN classifiers. The 1-NN classifier assigns the label of the nearest neighbor to the test point. 5-NN classifier considers 5 nearest neighbors and assigns the majority of the labels to the test point.

\subsubsection{Accumulation}
Given a cluster $\mathcal{C}$ and the associated test points $\{\mathbf{v}_\mathcal{C}\}$, we count accumulation $\zeta$ by
\begin{align}
    \zeta = | \{ \mathbf{v} | \mathbf{v}\in\mathbf{v}_\mathcal{C}, \mathbf{v}\notin P \} |,
\end{align}
where $P$ represents a convex polytope within $\mathcal{C}$. To obtain the desired convex polytope, we first compute the convex hull of the cluster. Subsequently, we adjust the vertices of the convex hull by moving them towards the center (i.e., reducing their distance from the center) by 5\%. For further details, see Appendix~\ref{suppsec:accumulation}.

\subsubsection{Force Ratios}\label{sec:force_ratios}

The forces are inherently $d$-dimensional vectors. To define the force ratios, however, we only work with scalars. The total attractive force on an embedded test point $\mathbf{v}$ (and corresponding high-dimensional point $\mathbf{u}$) is given by
\begin{align}
    F_a(\mathbf{v}) = \Bigl|\Bigl|\sum_j p_{\mathbf{u}|j} f_a(\mathbf{v},\mathbf{y}_j)\Bigr|\Bigr| , \text{~~where~~} \mathbf{x}_j\in \text{KNN}(\mathbf{u},k),
\end{align}
where $f_a(\mathbf{v},\mathbf{y}_j)$ is the pairwise attractive force between embedded points $\mathbf{v}$ and $\mathbf{y}_j$, and $p_{\mathbf{u}|j}$ is the weight of the point $\mathbf{u}$ in the high-dimensional graph.
Similarly, the total repulsive force on the point $\mathbf{v}$ is
\begin{align}
    F_r(\mathbf{v}) = \Bigl|\Bigl|\sum_j p_{\mathbf{u}|j} f_r(\mathbf{v},\mathbf{y}_j)\Bigr|\Bigr|, \text{~~where~~} \mathbf{x}_j\in \text{KNN}(\mathbf{u},k),
\end{align}
where $f_r(\mathbf{v},\mathbf{y}_j)$ is the pairwise repulsive force between $\mathbf{v}$ and $\mathbf{y}_j$.

With these averages, we define the force ratio as the ratio of forces of two sets of embedded points $S_1$ and $S_2$. The attractive force ratio (AFR) is the ratio of the average attractive forces on points in $S_1$ and the points in $S_2$ as:
\begin{align}
    \text{AFR}(S_1||S_2) = \frac{\sum_{\mathbf{v}\in S_1} F_a(\mathbf{v})}{\sum_{\mathbf{z}\in S_2} F_a(\mathbf{z})}. \label{eq:afr}
\end{align}
In the ideal case, the AFR should be close to 1.0, indicating the balance of force around the neighborhood of the points of the sets. As a result, it works as a metric of force balance. To match intuition, one can define new metrics such as $1-\text{AFR}$ or $|1-\text{AFR}|$ so that the balance of force occurs at $0$. Note that AFR is not a symmetric function as $\text{AFR}(S_1||S_2)$ and $\text{AFR}(S_2||S_1)$ are not typically equal if $S_1\neq S_2$.

Similarly, we define the repulsive force ratio (RFR) as:
\begin{align}
    \text{RFR}(S_1||S_2) = \frac{\sum_{\mathbf{v}\in S_1} F_r(\mathbf{v})}{\sum_{\mathbf{z}\in S_2} F_r(\mathbf{z})}. \label{eq:rfr}
\end{align}
A value of 1.0 indicates the balance of average forces between the sets. Similar to AFR, RFR is also asymmetric.

The force ratios in Eq.~(\ref{eq:afr}) and~(\ref{eq:rfr}) are abstract and can be implemented for any algorithm that relies on attractive and repulsive forces. For the UMAP algorithm, the attractive force $f_a$ and the repulsive force $f_r$ are obtained by differentiating the attractive and repulsive terms of the cross-entropy loss function (Eq.~(\ref{eq:crs_loss_fun})), respectively:
\begin{align}
    f_a(\mathbf{v},\mathbf{y}) = \frac{2ab(||\mathbf{v}-\mathbf{y}||_2^2)^{b-1}}{1 + a * (||\mathbf{v}-\mathbf{y}||_2^2)^b} (\mathbf{v}-\mathbf{y}), \\
    ~\nonumber\\
    f_r(\mathbf{v},\mathbf{y}) = \frac{2 b}{ ||\mathbf{v}-\mathbf{y}||_2^2 (1 + a (||\mathbf{v}-\mathbf{y}||_2^2)^b) } (\mathbf{v}-\mathbf{y}).
\end{align}

\subsection{Comparing Visualizations}\label{sec:comparing_visualization}

In this section, we assess the visualization performance of different approaches on three datasets: MNIST, chest x-rays, and clinical data from emergency departments. We use trustworthiness ($T_\kappa$), k-NN classifier error, and accumulation as figures of merit.
We compare our results to the original UMAP algorithm by varying the nearest neighbor ($k$) and the negative sampling ($n_s$) parameters.

\subsubsection{MNIST}

\begin{figure} [t]
    {\centering
    \includegraphics[width=0.22\linewidth]{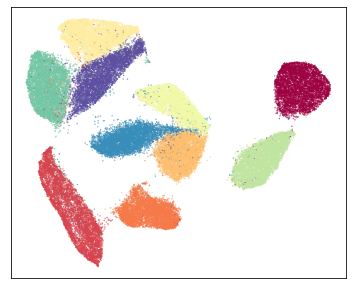}
    \includegraphics[width=0.22\linewidth]{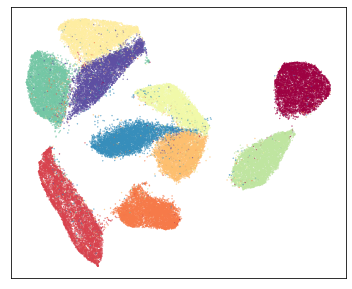}
    \includegraphics[width=0.22\linewidth]{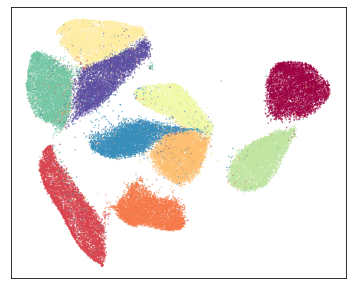}
    \includegraphics[width=0.22\linewidth]{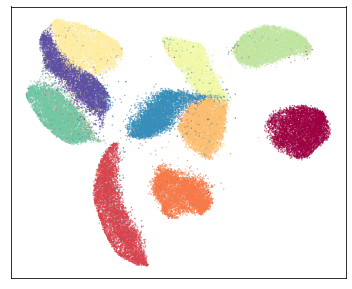} 
    \includegraphics[width=0.037\linewidth]{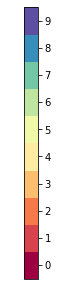} \\}
    \hspace{0.14\textwidth} (a) \hspace{0.19\textwidth} (b) \hspace{0.19\textwidth} (c) \hspace{0.19\textwidth} (d)
    \caption{Two-dimensional embedding of the MNIST dataset using (a) UMAP ($T_5: 0.9523$), (b) P-UMAP-MSE ($T_5: 0.9523)$, (c) P-UMAP-CEMSE ($T_5: 0.9550$), and (d) P-UMAP-CE ($T_5:0.9555$). Note that, (b) and (c) use (a) as the reference embedding. The figures (b)-(d) show that all the parameterized method essentially obtain comparable embedding to the original UMAP method in (a).}
    \label{fig:mnist_embedding}
\end{figure}

\begin{table*}[t]
 \caption{Results on MNIST dataset in terms of Trustworthiness, k-NN classifier, and accumulation.}
 \vspace{0.05in}
  \centering
  \resizebox{\textwidth}{!}{
  \begin{tabular}{l|c|c|c|c|c|c|c|c|c}
    \toprule
             & \multicolumn{3}{|c|}{Trustworthiness (Train)} & \multicolumn{3}{|c|}{Trustworthiness (Train+Test)} & \multicolumn{2}{|c|}{k-NN classifier} & Accumulation, $\zeta$ \\
             \cmidrule(r){2-10}
             & $T_{5}$ & $T_{30}$ & $T_{100}$ & $T_{5}$ & $T_{30}$ & $T_{100}$ & $1$-NN Error & $5$-NN Error & Label 1 \\
    \midrule
    UMAP \tiny{($n_s=5$)} & $0.9523$ & $0.9518$ & $0.9502$ & $0.9523$ & $0.9519$ & $0.9505$ & $7.41\%$ & $4.74\%$ & $104$ \\
    UMAP \tiny{($n_s=3$)} & -- & -- & -- & $0.9523$ & $0.9519$ & $0.9517$ & $7.41\%$ & $4.80\%$ & $75$ \\
    UMAP \tiny{($n_s=1$)} & -- & -- & -- & $0.9523$ & $0.9518$ & $0.9505$ & $7.13\%$ & $4.74\%$ & $48$ \\
    UMAP \tiny{(Train+Test)} & $0.9537$ & $0.9531$ & $0.9517$ & $0.9542$ & $0.9533$ & $0.9521$ & $\mathbf{6.14}\%$ & $\mathbf{3.64}\%$ & $49$ \\
    P-UMAP-MSE    & $0.9523$ & $0.9517$ & $0.9491$ & $0.9467$ & $0.9467$ & $0.9456$ & $10.24\%$ & $7.23\%$ & $53$ \\
    P-UMAP-CEMSE & $0.9550$ & $0.9544$ & $0.9522$ & $0.9538$ & $0.9534$ & $0.9523$ & $6.67\%$ & $4.19\%$ & $46$ \\
    P-UMAP-CE     & $\mathbf{0.9555}$ & $\mathbf{0.9553}$ & $\mathbf{0.9535}$ & $\mathbf{0.9547}$ & $\mathbf{0.9546}$ & $\mathbf{0.9538}$ & $8.20\%$ & $5.02\%$ & $51$ \\
    \bottomrule
  \end{tabular}
  \label{tab:mnist_table}
  }
\end{table*}

\begin{figure*} [t]
    {\centering
    \includegraphics[width=0.3\textwidth]{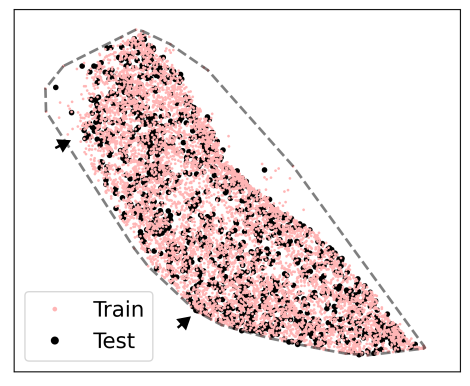}
    \includegraphics[width=0.3\textwidth]{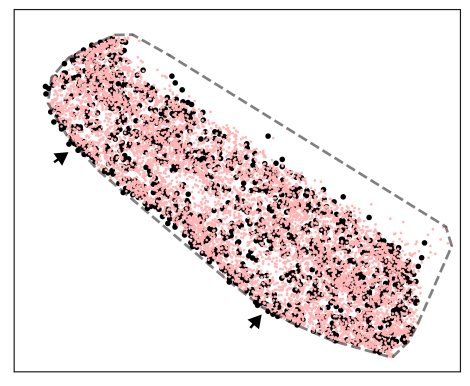}
    \includegraphics[width=0.3\textwidth]{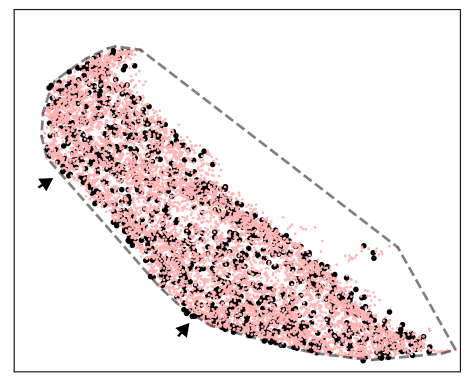}
     \\
    (a) \hspace{0.28\textwidth} (b) \hspace{0.28\textwidth} (c) \\}
    {\centering
    \includegraphics[width=0.3\textwidth]{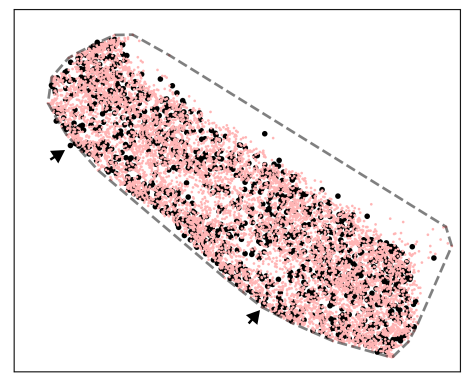}
    \includegraphics[width=0.3\textwidth]{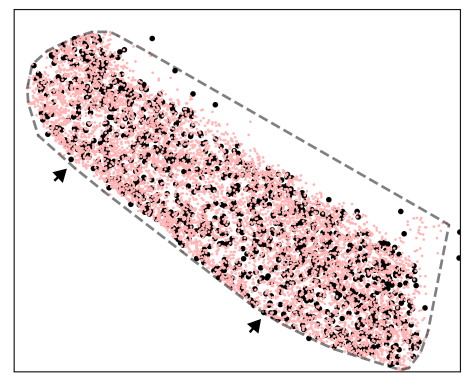}
    \includegraphics[width=0.3\textwidth]{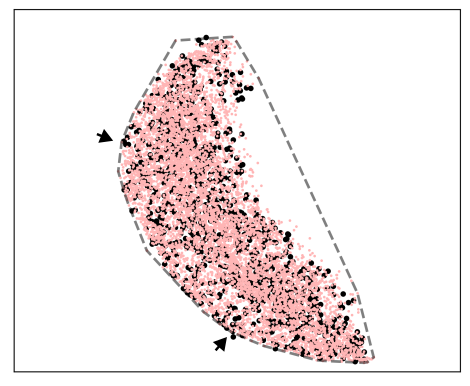} \\
    (d) \hspace{0.28\textwidth} (e) \hspace{0.28\textwidth} (f) \\}
    \caption{Two-dimensional embedding of MNIST data, zoomed to show the cluster labeled $1$. A grey dashed line shows the convex hull of this cluster identified by HDBSCAN. (a) UMAP algorithm with $k=30$ with both training and test data embedded together ($\zeta=49$). (b,c) UMAP algorithm for training data followed by the test data considered as out-of-sample data using the method in Section~\ref{sec:embed_out_of_sample} with (b) $k=30$ ($\zeta=104$) and (c) $k=15$ ($\zeta=105$). (d) UMAP algorithm as in (b), but $n_s$ is set to $3$ for test embedding ($\zeta=75$). (e) P-UMAP-CEMSE ($\zeta=46$), and (f) P-UMAP-CE ($\zeta=51$) for the same settings as (b). Both (b) and (c) show an accumulation of test points (in black) at the periphery of the cluster (indicated using a black arrow) due to the `repulsion effect'. This accumulation is not present in the rest.}
    \label{fig:mnist_repulsion_effect}
\end{figure*}

We start with the standard MNIST dataset of handwritten digit images~\cite{lecun2010mnist}, a commonly used benchmark for evaluating neighbor embedding methods. 
We set the nearest-neighbor parameter $k$ to $30$ and the minimum-distance parameter $m_d$ to $0.25$. 
The training embedding uses the default $n_s=5$ (Fig.~\ref{fig:mnist_embedding}(a)). 
After that, we varied $n_s$ to $5$, $3$, and $1$ for test embedding to see the effect of repulsive forces. 
We also obtained embedding for combined training and test data (UMAP {\tiny{(Train+Test)}}), i.e., the entire dataset, to see how UMAP would ``naturally'' place the out-of-sample test points. 
Table~\ref{tab:mnist_table} summarizes the numerical results for these and the parameterized UMAP embeddings (Fig.~\ref{fig:mnist_embedding}(b)-(d)).
Since MNIST has a clearly defined manifold structure, all the embeddings identify the same clusters.
For the non-parametric UMAP (by varying $n_s$), the trustworthiness of the overall embedding is better for $T_{100}$ when $n_s=3$, indicating better preservation of a larger structure. 
However, the trustworthiness values are even higher when the training and test data are embedded together (UMAP~{\tiny{(Train+Test)}}). 
The gap between this and the UMAP implementation illustrates the misplacements due to the repulsion effect.
For the parameterized embeddings, P-UMAP-MSE achieves a lower trustworthiness than UMAP, while the P-UMAP-CEMSE and P-UMAP-CE embeddings provide higher trustworthiness. 
In terms of the k-NN classifier error, the embedding of UMAP~{\tiny{(Train+Test)}} shows the best performance. 
Interestingly, the P-UMAP-CEMSE provides the 2nd-best k-NN error (we will see later that the CE loss alone achieves the best result as the datasets become more complex). 

The previous metrics reduce the embeddings to a single number and do not characterize the distribution of the points (or the accumulation due to the repulsion effect). While the repulsion effect is not severe or sometimes noticeable in the MNIST dataset, it is present. 
To demonstrate, we first identify the cluster primarily consisting of Label 1 using HDBSCAN~\cite{mcinnes2017hdbscan} and compute the accumulation $\zeta$ (Table~\ref{tab:mnist_table}). UMAP~{\tiny{(Train+Test)}} gives a baseline accumulation $\zeta=49$. 
For UMAP~{\tiny{($n_s=5$)}}, $\zeta=104$ (more than double the nominal value), indicating that the repulsive forces have pushed more test points to the periphery of the cluster. For $n_s=3$, accumulation is higher than the baseline ($\zeta=75>49$), whereas for $n_s=1$, it matches the baseline. On the other hand, all the parameterized UMAP implementations are relatively close to the nominal value, with $\zeta=50$.

\begin{table}[t]
    \centering
    \caption{Time required to embed each out-of-sample point of MNIST data for $k=30$ for UMAP algorithm, and a trained neural network.}
    \label{tab:run_time_mnist}
    \vspace{0.05in}
    \begin{tabular}{c|c|c}
    \hline
        Method  & Sequential Time (ms) & Parallel Time (ms)  \\
        \hline
        UMAP & $489.63$ & $5.79$ \\
        Neural Network & $0.53$ & $0.08$ \\
        \hline
    \end{tabular}
    
\end{table}

\begin{table*}[t]
 \caption{Results on chest x-ray embedding in terms of Trustworthiness, k-NN classifier, and accumulation.} \label{tab:pneumonia_results}
 \vspace{0.05in}
  \centering
\resizebox{\textwidth}{!}{
  \begin{tabular}{l|c|c|c|c|c|c|c|c|c}
    \toprule
             & \multicolumn{3}{|c|}{Trustworthiness (Train)} & \multicolumn{3}{|c|}{Trustworthiness (Train+Test)} & \multicolumn{2}{|c|}{k-NN classifier} & Accumulation, $\zeta$ \\
             \cmidrule(r){2-10}
             & $T_{5}$ & $T_{30}$ & $T_{100}$ & $T_{5}$ & $T_{30}$ & $T_{100}$ & $1$-NN Error & $5$-NN Error  \\
    \midrule
     UMAP \tiny{($n_s=5$)}    & $0.8345$ & $0.8332$ & $0.8315$ & $0.8335$ & $0.8326$ & $0.8314$ &  $23.85\%$ & $19.05\%$ & $245$ \\
     UMAP \tiny{($n_s=3$)}  & -- & -- & -- & $0.8340$ & $0.8330$ & $0.8315$ & $25.05\%$ & $17.85\%$ & $79$\\
     UMAP {\tiny($n_s=1$)} & -- & -- & -- & $0.8321$ & $0.8312$ & $0.8301$ & $22.89\%$ & $17.50\%$ & $30$\\
     UMAP {\tiny(Train+Test)} & $0.8328$ & $0.8324$ & $0.8308$ & $0.8340$ & $0.8330$ & $0.8315$ & $23.90\%$ & $17.85\%$ & $76$\\
     P-UMAP-MSE    & $0.8345$ & $0.8338$ & $0.8315$ & $0.8325$ & $0.8323$ & $0.8305$ & $23.30\%$ & $17.90\%$ & $63$ \\
     P-UMAP-CEMSE & $0.8397$ & $0.8410$ & $0.8411$ & $0.8379$ & $0.8393$ & $0.8398$ & $23.15\%$ & $17.00\%$ & $70$ \\
     P-UMAP-CE     & $\mathbf{0.8430}$ & $\mathbf{0.8430}$ & $\mathbf{0.8429}$ & $\mathbf{0.8400}$ & $\mathbf{0.8403}$ & $\mathbf{0.8409}$ & $\mathbf{22.30}\%$ & $\mathbf{16.15}\%$  & $78$ \\
    \bottomrule
  \end{tabular}
  }
\end{table*}

\begin{figure*} [t]
    {\centering
    \includegraphics[width=0.3\textwidth]{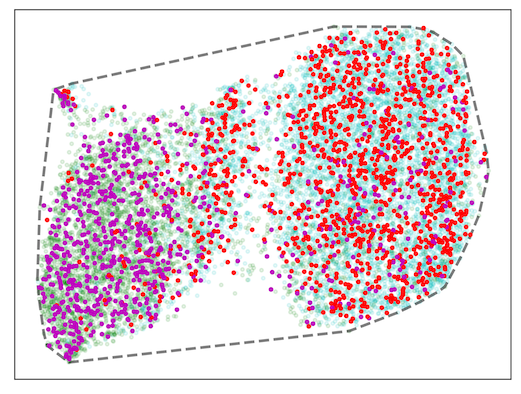} 
    \includegraphics[width=0.3\textwidth]{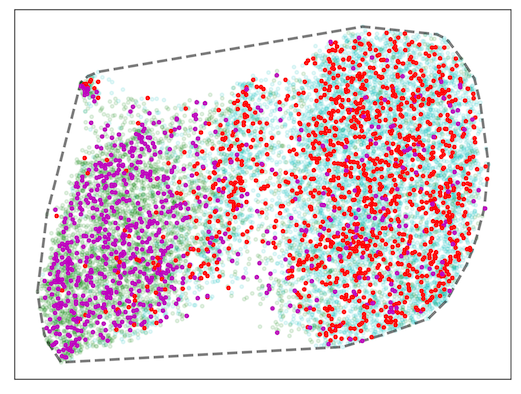} 
    \includegraphics[width=0.3\textwidth]{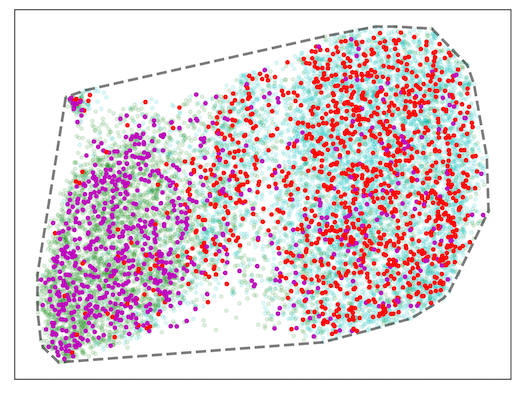} \\
    (a) \hspace{0.28\textwidth} (b) \hspace{0.28\textwidth} (c) \\}
    {\centering
    \includegraphics[width=0.3\textwidth]{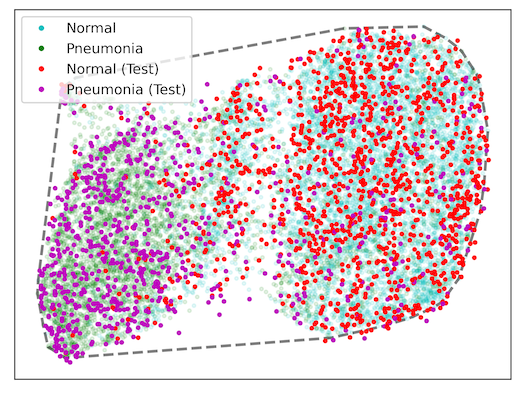} 
    \includegraphics[width=0.3\textwidth]{	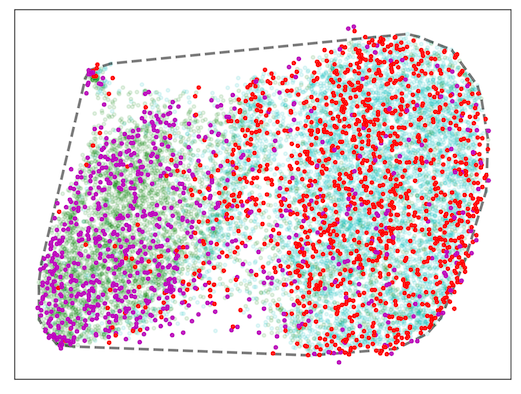} 
    \includegraphics[width=0.3\textwidth]{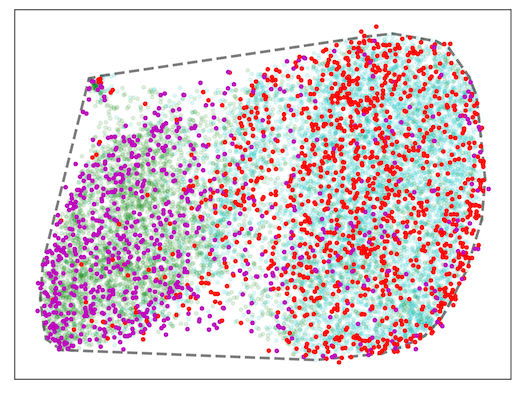} \\
    (d) \hspace{0.28\textwidth} (e) \hspace{0.28\textwidth} (f) \\}
    
    \caption{x-ray features embedded using UMAP. Transparent colors in the background represent the training points, and the solid dots represent the test points. A gray dashed line shows the convex hull of the embedding. Top row: training and test data embedded together using (a) $k=15$ ($\zeta=82$), (b) $k=30$ ($\zeta=76$), and (c) $k=50$ ($\zeta=62$). Bottom row: embedded training followed by out-of-sample test points using (d) $k=15$ ($\zeta=202$), (e) $k=30$ ($\zeta=245$) and (f) $k=50$ ($\zeta=243$) following rule-based algorithm in Section~\ref{sec:embed_out_of_sample}. (a-c) establishes a baseline of embeddings for increasing $k$ (as training and test points are embedded together), where the test embeddings are mostly inside the convex hull. (d-f) show that when test points are embedded afterward, they are placed around the periphery of the mapping and crossing the convex hull due to the `repulsion effect'; as $k$ increases, the repulsion effect increases, causing the number of test points in the periphery to increase.}
    \label{fig:UMAP_problem}
\end{figure*}

Next, we visually analyze some clusters (Fig.~\ref{fig:mnist_repulsion_effect}).
Figure~\ref{fig:mnist_repulsion_effect}(c) is a UMAP for the nearest neighbor parameter $k=15$, whereas the rest are for $k=30$. 
A grey dashed line shows the convex hull of the cluster, and the black arrows point to regions of interest. 
Note that the clusters obtained from the UMAP algorithm are not convex. However, in this example, the convex hull effectively demonstrates the accumulation at the periphery due to the repulsion effect.
For reference, when the UMAP embedding utilizes both the training and test data (Fig.~\ref{fig:mnist_repulsion_effect}(a)), the test points (black dots) are embedded throughout the cluster of training points (red dots), with $\zeta=49$. 
When the test points act as new data after the initial training, many test points accumulate at the boundary of the cluster (Figs.~\ref{fig:mnist_repulsion_effect}(b,c) for $k=30$ ($\zeta=104$) and $15$ ($105$), respectively, showing that the repulsion effect does not necessarily depend on the value of $k$. 
The accumulation is similar to baseline UMAP~{\tiny{(Train+test)}} ($\zeta=49$) for P-UMAP-CEMSE (Fig.~\ref{fig:mnist_repulsion_effect}(e), $\zeta=46$) and P-UMAP-CE (Fig.~\ref{fig:mnist_repulsion_effect}(f), $\zeta=51$).

\begin{figure*} [t]
    {\centering
    \includegraphics[width=0.3\textwidth]{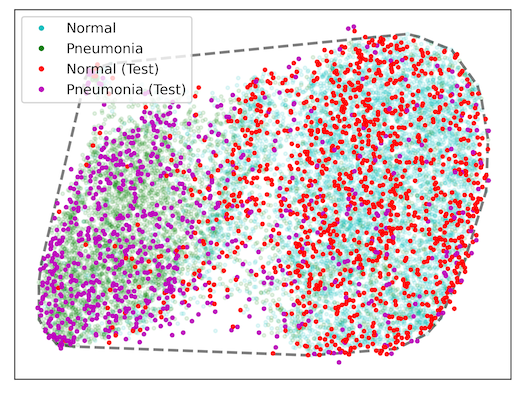} 
    \includegraphics[width=0.3\textwidth]{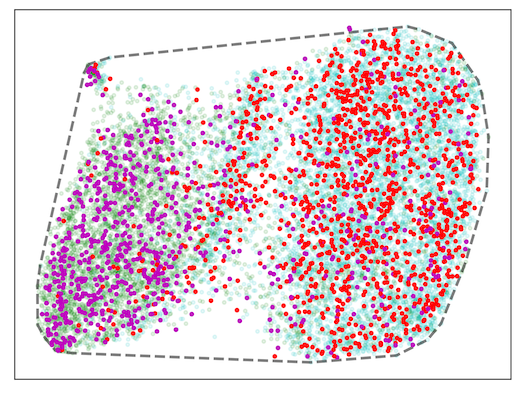} 
    \includegraphics[width=0.3\textwidth]{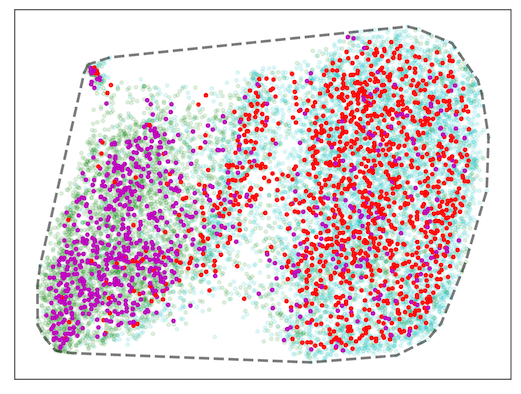} \\
    (a) \hspace{0.28\textwidth} (b) \hspace{0.28\textwidth} (c) \\}
    {\centering
    \includegraphics[width=0.3\textwidth]{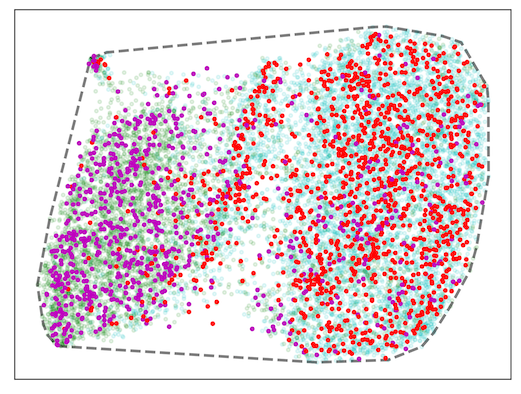} 
    \includegraphics[width=0.3\textwidth]{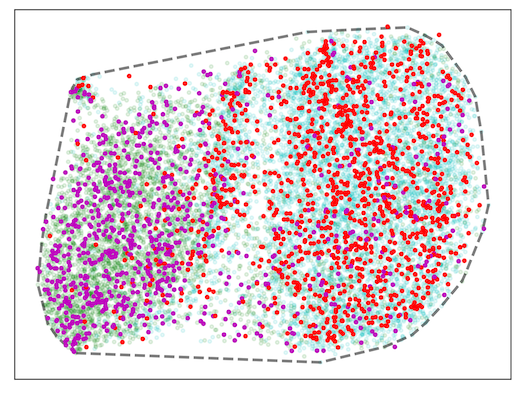} 
    \includegraphics[width=0.3\textwidth]{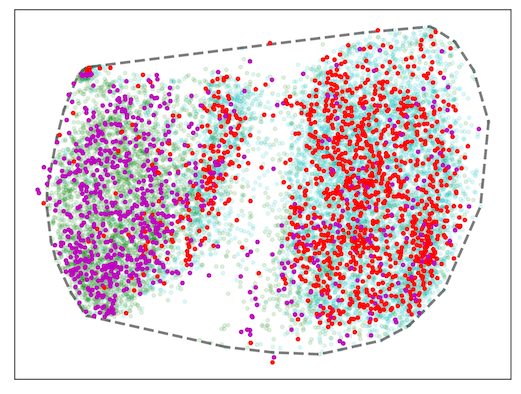} \\
    (d) \hspace{0.28\textwidth} (e) \hspace{0.28\textwidth} (f) \\}
    
    \caption{Embeddings for x-ray data for different algorithms for $k=30$.  Transparent dots in the background represent the training points, and the solid dots represent the test points. The gray dotted line shows the convex hull of the training points. Top row: UMAP for test points with $n_s$ set to (a) $5$ ($\zeta=245$), (b) $3$ ($\zeta=79$), and (c) $1$ ($\zeta=30$). Bottom row: parameterized embeddings using (d) P-UMAP-MSE ($\zeta=63$), (e) P-UMAP-CEMSE ($\zeta=70$), and (f) P-UMAP-CE ($\zeta=78$). The accumulation of the points at the periphery is absent for the modified methods as seen in (b-f) and quantified in Table~\ref{tab:pneumonia_results}.}
    \label{fig:covid_19_umaps}
\end{figure*}

Finally, we look at the time required to embed each out-of-sample test point. 
We computed the running time using $100$ test samples on a CPU. 
(AMD Ryzen Threadripper 2950X 16-Core Processor with 128 GB of RAM).
For both sequential and parallel embeddings (Table~\ref{tab:run_time_mnist}), the neural network is faster by several orders of magnitude than the conventional, rule-based out-of-sample embedding in UMAP.

\subsubsection{Chest x-ray Images}

From the RSNA pneumonia detection dataset~\cite{rsnadataset}, we collected a total of 13,389 chest x-rays:  
the training set consists of 6775 normal x-rays and 4,614 pneumonia x-rays, while  
the test set consists of 1,191 normal x-rays and 809 pneumonia x-rays. 
Because x-ray images have large variations, e.g., x-ray power and orientation, and the subject's condition during image capture, this dataset does not have a well-defined manifold like the images in MNIST. Thus, we used a pre-processing step to obtain one by feature extraction. 
More specifically, we used a DenseNet-121~\cite{huang2017densely} (a deep neural network trained on Imagenet~\cite{russakovsky2015imagenet}) to obtain a set of $1024$ characteristic features. 
The resulting UMAP embedding formed weakly separable clusters of normal vs. pneumonia patients (Fig.~\ref{fig:UMAP_problem}). A gray dotted line shows the convex hull of the training points.

This dataset shows a more obvious instance of the repulsion effect than MNIST. 
The baseline embeddings are established by embedding both the training and test data together (Fig.~\ref{fig:UMAP_problem}(a), (b), and (c) for $k=30$, $50$, and $70$, respectively).
Here, the accumulation decreases as $k$ increases. 
When the out-of-sample embedding is considered, the accumulation is more than three times higher for these cases than the corresponding baseline (Fig.~\ref{fig:UMAP_problem} (d), (e), and (f) for $k=15$, $k=30$, and $k=50$, respectively). Additionally, the accumulation increases as $k$ increases (from 15 to 30).

\begin{table*}[t]
 \caption{Results on the clinical dataset with `shortness of breath' as the chief complaint.} \label{tab:yale_data_table}
 \vspace{0.05in}
  \centering
    \resizebox{\textwidth}{!}{
  \begin{tabular}{l|c|c|c|c|c|c|c|c}
    \toprule
            & \multicolumn{3}{|c|}{Trustworthiness (Train)} & \multicolumn{3}{|c|}{Trustworthiness (Train+Test)} & \multicolumn{2}{|c}{k-NN classifier} \\
             \cmidrule(r){2-9}
             & $T_{5}$ & $T_{30}$ & $T_{100}$ & $T_{5}$ & $T_{30}$ & $T_{100}$ & $1$-NN Error & $5$-NN Error  \\
    \midrule
     UMAP \tiny{($n_s=5$)} & $0.8229$ & $0.7948$ & $0.7819$ & $0.8159$ & $0.7942$ & $0.7817$ & $35.40\%$ & $31.45\%$ \\
     UMAP \tiny{($n_s=3$)} & -- & -- & -- & $0.8179$ & $0.7958$ & $0.7828$ & $35.36\%$ & $30.60\%$\\
     UMAP \tiny{($n_s=1$)} & -- & -- & -- & $0.8166$ & $0.7944$ & $0.7823$ & $35.96\%$ & $30.79\%$\\
     UMAP \tiny{(Train+Test)} & $0.8283$ & $0.7975$ & $0.7829$ & $0.8303$ & $0.8004$ & $0.7853$ & $35.21\%$ & $30.30\%$\\
     P-UMAP-MSE    & $0.8119$ & $0.7858$ & $0.7741$ & $0.7997$ & $0.7803$ & $0.7693$ & $35.75\%$ & $30.95\%$ \\
     P-UMAP-CEMSE  & $0.8121$ & $0.7998$ & $0.7887$ & $0.8070$ & $0.7982$ & $0.7870$ & $34.90\%$ & $30.16\%$ \\
     P-UMAP-CE     & $\mathbf{0.8431}$ & $\mathbf{0.8119}$ & $\mathbf{0.7994}$ & $\mathbf{0.8342}$ & $\mathbf{0.8098}$ & $\mathbf{0.7982}$ & $\mathbf{34.70}\%$ & $\mathbf{29.75}\%$ \\

    \bottomrule
  \end{tabular}
  }
\end{table*}

\begin{figure*}[t]
    \centering
    \includegraphics[width=0.24\textwidth]{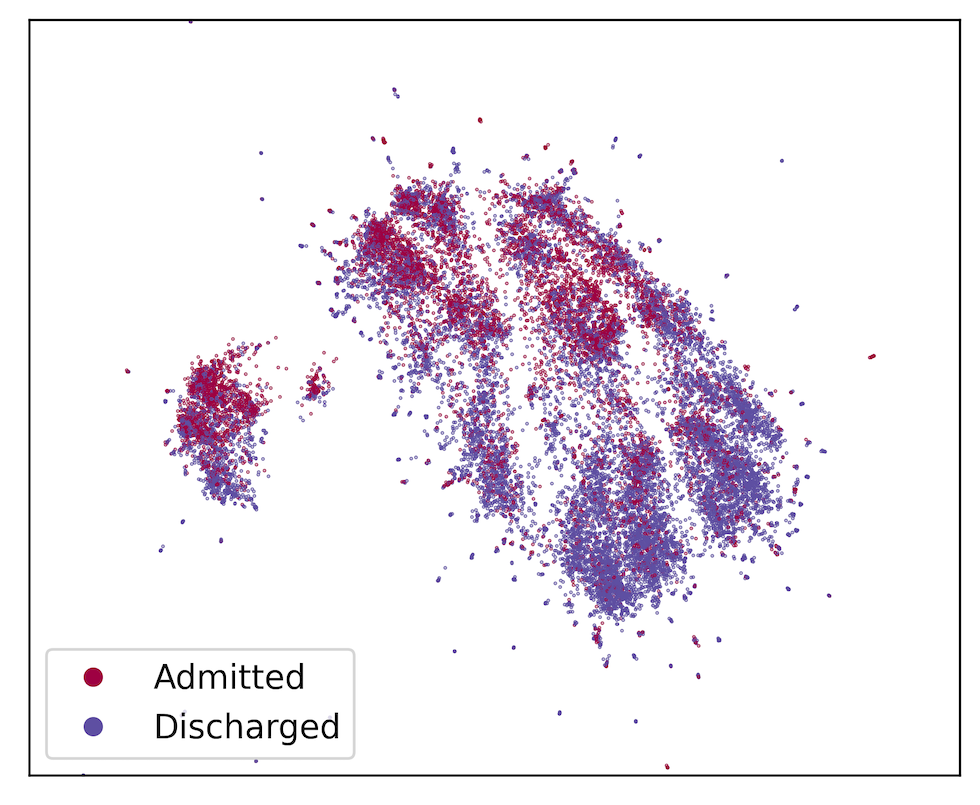} 
    \includegraphics[width=0.24\textwidth]{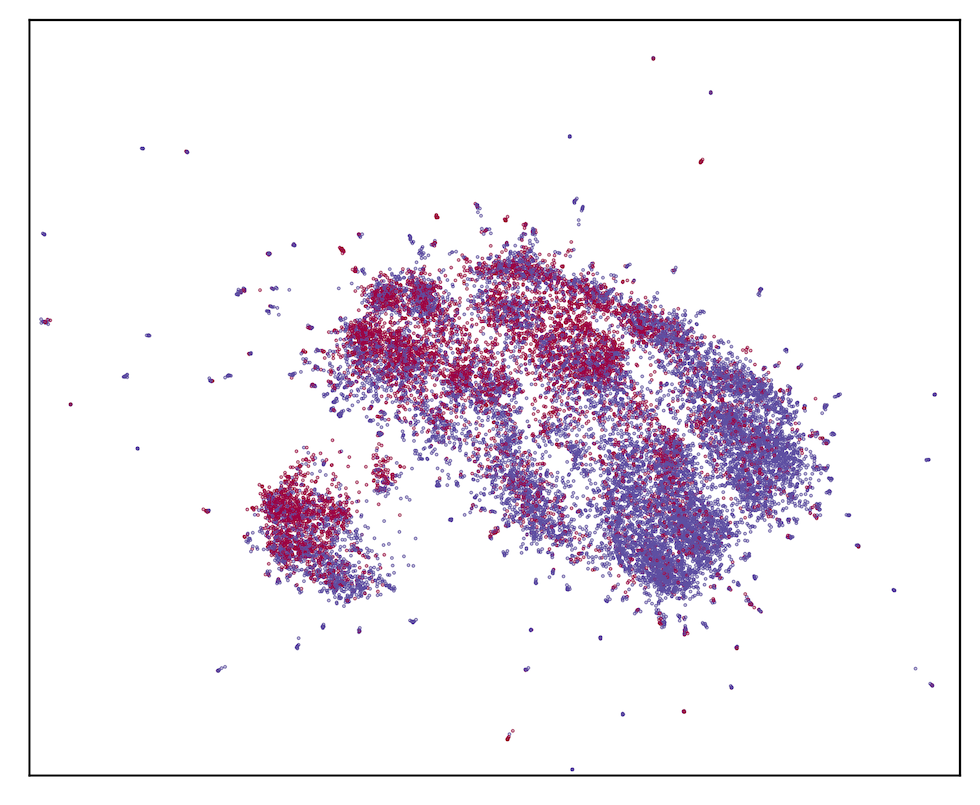}  
    \includegraphics[width=0.24\textwidth]{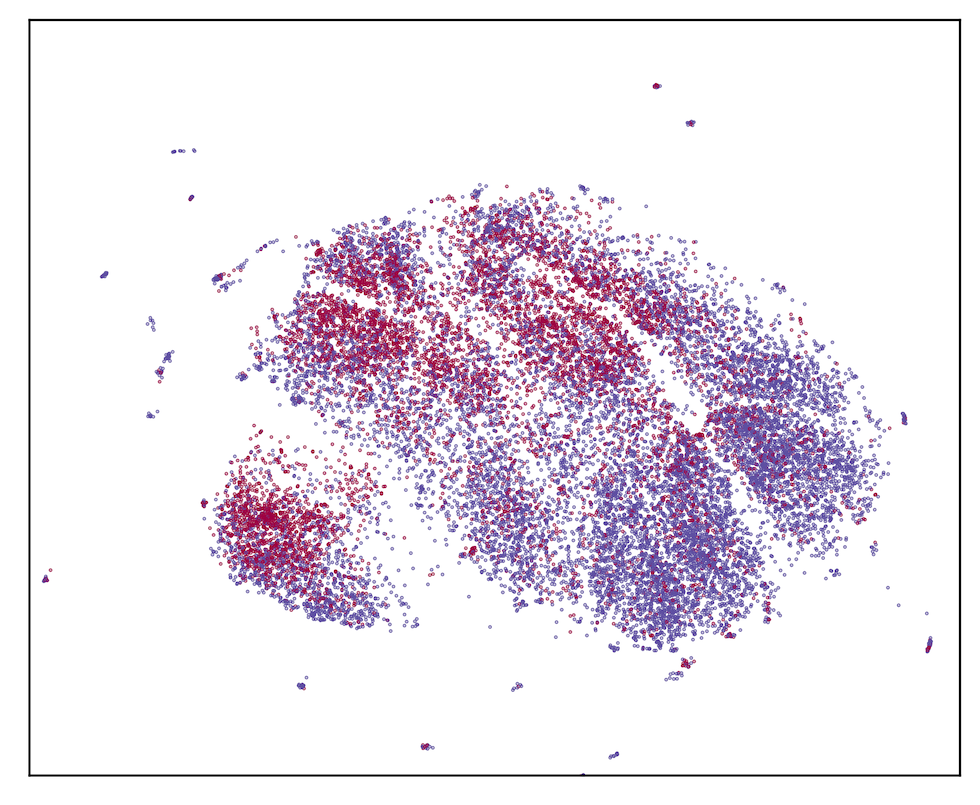} 
    \includegraphics[width=0.24\textwidth]{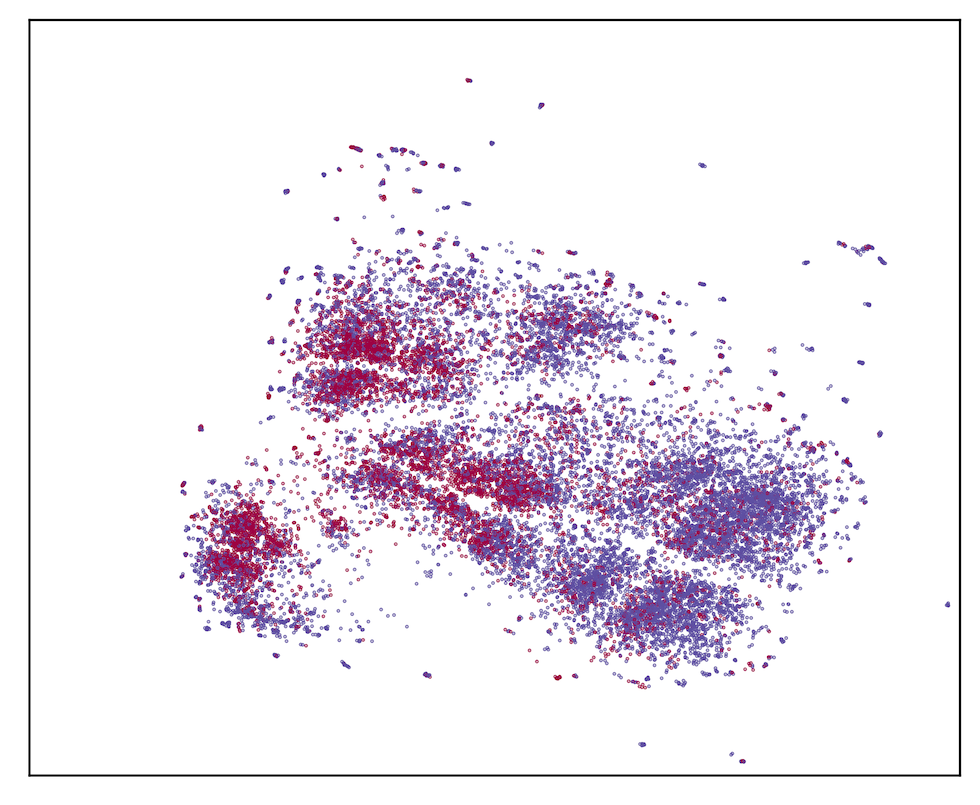} \\
     (a) \hspace{0.2\textwidth} (b) \hspace{0.2\textwidth} (c) \hspace{0.2\textwidth} (d) \\
    
    \caption{Embedding of patient features with `shortness of breath' as the chief complaint. (a) UMAP, (b) P-UMAP-MSE, (c) P-UMAP-CEMSE, (d) P-UMAP-CE.}
    \label{fig:shortness_of_breath}
\end{figure*}

\begin{figure*}[t]
    \centering
    \includegraphics[width=0.24\textwidth]{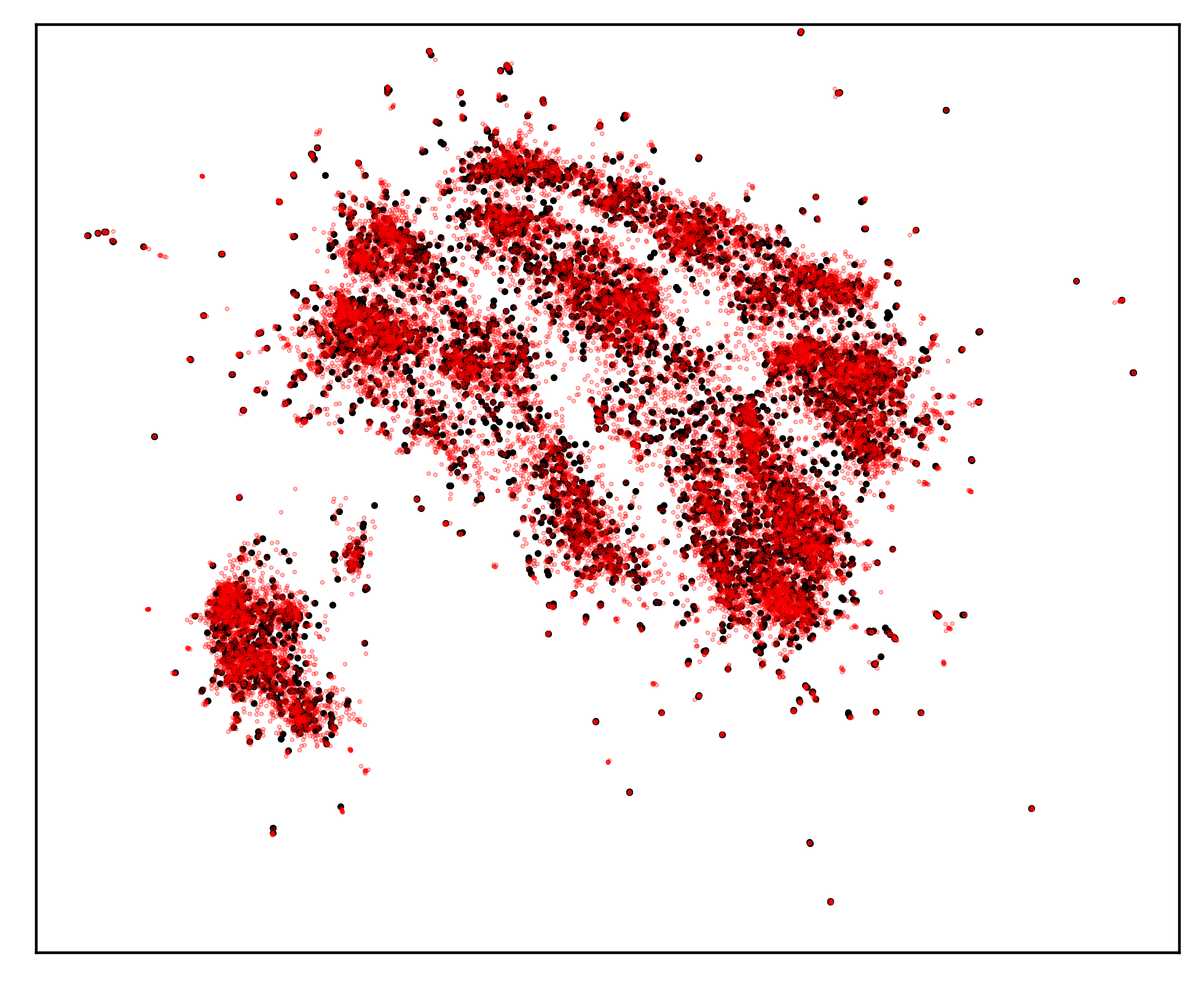}
    \includegraphics[width=0.24\textwidth]{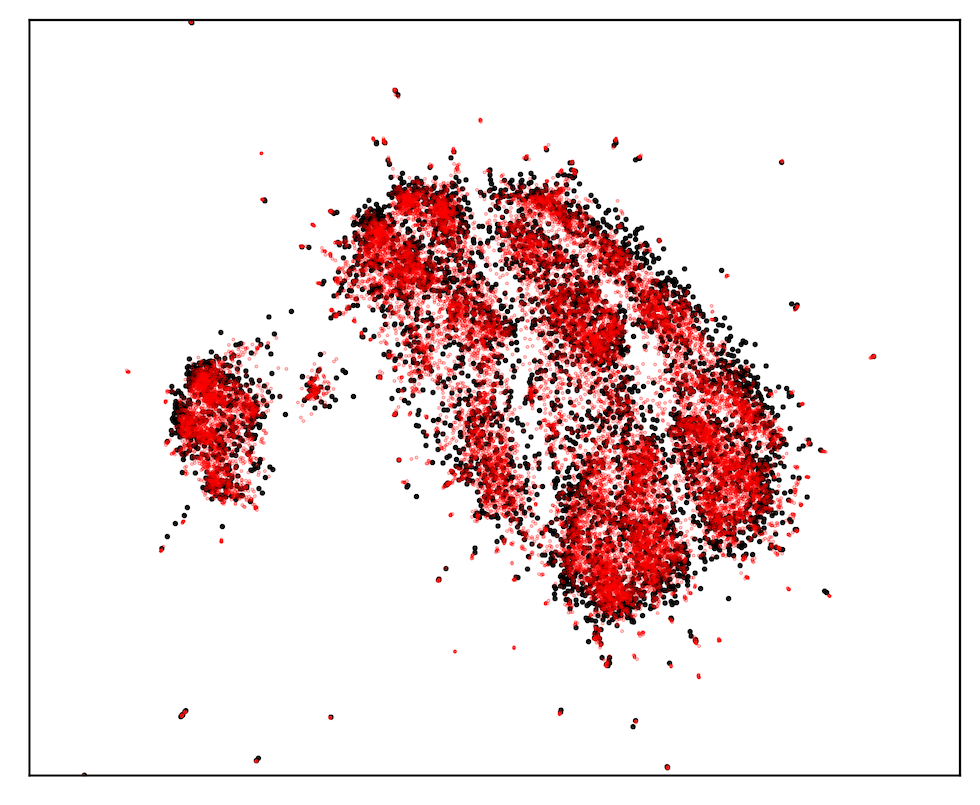}
    \includegraphics[width=0.24\textwidth]{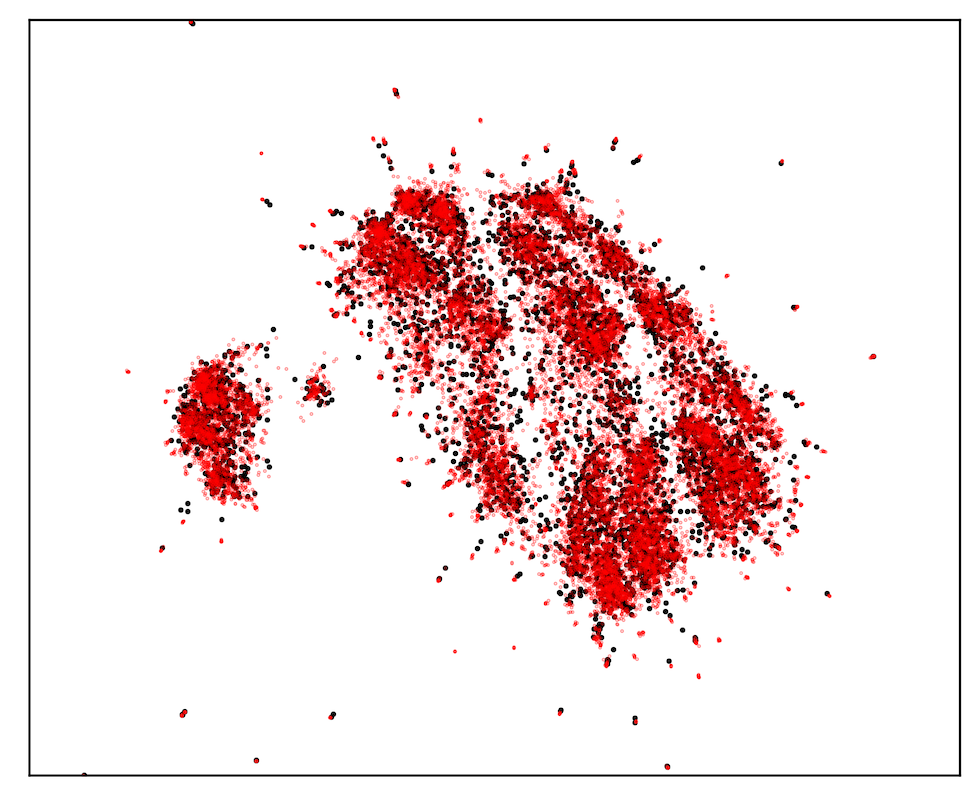}
    \includegraphics[width=0.24\textwidth]{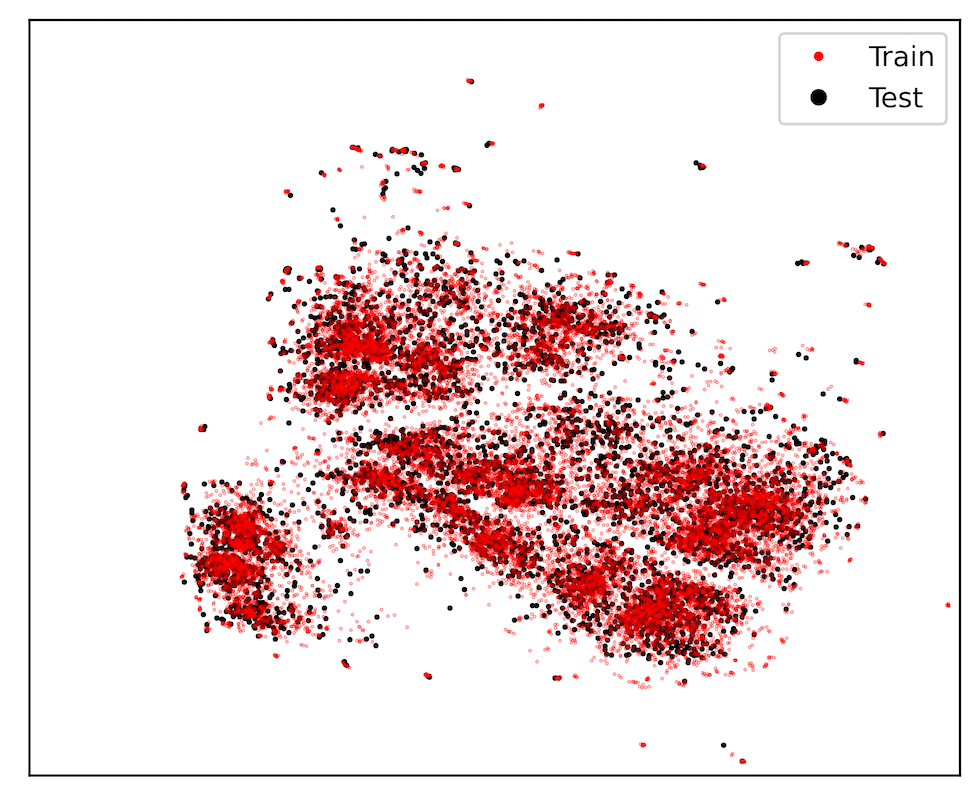} \\
     (a) \hspace{0.2\textwidth} (b) \hspace{0.2\textwidth} (c) \hspace{0.2\textwidth} (d) \\
        
    \caption{Repulsion effect in the clinical dataset for patients with shortness of breath. (a) UMAP when training and test data are embedded together. (b-d) Embedded using training data followed by test data as out-of-sample points using (b) original UMAP embedding with $n_s=5$, (c) UMAP embedding with $n_s=3$,  and (d) P-UMAP-CE. The repulsion effect leads to the accumulation of points in (b) through the presence of a noticeable dark shadow at the peripheries, which is absent in both (c) and (d).}
    \label{fig:repulsion_effect_yale}
\end{figure*}

Figure~\ref{fig:covid_19_umaps} shows different embeddings for $k=30$. 
We used a negative sampling rate of $n_s=5$ for all the training cases, but for testing purposes, we also considered $n_s=3$ and $n_s=1$  (Figs.~\ref{fig:covid_19_umaps}(a-c)). 
The accumulation of points at the border remains the same for $n_s=3$ ($\zeta=79$; baseline: 78 for UMAP~{\tiny{(Train+Test)}}), but a distinct avoidance of the border is apparent when $n_s=1$ ($\zeta=30$). This result again supports the hypothesis that the repulsion effect is due to the collective effect of multiple negative edges. 
Similarly, the parameterized versions of UMAP do not require negative sampling after training and thus do not show signs of the repulsion effect either (Figs.~\ref{fig:covid_19_umaps}(d-f)). 

This overall trend is also reflected in the trustworthiness and $k$-NN classifier error (Table~\ref{tab:pneumonia_results}).
P-UMAP-CE gives the highest scores, with the performance of P-UMAP-CEMSE a close second. 
For the non-parametric UMAP algorithms, the trustworthiness on the complete dataset (Train+Test) is the highest when $n_s = 3$, a lower value than that used in the original construction. 
However, this value cannot be too low, as with $n_s=1$, the trustworthiness decreases.

\subsubsection{Clinical Data}

Here, we examine the dataset compiled by Hong et al.~\cite{hong2018predicting} aimed to automate hospital admissions in emergency departments.  
The data originates from three departments from March 2013 to July 2017 and consists of patient history, demographics, chief complaints, and vital signs. 
It includes patient triage information of 190,000 patients and 560,486 hospital visits. 
The data is primarily labeled by the feature `disposition', indicating whether a patient was admitted into the hospital or discharged. 
In addition, 971 triage and demographic features include categorical and numerical data. 
For analysis, we employ the method of patient phenotyping used by Hurley et al.~\cite{hurley2019visualization} and consider 'Shortness of Breath' as the chief complaint (analysis of 'Abdominal Pain' is provided in Appendix~\ref{suppsec:more_clinical_data}).
We discarded the `disposition' and `emergency severity index' features, as they are decision variables, while one-hot encoded the categorical data and mean-imputed the missing values. 
After that, the data was divided into train and test sets (with an 80\% and 20\% split, respectively). 
We obtained the embeddings by setting the k-NN parameter $k$ to $30$. 
For the UMAP embedding, we set the minimum distance parameters $m_d$ to $0$, $0.0001$, $0.001$, $0.01$, $0.1$, $0.5$, $0.8$, $1.0$, and reported the one that produced the highest trustworthiness (for additional details, see Fig.~\ref{suppfig:trustworthienss_yale_shortness} in Appendix~\ref{suppsec:more_clinical_data}). 
For the parameterized embeddings, we set $m_d$ to $0.1$.

There were $24,652$ hospital visits pertaining to shortness of breath, with $15,791$ of those visits resulting in admissions. 
After converting labels to one-hot encoded data and dropping features with missing entries, we obtained $1003$-dimensional vectors. 
The number of visits in the training (test) split is $19,721$ ($4,931$). 
Figure~\ref{fig:shortness_of_breath} shows the two-dimensional embeddings obtained from the bare and parameterized UMAPs. 
All the embeddings have the same general structure, showing the separation of admitted patients from the discharged. 
The trustworthiness values (Table~\ref{tab:yale_data_table}) show that P-UMAP-CE provides the highest trustworthiness values for all nearest neighbors considered, with P-UMAP-CEMSE the second highest. 
Interestingly, for training embedding at a very local level (5 nearest neighbors), the original UMAP has a higher trustworthiness score than the versions containing MSE (agreeing with the other datasets).
The second-highest trustworthiness is for $n_s=3$ (in a non-parametric setting). 
The 2-NN and 5-NN classifiers are comparable for all the methods, while P-UMAP-CE and P-UMAP-CEMSE are marginally better.

Figure~\ref{fig:repulsion_effect_yale} shows the training and test points in the embedding. As before, the test points distribute uniformly within the clusters for UMAP~{\tiny{(Train+Test)}} (Fig.~\ref{fig:repulsion_effect_yale}(a)) but accumulate at the periphery for UMAP~{\tiny{($n_s=5$)}} (Fig.~\ref{fig:repulsion_effect_yale}(b)). This accumulation at the boundary disappears when UMAP employs a lower value of $n_s=3$ (Fig.~\ref{fig:repulsion_effect_yale}(c)) and in the parametric implementation P-UMAP-CE (Fig.~\ref{fig:repulsion_effect_yale}(d)).

\subsection{Analyzing Attractive and Repulsive Forces}\label{sec:afr_rfr_results}

In this section, we analyze the attractive and repulsive forces of the test points around the periphery of clusters and assess the quality based on the force ratios (AFR and RFR) described in Section~\ref{sec:force_ratios}. 
Both AFR and RFR require two sets of points from the dataset. 
To this end, from each dataset, we defined two regions near each other in the UMAP~{\tiny{($n_s=5$)}} embeddings: one near/outside the periphery of the clusters and the other inside. 
The set $S_1$ originates from the test points of the former region, while $S_2$ originates from the latter. 
We ensured that $|S_1|=|S_2|$  (for explicit locations of $S_1$ and $S_2$ for each dataset, see Fig.~\ref{suppfig:choiceofpoints} in the Appendix.)
Note that the absolute value of the forces is less important, as by scaling the embedding, one can change them without changing the embedding's characteristics in terms of trustworthiness or $k$-NN classifier. 

\clearpage
\begin{figure*}[t]
    {\centering
    MNIST data \\}
    \vspace{0.1in}
    {\raggedright
    \hspace{0.06\linewidth} \tiny{UMAP$_{(n_s=5)}$} \hspace{0.045\linewidth} \tiny{UMAP$_{(n_s=3)}$} \hspace{0.045\linewidth} \tiny{UMAP$_{(n_s=1)}$} \hspace{0.04\linewidth} \tiny{UMAP$_{(\text{Train+Test})}$} \hspace{0.04\linewidth} \tiny{P-UMAP-MSE} \hspace{0.04\linewidth} \tiny{P-UMAP-CEMSE} \hspace{0.045\linewidth} \tiny{P-UMAP-CE}  \\}
    {\centering
    \includegraphics[width=\linewidth]{mnist_boxplot.png} \\
    }
    
    {\centering
    Chest x-ray data \\}
    \vspace{0.1in}
    {\raggedright
    \hspace{0.06\linewidth} \tiny{UMAP$_{(n_s=5)}$} \hspace{0.045\linewidth} \tiny{UMAP$_{(n_s=3)}$} \hspace{0.045\linewidth} \tiny{UMAP$_{(n_s=1)}$} \hspace{0.04\linewidth} \tiny{UMAP$_{(\text{Train+Test})}$} \hspace{0.04\linewidth} \tiny{P-UMAP-MSE} \hspace{0.04\linewidth} \tiny{P-UMAP-CEMSE} \hspace{0.045\linewidth} \tiny{P-UMAP-CE}  \\}
    {\centering
    \includegraphics[width=\linewidth]{pneu_boxplot.png} \\
    }

    {\centering
    Clinical dataset \\}
    \vspace{0.1in}
    {\raggedright
    \hspace{0.06\linewidth} \tiny{UMAP$_{(n_s=5)}$} \hspace{0.045\linewidth} \tiny{UMAP$_{(n_s=3)}$} \hspace{0.045\linewidth} \tiny{UMAP$_{(n_s=1)}$} \hspace{0.04\linewidth} \tiny{UMAP$_{(\text{Train+Test})}$} \hspace{0.04\linewidth} \tiny{P-UMAP-MSE} \hspace{0.04\linewidth} \tiny{P-UMAP-CEMSE} \hspace{0.045\linewidth} \tiny{P-UMAP-CE}  \\}
    {\centering
    \includegraphics[width=\linewidth]{short_boxplot.png} \\
    }
    
    \caption{Distribution of attractive forces (AF) and repulsive forces (RF) for MNIST, chest x-ray, and clinical datasets. For each dataset, the top row shows the distribution of attractive forces, and the bottom row shows the distribution of repulsive forces for sets $S_1$ (outside the periphery, red) and $S_2$ (inside the periphery, blue), respectively. The boxplots (whiskers) are within the (1.5) interquartile range. The yellow bar and the black triangle indicate the median and mean, respectively.}
    \label{fig:afr_rfr_figures}
\end{figure*}

\begin{table*}[t]
    \centering
    \caption{Force ratios, $\mathcal{F}_a=|1-\text{AFR}|$ and $\mathcal{F}_r=|1-\text{RFR}|$, for different datasets.}\label{tab:afr_rfr}
    \vspace{0.05in}
    \resizebox{\textwidth}{!}{
    \begin{tabular}{l|lr|lr||lr|lr||lr|lr}
        \toprule
     & \multicolumn{4}{|c||}{MNIST dataset} & \multicolumn{4}{|c||}{Chest x-ray dataset} & \multicolumn{4}{|c}{Clinical dataset} \\
        \midrule
         & $\mathcal{F}_a$ & (\%I) & $\mathcal{F}_r$ & (\%I) & $\mathcal{F}_a$ & (\%I) & $\mathcal{F}_r$ & (\%I) & $\mathcal{F}_a$ & (\%I) & $\mathcal{F}_r$ & (\%I) \\
         \midrule
        UMAP \tiny{($n_s=5$)}     & 0.06 & (0.0\%)  & 1.40 & (0.0\%)    & 0.07 & (0.0\%)    &  0.39 & (0.0\%)     & 0.69 & (0.0\%)    & 0.49 & (0.0\%) \\
        UMAP \tiny{($n_s=3$)}     & 0.04 & (33.33\%)  & 0.50 & (64.29\%)  & 0.08 & (-14.29\%)  &  0.78 & (-100\%)  & 0.40 & (42.03\%)  & 0.47 & (4.08\%) \\
        UMAP \tiny{($n_s=1$)}     & 0.11 & (-83.33\%)  & 0.13 & (90.71\%)   & 0.08 & (-14.29\%) &  0.15 & (61.53\%)    & 0.29 & (57.97\%)  & 0.16 & (67.35\%) \\
        UMAP \tiny{(Train+Test)}  & 0.13 & (-116.67\%)  & 0.41 & (70.71\%) & 0.01 & (85.71\%)  &   0.08 & (79.49\%)    & 0.39 & (43.47\%)  & 0.12 & (75.51\%) \\
        P-UMAP-MSE               & 0.17 & (-183.33\%)  & 0.27 & (80.71\%) & 0.11 & (-57.14\%)  &  0.13 & (66.67\%)    & 0.13 & (81.16\%)  & 0.06 & (87.76\%) \\
        P-UMAP-CEMSE              & 0.32 & (-433.33\%)  & 0.24 & (82.86\%) & 0.07 & (0.0\%)  &    0.15 & (61.53\%)    & 0.26 & (62.31\%)  & 0.16 & (67.35\%) \\
        P-UMAP-CE                 & 0.24  &(-300\%)  & 0.13 & (90.71\%) & 0.05 & (28.57\%)      & 0.19 & (51.28\%)  & 0.24 & (66.22\%)  & 0.02 & (95.92\%) \\
        \bottomrule
    \end{tabular}
    }
\end{table*}
\clearpage

Figure~\ref{fig:afr_rfr_figures} shows the distribution of attractive (top row) and repulsive forces (bottom row) for each of the datasets. 
Overall, we can observe that the attractive forces are dispersed equally in both sets for all the methods. The distributions of repulsive forces have a significant difference between $S_1$ and $S_2$ for UMAP~{\tiny{($n_s=5$)}}, particularly for the MNIST and Clinical datasets. 
The distribution of repulsive forces is much broader (the mean is higher, and the medians are further from the mean) for the points outside the periphery ($S_1$ in red) than for the points inside the periphery ($S_2$ in blue). 

Table~\ref{tab:afr_rfr} shows the AFR and RFR of these regions of each dataset.
For ease of understanding, we tabulated the values in terms of $\mathcal{F}_a=|1-\text{AFR}|$ and $\mathcal{F}_r=|1-\text{RFR}|$, so that the minimum achievable value of 0 indicates a `lower is better' metric. 
We also tabulate the percent change (\%I) of these values from UMAP~{\tiny{($n_s=5$)}}.
A positive \%I indicates improvement of force balance from that of UMAP~{\tiny{($n_s=5$)}}. 
We considered 30 nearest neighbors to compute $\mathcal{F}_a$ and $\mathcal{F}_r$.

Overall, $\mathcal{F}_r$ values have improved over UMAP~{\tiny{($n_s=5$)}} for all the embeddings (except x-ray data for UMAP~{\tiny{($n_s=3$)}}) indicating a balance of repulsive forces.
For chest x-ray data, P-UMAP-CE improves $\mathcal{F}_a$ by 28.57\% and $\mathcal{F}_r$ by 51.28\%. 
For clinical data the improvement is 66.22\% in $\mathcal{F}_a$ and 95.92\% in $\mathcal{F}_r$. 
For MNIST data, $\mathcal{F}_r$ improves by 90.71\%.
This analysis shows numerically that the repulsion effect is reduced in parameterized UMAPs, supporting the results in Section~\ref{sec:comparing_visualization}.

\section{Conclusion}\label{sec:conclusions}
The original UMAP algorithm is not an online algorithm, in the sense that it cannot accommodate new data points without re-running the embedding from scratch. When it tries to embed out-of-sample test points, they are placed on the periphery of existing clusters. 
We have demonstrated that this behavior arises from dominant repulsive forces during optimization, regardless of the number of nearest neighbors considered. 
Reducing this “repulsion effect” (e.g., by making the negative sampling parameter $n_s$ lower than that of training embedding) leads to better embeddings, but the best performance is obtained by parameterizing the mapping. 
We parameterized UMAP using cross-entropy (CE) loss and mean-squared error (MSE), individually and in combination, and demonstrated its behavior on MNIST digits, chest x-ray images, and clinical data. 
P-UMAP-CE consistently outperformed the other algorithms, in both visualization and trustworthiness, with the benefits of parameterization increasing as the data became more complex. 
Analyzing repulsive forces, we further showed that the points that are pushed outside the periphery have disproportionately higher repulsive forces than the points that are just inside. 
We characterized this phenomenon using the repulsive force ratio (RFR). 
Similarly, we also characterized attractive forces of such points using the attractive force ratio (AFR). 
We showed that parameterized UMAPs provide better ARF and RFR values for out-of-sample points compared to those of the original UMAP. 
The force decomposition used here is a general analysis tool for any dimensionality reduction algorithm, while the parameterized algorithms that result from it should find application in any field that must be updated continuously, e.g., those involving clinical and biomedical data.

\section*{Acknowledgment}

This work was supported by AFOSR grant FA9550-21-1-0317 and the Schmidt DataX Fund at Princeton University, made possible through a major gift from the Schmidt Futures Foundation. Mohammad Tariqul Islam is supported by MIT-Novo Nordisk Artificial Intelligence Fellowship.

We would like to thank Hong et el.~\cite{hong2018predicting} for making the clinical data and corresponding documentation for parsing the dataset publicly available, which accelerated setting up the experiments with clinical dataset.


\section*{Data Availability}
All the data used in the experiments is publicly available. MNIST is publicly available at \url{http://yann.lecun.com/exdb/mnist/}. X-Ray data can be publicly obtained from  \url{https://www.kaggle.com/c/rsna-pneumonia-detection-challenge}. 
Clinical data is publicly available at \url{https://github.com/yaleemmlc/admissionprediction}.


%
%

\bibliographystyle{unsrt}  
\bibliography{references} 

\begin{thebibliography}{10}

\bibitem{maaten2008visualizing}
Laurens van~der Maaten and Geoffrey Hinton.
\newblock Visualizing data using t-sne.
\newblock {\em Journal of machine learning research}, 9(Nov):2579--2605, 2008.

\bibitem{mcinnes2018umap}
Leland McInnes, John Healy, and James Melville.
\newblock {UMAP}: Uniform manifold approximation and projection for dimension
  reduction.
\newblock {\em arXiv preprint arXiv:1802.03426}, 2018.

\bibitem{hinton2002stochastic}
Geoffrey Hinton and Sam~T Roweis.
\newblock Stochastic neighbor embedding.
\newblock In {\em Advances in Neural Information Processing Systems},
  volume~15, pages 833--840, 2002.

\bibitem{macosko2015highly}
Evan~Z Macosko, Anindita Basu, Rahul Satija, James Nemesh, Karthik Shekhar,
  Melissa Goldman, Itay Tirosh, Allison~R Bialas, Nolan Kamitaki, Emily~M
  Martersteck, et~al.
\newblock Highly parallel genome-wide expression profiling of individual cells
  using nanoliter droplets.
\newblock {\em Cell}, 161(5):1202--1214, 2015.

\bibitem{kobak2019art}
Dmitry Kobak and Philipp Berens.
\newblock The art of using t-sne for single-cell transcriptomics.
\newblock {\em Nature communications}, 10(1):1--14, 2019.

\bibitem{cao2019single}
Junyue Cao, Malte Spielmann, Xiaojie Qiu, Xingfan Huang, Daniel~M Ibrahim,
  Andrew~J Hill, Fan Zhang, Stefan Mundlos, Lena Christiansen, Frank~J
  Steemers, et~al.
\newblock The single-cell transcriptional landscape of mammalian organogenesis.
\newblock {\em Nature}, 566(7745):496--502, 2019.

\bibitem{becht2019dimensionality}
Etienne Becht, Leland McInnes, John Healy, Charles-Antoine Dutertre,
  Immanuel~WH Kwok, Lai~Guan Ng, Florent Ginhoux, and Evan~W Newell.
\newblock Dimensionality reduction for visualizing single-cell data using umap.
\newblock {\em Nature biotechnology}, 37(1):38--44, 2019.

\bibitem{packer2019lineage}
Jonathan~S Packer, Qin Zhu, Chau Huynh, Priya Sivaramakrishnan, Elicia Preston,
  Hannah Dueck, Derek Stefanik, Kai Tan, Cole Trapnell, Junhyong Kim, et~al.
\newblock A lineage-resolved molecular atlas of c. elegans embryogenesis at
  single-cell resolution.
\newblock {\em Science}, 365(6459):eaax1971, 2019.

\bibitem{badamdorj2022contrastive}
Taivanbat Badamdorj, Mrigank Rochan, Yang Wang, and Li~Cheng.
\newblock Contrastive learning for unsupervised video highlight detection.
\newblock In {\em Proceedings of the IEEE/CVF Conference on Computer Vision and
  Pattern Recognition}, pages 14042--14052, 2022.

\bibitem{islam2024deciphering}
Md~Tauhidul Islam and Lei Xing.
\newblock Deciphering the feature representation of deep neural networks for
  high-performance ai.
\newblock {\em IEEE Transactions on Pattern Analysis and Machine Intelligence},
  2024.

\bibitem{hong2018predicting}
Woo~Suk Hong, Adrian~Daniel Haimovich, and R~Andrew Taylor.
\newblock Predicting hospital admission at emergency department triage using
  machine learning.
\newblock {\em PloS one}, 13(7):e0201016, 2018.

\bibitem{fleischer2020late}
Jason Fleischer and Mohammad~Tariqul Islam.
\newblock Late breaking abstract-identifying and phenotyping {COVID-19}
  patients using machine learning on chest x-rays.
\newblock {\em European Respiratory Journal}, 2020.

\bibitem{wang2022medclip}
Zifeng Wang, Zhenbang Wu, Dinesh Agarwal, and Jimeng Sun.
\newblock {MedCLIP}: Contrastive learning from unpaired medical images and
  text.
\newblock In {\em Proceedings of the 2022 Conference on Empirical Methods in
  Natural Language Processing}, pages 3876--3887, 2022.

\bibitem{islam2024outlier}
Mohammad~Tariqul Islam and Jason~W Fleischer.
\newblock Outlier detection in large radiological datasets using umap.
\newblock In {\em International Workshop on Topology-and Graph-Informed Imaging
  Informatics}, pages 111--121. Springer, 2024.

\bibitem{vazifeh2025manifold}
Amir~Reza Vazifeh and Jason~W Fleischer.
\newblock Manifold learning for personalized and label-free detection of
  cardiac arrhythmias.
\newblock {\em arXiv preprint arXiv:2506.16494}, 2025.

\bibitem{anjinappa2021coverage}
Chethan~K Anjinappa and Ismail G{\"u}ven{\c{c}}.
\newblock Coverage hole detection for mmwave networks: An unsupervised learning
  approach.
\newblock {\em IEEE Communications Letters}, 2021.

\bibitem{xu2022boundary}
Zhendong Xu, Baoqi Huang, Bing Jia, Wuyungerile Li, and Hui Lu.
\newblock A boundary aware wifi localization scheme based on umap and knn.
\newblock {\em IEEE Communications Letters}, 2022.

\bibitem{sammon1969nonlinear}
John~W Sammon.
\newblock A nonlinear mapping for data structure analysis.
\newblock {\em IEEE Transactions on computers}, 100(5):401--409, 1969.

\bibitem{tenenbaum2000global}
Joshua~B Tenenbaum, Vin De~Silva, and John~C Langford.
\newblock A global geometric framework for nonlinear dimensionality reduction.
\newblock {\em science}, 290(5500):2319--2323, 2000.

\bibitem{roweis2000nonlinear}
Sam~T Roweis and Lawrence~K Saul.
\newblock Nonlinear dimensionality reduction by locally linear embedding.
\newblock {\em science}, 290(5500):2323--2326, 2000.

\bibitem{belkin2002laplacian}
Mikhail Belkin and Partha Niyogi.
\newblock Laplacian eigenmaps and spectral techniques for embedding and
  clustering.
\newblock In {\em Advances in neural information processing systems}, pages
  585--591, 2002.

\bibitem{tang2016visualizing}
Jian Tang, Jingzhou Liu, Ming Zhang, and Qiaozhu Mei.
\newblock Visualizing large-scale and high-dimensional data.
\newblock In {\em Proceedings of the 25th international conference on world
  wide web}, pages 287--297, 2016.

\bibitem{van2014accelerating}
Laurens Van Der~Maaten.
\newblock Accelerating t-sne using tree-based algorithms.
\newblock {\em The Journal of Machine Learning Research}, 15(1):3221--3245,
  2014.

\bibitem{yang2013scalable}
Zhirong Yang, Jaakko Peltonen, and Samuel Kaski.
\newblock Scalable optimization of neighbor embedding for visualization.
\newblock In {\em International Conference on Machine Learning}, pages
  127--135, 2013.

\bibitem{barnes1986hierarchical}
Josh Barnes and Piet Hut.
\newblock A hierarchical o (n log n) force-calculation algorithm.
\newblock {\em nature}, 324(6096):446--449, 1986.

\bibitem{mikolov2013distributed}
Tomas Mikolov, Ilya Sutskever, Kai Chen, Greg~S Corrado, and Jeff Dean.
\newblock Distributed representations of words and phrases and their
  compositionality.
\newblock In {\em Advances in neural information processing systems}, pages
  3111--3119, 2013.

\bibitem{linderman2019fast}
George~C Linderman, Manas Rachh, Jeremy~G Hoskins, Stefan Steinerberger, and
  Yuval Kluger.
\newblock Fast interpolation-based t-sne for improved visualization of
  single-cell rna-seq data.
\newblock {\em Nature methods}, 16(3):243--245, 2019.

\bibitem{damrich2021umap}
Sebastian Damrich and Fred~A Hamprecht.
\newblock On {UMAP's} true loss function.
\newblock {\em Advances in Neural Information Processing Systems},
  34:5798--5809, 2021.

\bibitem{islam2025shape}
Mohammad~Tariqul Islam and Jason~W Fleischer.
\newblock The shape of attraction in {UMAP}: Exploring the embedding forces in
  dimensionality reduction.
\newblock {\em Transactions of Machine Learning Research}, 2025.

\bibitem{bohm2022attraction}
Jan~Niklas B{\"o}hm, Philipp Berens, and Dmitry Kobak.
\newblock Attraction-repulsion spectrum in neighbor embeddings.
\newblock {\em Journal of Machine Learning Research}, 23(95):1--32, 2022.

\bibitem{damrich2023contrastive}
Sebastian Damrich, Niklas B{\"o}hm, Fred~A Hamprecht, and Dmitry Kobak.
\newblock From $ t $-{SNE} to {UMAP} with contrastive learning.
\newblock In {\em The Eleventh International Conference on Learning
  Representations}, 2023.

\bibitem{ijcai2023p406}
Andrew Draganov, Jakob Jørgensen, Katrine Scheel, Davide Mottin, Ira Assent,
  Tyrus Berry, and Cigdem Aslay.
\newblock {ActUp}: Analyzing and consolidating {tSNE} and {UMAP}.
\newblock In {\em Proceedings of the Thirty-Second International Joint
  Conference on Artificial Intelligence, {IJCAI-23}}, pages 3651--3658, 8 2023.

\bibitem{abe2024nonlinear}
Motoshi Abe, Yuichiro Nomura, and Takio Kurita.
\newblock Nonlinear dimensionality reduction with q-gaussian distribution.
\newblock {\em Pattern Analysis and Applications}, 27(1):1--12, 2024.

\bibitem{ko2020progressive}
Hyung-Kwon Ko, Jaemin Jo, and Jinwook Seo.
\newblock Progressive uniform manifold approximation and projection.
\newblock In {\em EuroVis (Short Papers)}, pages 133--137, 2020.

\bibitem{senanayake2020self}
Damith~A Senanayake, Wei Wang, Shalin~H Naik, and Saman Halgamuge.
\newblock Self-organizing nebulous growths for robust and incremental data
  visualization.
\newblock {\em IEEE Transactions on Neural Networks and Learning Systems},
  32(10):4588--4602, 2020.

\bibitem{islam2022manifold}
Mohammad~Tariqul Islam and Jason~W Fleischer.
\newblock Manifold-aligned neighbor embedding.
\newblock In {\em ICLR 2022 Workshop on Geometrical and Topological
  Representation Learning}, 2022.

\bibitem{amid2019trimap}
Ehsan Amid and Manfred~K Warmuth.
\newblock {TriMap}: Large-scale dimensionality reduction using triplets.
\newblock {\em arXiv preprint arXiv:1910.00204}, 2019.

\bibitem{wang2021understanding}
Yingfan Wang, Haiyang Huang, Cynthia Rudin, and Yaron Shaposhnik.
\newblock Understanding how dimension reduction tools work: an empirical
  approach to deciphering {t-SNE}, {UMAP}, {TriMAP}, and {PaCMAP} for data
  visualization.
\newblock {\em The Journal of Machine Learning Research}, 22(1):9129--9201,
  2021.

\bibitem{van2009learning}
Laurens Van Der~Maaten.
\newblock Learning a parametric embedding by preserving local structure.
\newblock In {\em Artificial Intelligence and Statistics}, pages 384--391,
  2009.

\bibitem{bunte2012general}
Kerstin Bunte, Michael Biehl, and Barbara Hammer.
\newblock A general framework for dimensionality-reducing data visualization
  mapping.
\newblock {\em Neural Computation}, 24(3):771--804, 2012.

\bibitem{duque2020extendable}
Andr{\'e}s~F Duque, Sacha Morin, Guy Wolf, and Kevin Moon.
\newblock Extendable and invertible manifold learning with geometry regularized
  autoencoders.
\newblock In {\em 2020 IEEE International Conference on Big Data (Big Data)},
  pages 5027--5036. IEEE, 2020.

\bibitem{sainburg2020parametric}
Tim Sainburg, Leland McInnes, and Timothy~Q Gentner.
\newblock Parametric umap embeddings for representation and semisupervised
  learning.
\newblock {\em Neural Computation}, 33(11):2881--2907, 2021.

\bibitem{zhou2021deep}
Zixia Zhou, Xinrui Zu, Yuanyuan Wang, Boudewijn~PF Lelieveldt, and Qian Tao.
\newblock Deep recursive embedding for high-dimensional data.
\newblock {\em IEEE Transactions on Visualization and Computer Graphics},
  28(2):1237--1248, 2021.

\bibitem{huang2024navigating}
Haiyang Huang, Yingfan Wang, and Cynthia Rudin.
\newblock Navigating the effect of parametrization for dimensionality
  reduction.
\newblock {\em Advances in Neural Information Processing Systems},
  37:12977--13019, 2024.

\bibitem{berman2014mapping}
Gordon~J Berman, Daniel~M Choi, William Bialek, and Joshua~W Shaevitz.
\newblock Mapping the stereotyped behaviour of freely moving fruit flies.
\newblock {\em Journal of The Royal Society Interface}, 11(99):20140672, 2014.

\bibitem{polivcar2019embedding}
Pavlin~G Poli{\v{c}}ar, Martin Stra{\v{z}}ar, and Bla{\v{z}} Zupan.
\newblock Embedding to reference t-sne space addresses batch effects in
  single-cell classification.
\newblock In {\em International Conference on Discovery Science}, pages
  246--260. Springer, 2019.

\bibitem{mcinnes2018umap-software}
Leland McInnes, John Healy, Nathaniel Saul, and Lukas Grossberger.
\newblock {UMAP}: Uniform manifold approximation and projection.
\newblock {\em The Journal of Open Source Software}, 3(29):861, 2018.

\bibitem{kingma2014adam}
Diederik~P Kingma and Jimmy Ba.
\newblock Adam: A method for stochastic optimization.
\newblock In {\em International Conference on Learning Representations}, 2015.

\bibitem{paszke2019pytorch}
Adam Paszke, Sam Gross, Francisco Massa, Adam Lerer, James Bradbury, Gregory
  Chanan, Trevor Killeen, Zeming Lin, Natalia Gimelshein, Luca Antiga, et~al.
\newblock Pytorch: An imperative style, high-performance deep learning library.
\newblock {\em Advances in neural information processing systems}, 32, 2019.

\bibitem{harris2020array}
Charles~R Harris, K~Jarrod Millman, St{\'e}fan~J Van Der~Walt, Ralf Gommers,
  Pauli Virtanen, David Cournapeau, Eric Wieser, Julian Taylor, Sebastian Berg,
  Nathaniel~J Smith, et~al.
\newblock Array programming with numpy.
\newblock {\em Nature}, 585(7825):357--362, 2020.

\bibitem{lam2015numba}
Siu~Kwan Lam, Antoine Pitrou, and Stanley Seibert.
\newblock Numba: A llvm-based python jit compiler.
\newblock In {\em Proceedings of the Second Workshop on the LLVM Compiler
  Infrastructure in HPC}, pages 1--6, 2015.

\bibitem{scikit-learn}
F.~Pedregosa, G.~Varoquaux, A.~Gramfort, V.~Michel, B.~Thirion, O.~Grisel,
  M.~Blondel, P.~Prettenhofer, R.~Weiss, V.~Dubourg, J.~Vanderplas, A.~Passos,
  D.~Cournapeau, M.~Brucher, M.~Perrot, and E.~Duchesnay.
\newblock Scikit-learn: Machine learning in {P}ython.
\newblock {\em Journal of Machine Learning Research}, 12:2825--2830, 2011.

\bibitem{venna2001neighborhood}
Jarkko Venna and Samuel Kaski.
\newblock Neighborhood preservation in nonlinear projection methods: An
  experimental study.
\newblock In {\em International Conference on Artificial Neural Networks},
  pages 485--491. Springer, 2001.

\bibitem{cover1967nearest}
Thomas Cover and Peter Hart.
\newblock Nearest neighbor pattern classification.
\newblock {\em IEEE transactions on information theory}, 13(1):21--27, 1967.

\bibitem{lecun2010mnist}
Yann LeCun, Corinna Cortes, and CJ~Burges.
\newblock Mnist handwritten digit database.
\newblock {\em ATT Labs [Online]. Available: http://yann.lecun.com/exdb/mnist},
  2, 2010.

\bibitem{mcinnes2017hdbscan}
Leland McInnes, John Healy, and Steve Astels.
\newblock hdbscan: Hierarchical density based clustering.
\newblock {\em The Journal of Open Source Software}, 2(11):205, 2017.

\bibitem{rsnadataset}
RSNA.
\newblock {\em RSNA pneumonia detection challenge}, 2018.

\bibitem{huang2017densely}
Gao Huang, Zhuang Liu, Laurens Van Der~Maaten, and Kilian~Q Weinberger.
\newblock Densely connected convolutional networks.
\newblock In {\em Proceedings of the IEEE conference on computer vision and
  pattern recognition}, pages 4700--4708, 2017.

\bibitem{russakovsky2015imagenet}
Olga Russakovsky, Jia Deng, Hao Su, Jonathan Krause, Sanjeev Satheesh, Sean Ma,
  Zhiheng Huang, Andrej Karpathy, Aditya Khosla, Michael Bernstein, et~al.
\newblock Imagenet large scale visual recognition challenge.
\newblock {\em International journal of computer vision}, 115(3):211--252,
  2015.

\bibitem{hurley2019visualization}
Nathan~C Hurley, Adrian~D Haimovich, R~Andrew Taylor, and Bobak~J Mortazavi.
\newblock Visualization of emergency department clinical data for interpretable
  patient phenotyping.
\newblock In {\em 2019 4th IEEE/ACM Conference on Connected Health:
  Applications, Systems and Engineering Technologies}, 2019.

\end{thebibliography}


\newpage
\appendix
\onecolumn


\section*{Appendix}

In this appendix, first, we further discuss accumulation and the details of accumulated points from different dataset (\ref{suppsec:accumulation}). After that, we provide another perspective of the repulsion effect through the repulsion strength parameter (\ref{suppsec:scalerep}). Then, we describe additional results from the clinical data (\ref{suppsec:more_clinical_data}). Finally, we give details of the data used for force analysis (\ref{suppsec:supp_afr_rfr}).

\section{Additional discussion on accumulation} \label{suppsec:accumulation}
In the main text, we defined the accumulation $\zeta$ as
\begin{align}
    \zeta = | \{ \mathbf{v} | \mathbf{v}\in\mathbf{v}_\mathcal{C}, \mathbf{v}\notin P \} |,
\end{align}
where $\mathbf{v}_\mathcal{C}$ is the set of test points associated with cluster $\mathcal{C}$ and $P$ is a convex polytope. To count accumulated points in the periphery, we require $P$ to be inside the cluster and count the number of points that lie outside this polytope. We computed $P$ from the convex hull, $H$, of the cluster $\mathcal{C}$.

\begin{proposition}
    The polytope $P$, obtained by transforming the vertices $\{\mathbf{h_i}\}_{i=1}^{k}\in\mathbb{R}^{n}$ of convex polytope $H$ using the transformation $\mathbf{h_i}'=\lambda \mathbf{c} + (1-\lambda) \mathbf{h_i}$, where $\lambda\in\mathbb{R}$ and $\mathbf{c}\in\mathbb{R}^{n}$, is also convex.
\end{proposition}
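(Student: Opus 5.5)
The transformation $T(\mathbf{x}) = \lambda\mathbf{c} + (1-\lambda)\mathbf{x}$ is an affine map of $\mathbb{R}^n$: a homothety with centre $\mathbf{c}$ and ratio $1-\lambda$. The plan is to show that $P$, which we interpret as the convex hull of the transformed vertices $\{\mathbf{h}_i'\}_{i=1}^k$, coincides with the image $T(H)$. Convexity then follows from the fact that affine images of convex sets are convex.

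The first step is to write $H=\{\sum_i \alpha_i \mathbf{h}_i : \alpha_i\ge 0,\ \sum_i\alpha_i=1\}$, using that a convex polytope is the convex hull of its vertices. Applying $T$ to a generic point of $H$ gives
\[
T\Bigl(\sum_i \alpha_i \mathbf{h}_i\Bigr)=\lambda\mathbf{c}+(1-\lambda)\sum_i\alpha_i\mathbf{h}_i=\sum_i\alpha_i\bigl(\lambda\mathbf{c}+(1-\lambda)\mathbf{h}_i\bigr)=\sum_i\alpha_i\mathbf{h}_i'.
\]
The middle equality uses $\sum_i\alpha_i=1$. Hence $T(H)=\mathrm{conv}\{\mathbf{h}_i'\}$, so $P=T(H)$.

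Next, convexity follows directly. Take two points of $P$, say $T(\mathbf{x})$ and $T(\mathbf{y})$ with $\mathbf{x},\mathbf{y}\in H$, and let $t\in[0,1]$. Affinity of $T$ gives $tT(\mathbf{x})+(1-t)T(\mathbf{y})=T(t\mathbf{x}+(1-t)\mathbf{y})$. Since $H$ is convex, $t\mathbf{x}+(1-t)\mathbf{y}\in H$, so this point lies in $T(H)=P$. Alternatively, one can note that $P$ is by construction a convex hull and therefore convex.

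The main obstacle is not computational but definitional. The statement says $P$ is "obtained by transforming the vertices", so I must fix whether $P$ means the convex hull of the $\mathbf{h}_i'$ or the polytope with the same combinatorial (face) structure as $H$ on those vertices. The identity $P=T(H)$ settles this. For $\lambda\neq 1$, $T$ is an invertible affine map, so it carries faces to faces and the vertices $\mathbf{h}_i'$ are exactly the vertices of $T(H)$; both readings therefore agree. In the degenerate case $\lambda=1$, $P$ collapses to the single point $\mathbf{c}$, which is trivially convex. I would also remark that for $\lambda\in[0,1]$ and $\mathbf{c}\in H$, $P\subseteq H$ by convexity of $H$. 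This is the property actually used in the accumulation metric with $\lambda=0.05$, though it is not needed for convexity itself.
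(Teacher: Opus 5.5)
Your proof is correct, but it takes a different route from the paper's. The paper's proof is a single geometric remark. The map $\mathbf{x}\mapsto\lambda\mathbf{c}+(1-\lambda)\mathbf{x}$ is a homothety, hence a similarity, so each interior angle of $H$ reappears unchanged in $P$. Convexity is then inherited, implicitly via the criterion that a polygon in the two-dimensional embedding is convex iff all its interior angles are less than $\pi$. You instead use the identity $T(\sum_i\alpha_i\mathbf{h}_i)=\sum_i\alpha_i\mathbf{h}_i'$ to show that $P$ is, as a set, the affine image $T(H)$, and then apply the fact that affine images of convex sets are convex. Your route gains several things the paper's does not. It works in every dimension $n$ without a notion of interior angle or an angle criterion for convexity. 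It treats the degenerate case $\lambda=1$, where angles are undefined, and the negative-ratio case $\lambda>1$ uniformly. It settles what ``the polytope obtained by transforming the vertices'' means, since for $\lambda\neq 1$ the convex hull of the $\mathbf{h}_i'$ coincides with the polytope carrying $H$'s face structure. It also gives $P\subseteq H$ for $\lambda\in[0,1]$ and $\mathbf{c}\in H$, which is the property the accumulation metric actually relies on. The paper's argument is shorter and conveys that $P$ is a scaled copy of $H$ with the same shape, but as written it is only a sketch.
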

\begin{proof}
    The corresponding interior angles of the polytopes $H$ are preserved in $P$ under this transformation.
\end{proof}
In the main text, we used $\mathbf{c}=\frac{1}{k} \sum_{i=1}^{k}h_i$ and $\lambda=0.05$ to compute $\zeta$. The test points of MNIST and chest x-ray datasets for various methods, along with the convex hull $H$ and the polytope $P$, are shown in Figs.~\ref{suppfig:mnist_repulsion_effect}-\ref{suppfig:pneumonia_umaps}.


\begin{figure*} [h]
    {\centering
    \includegraphics[width=0.24\textwidth]{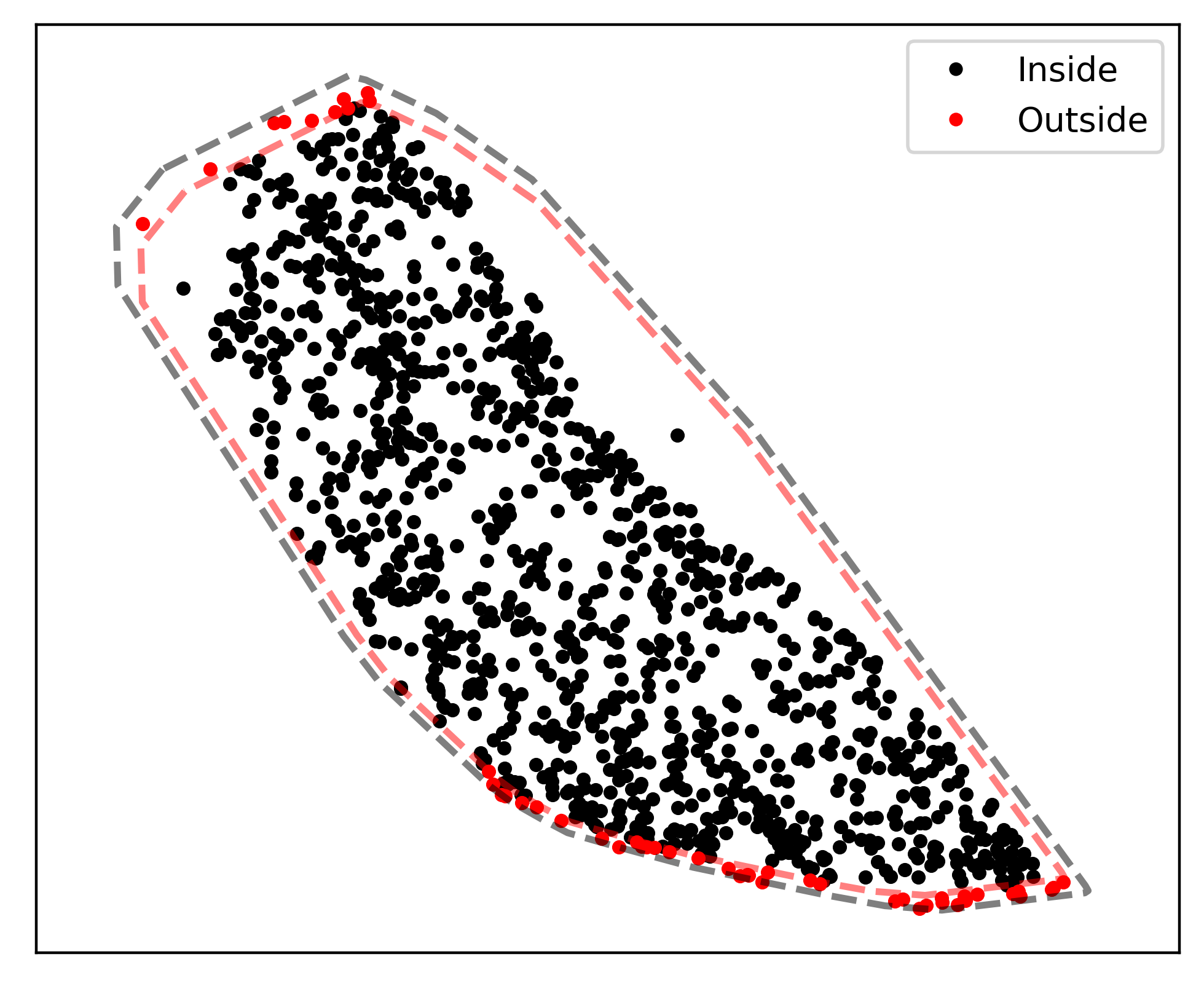}
    \includegraphics[width=0.24\textwidth]{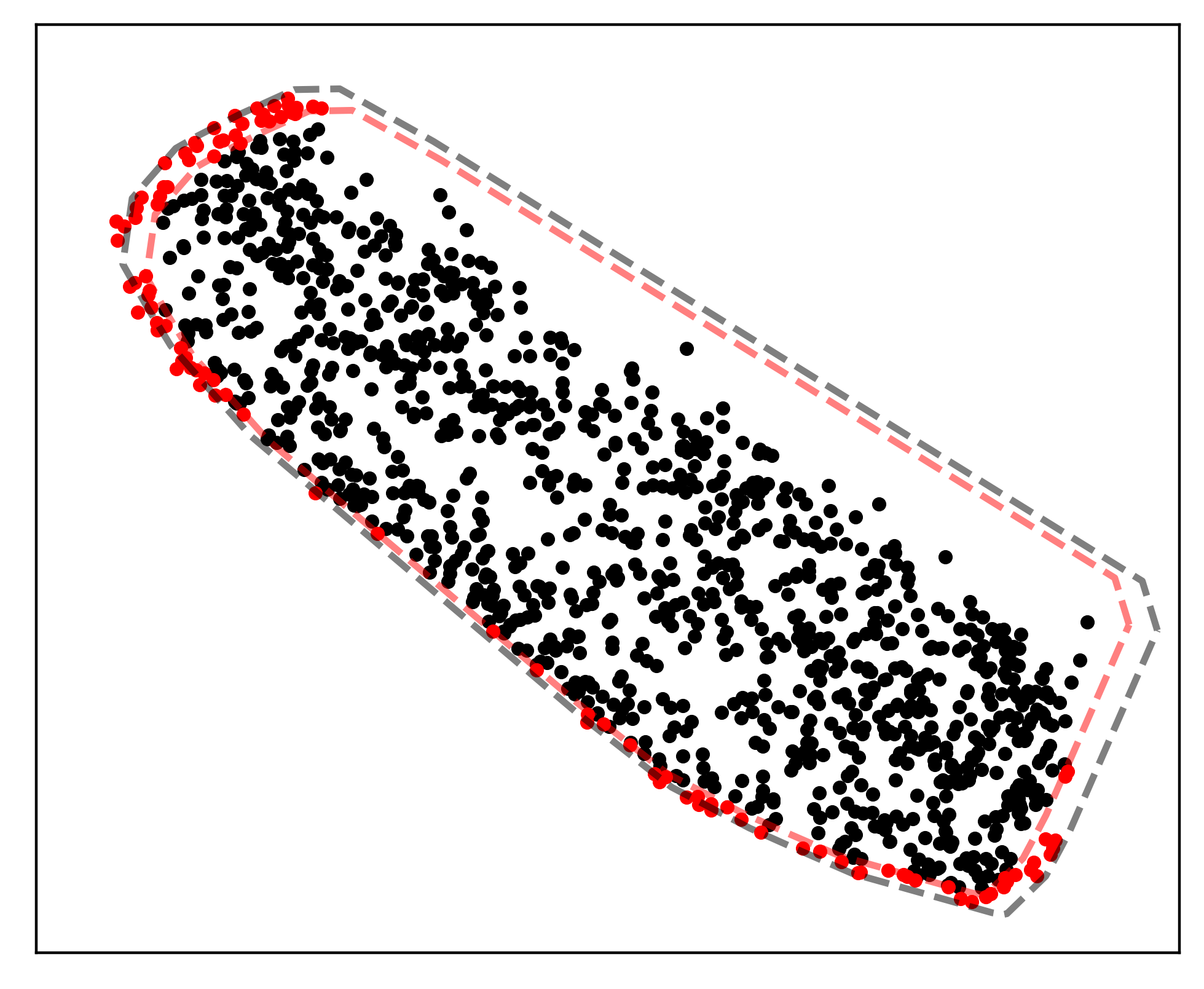}
    \includegraphics[width=0.24\textwidth]{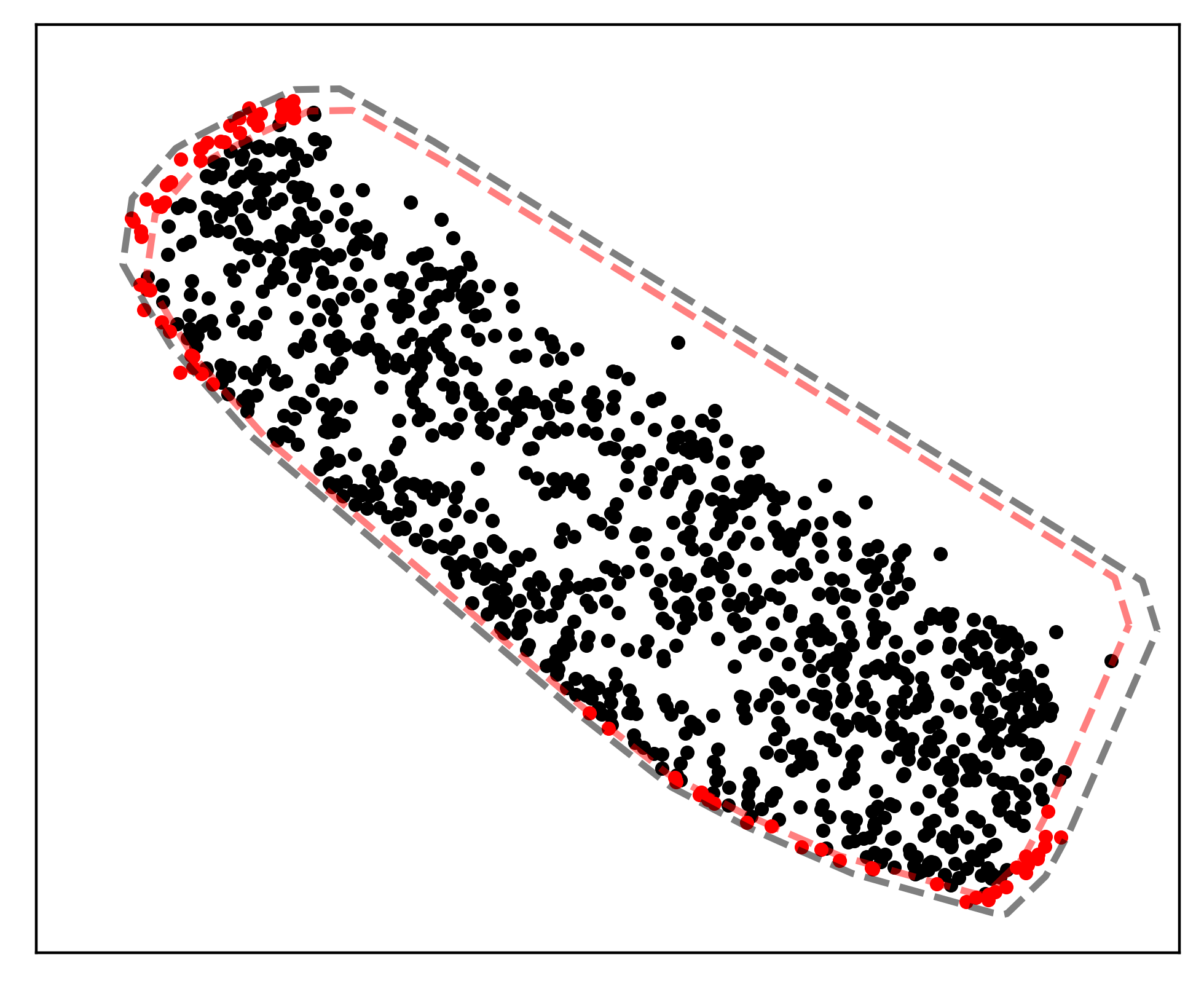}
    \includegraphics[width=0.24\textwidth]{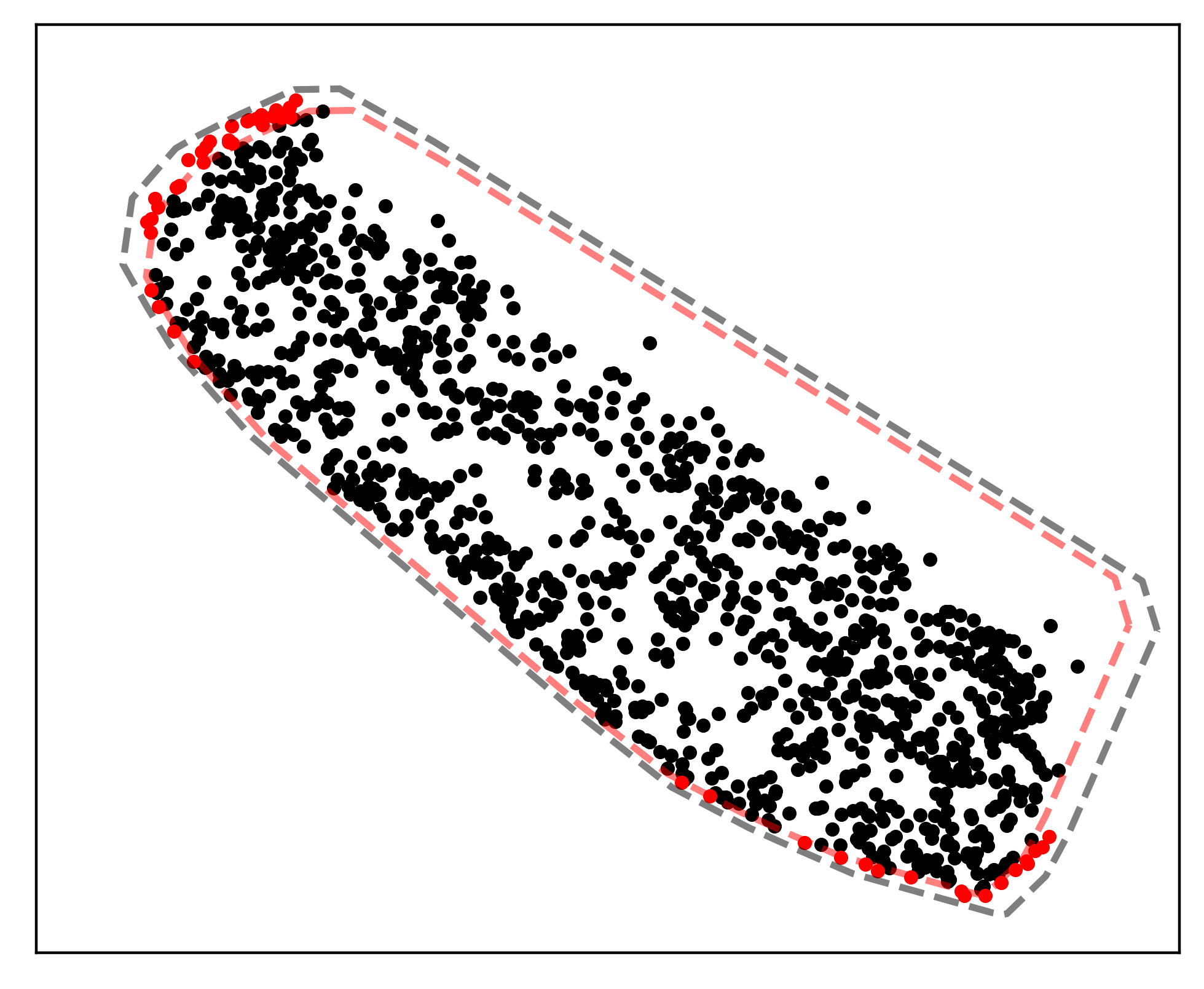}
     \\
    (a) \hspace{0.22\textwidth} (b) \hspace{0.22\textwidth} (c) \hspace{0.22\textwidth} (d) \\}
    {\centering
    \includegraphics[width=0.24\textwidth]{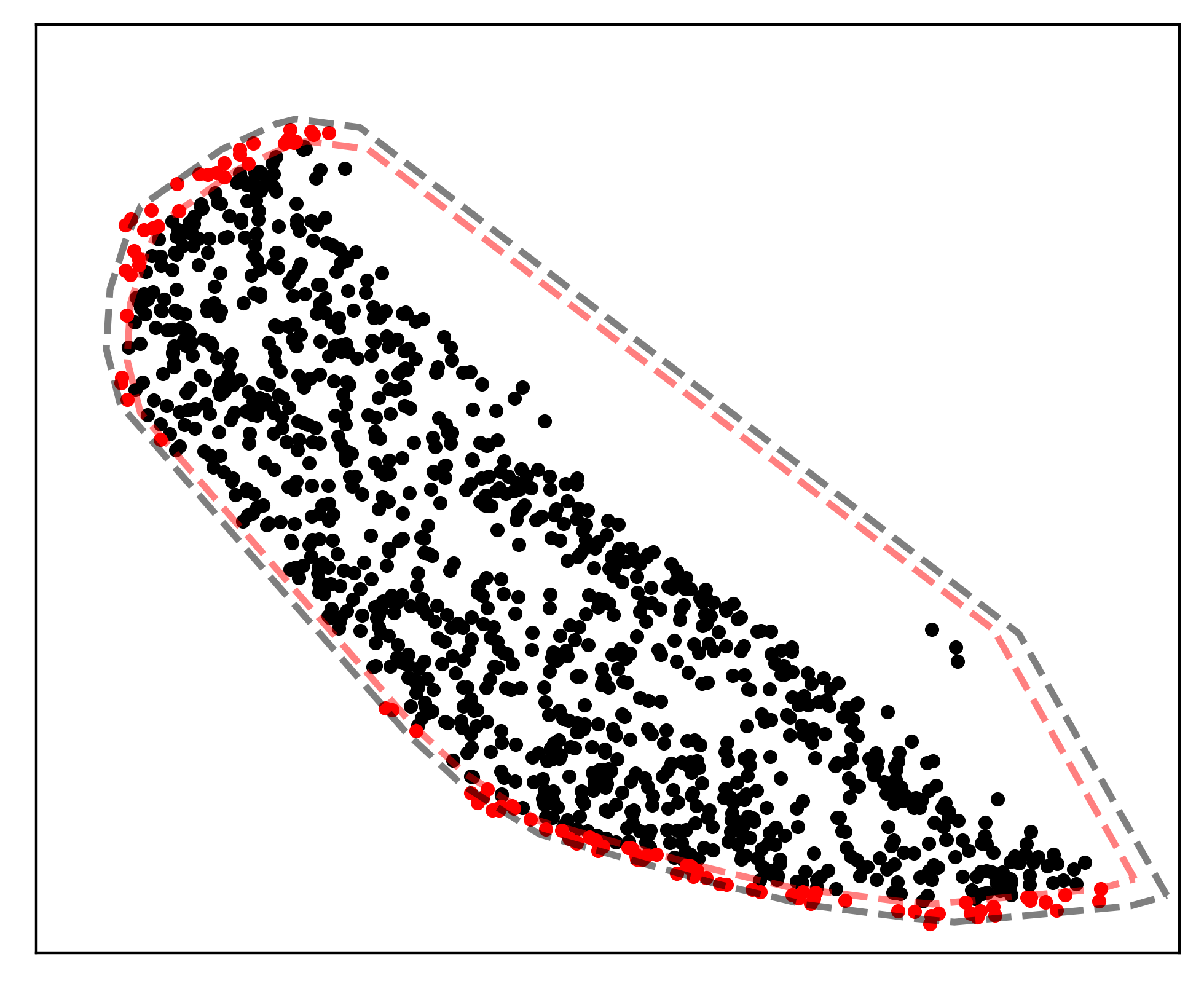}
    \includegraphics[width=0.24\textwidth]{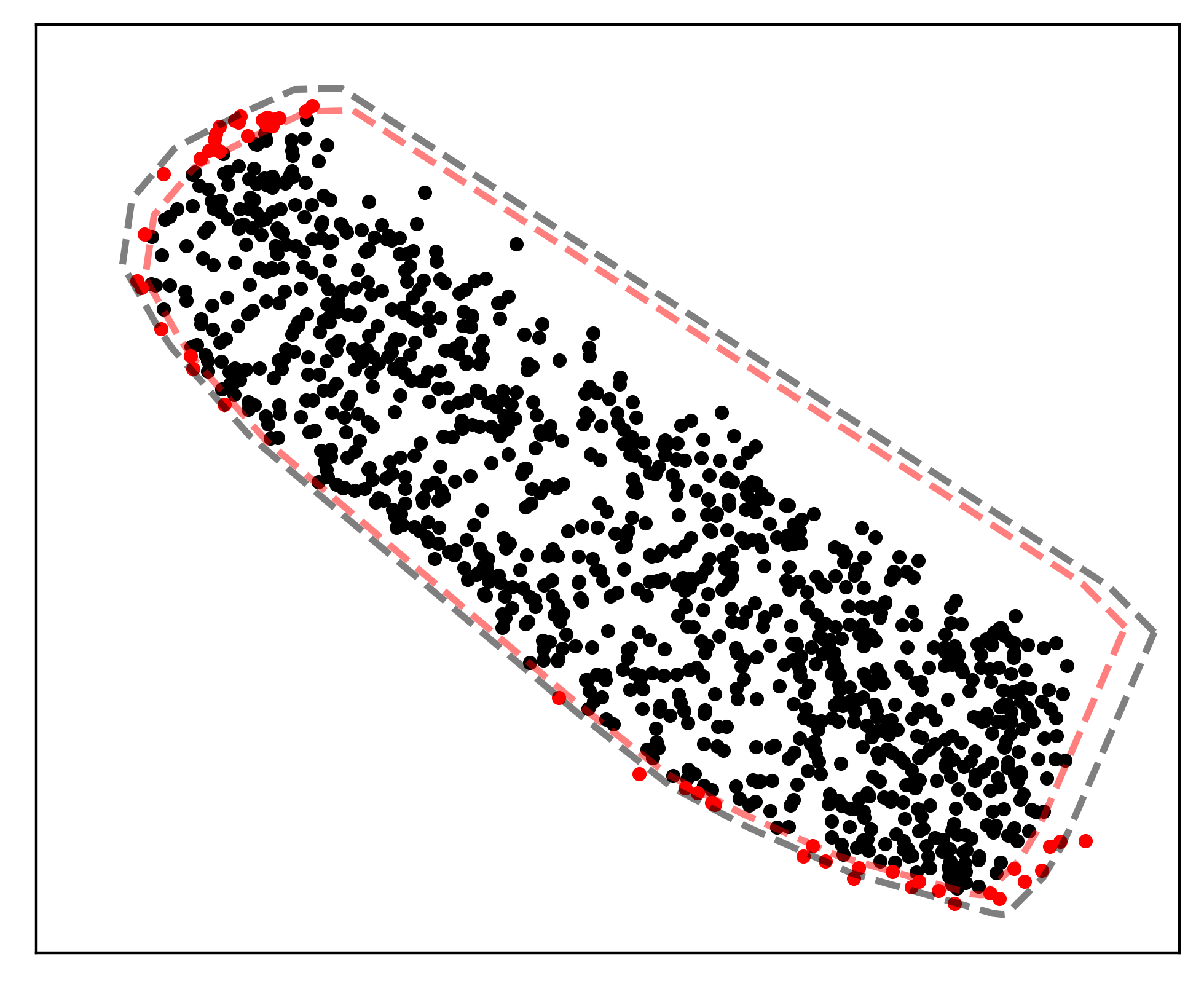}
    \includegraphics[width=0.24\textwidth]{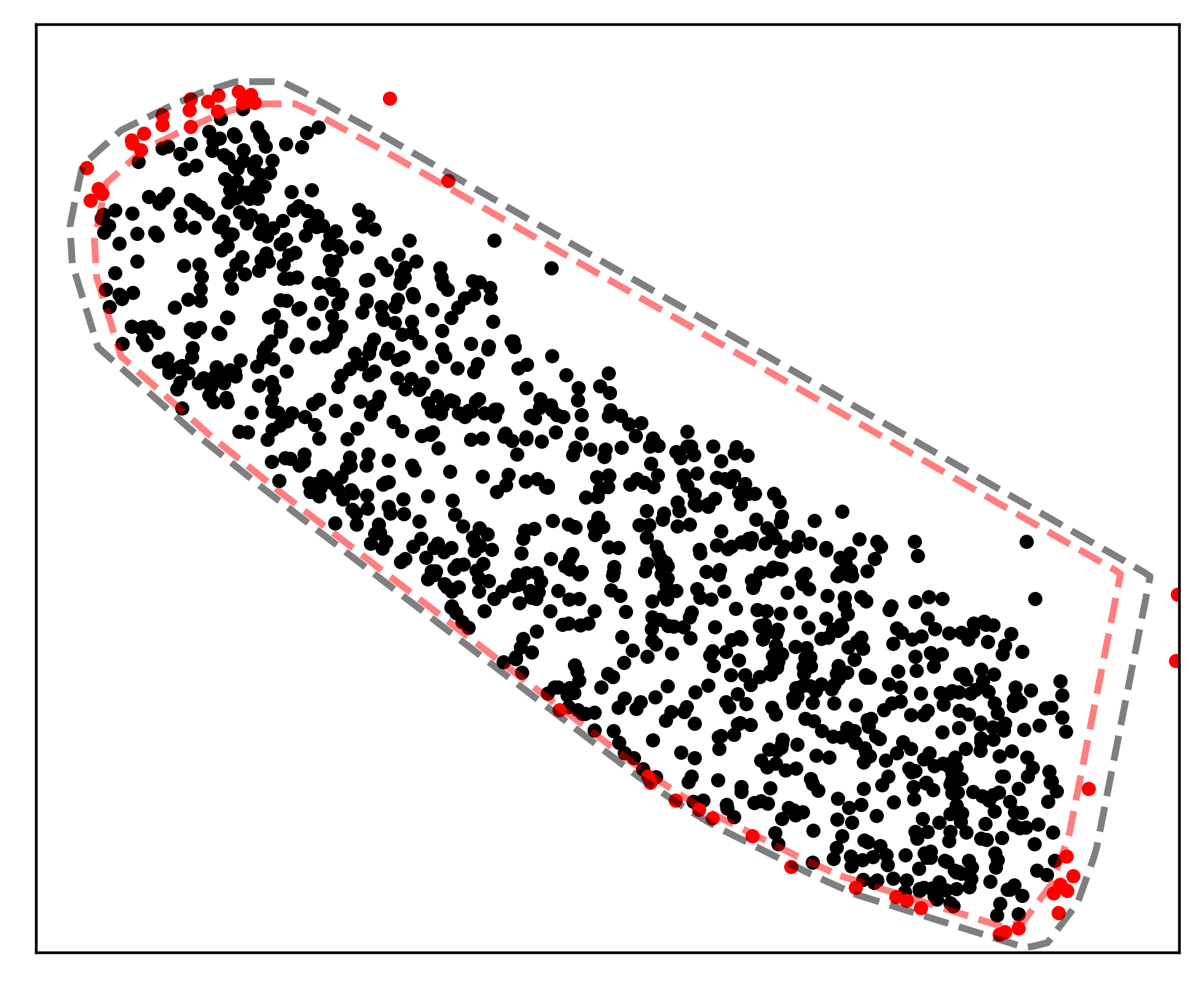}
    \includegraphics[width=0.24\textwidth]{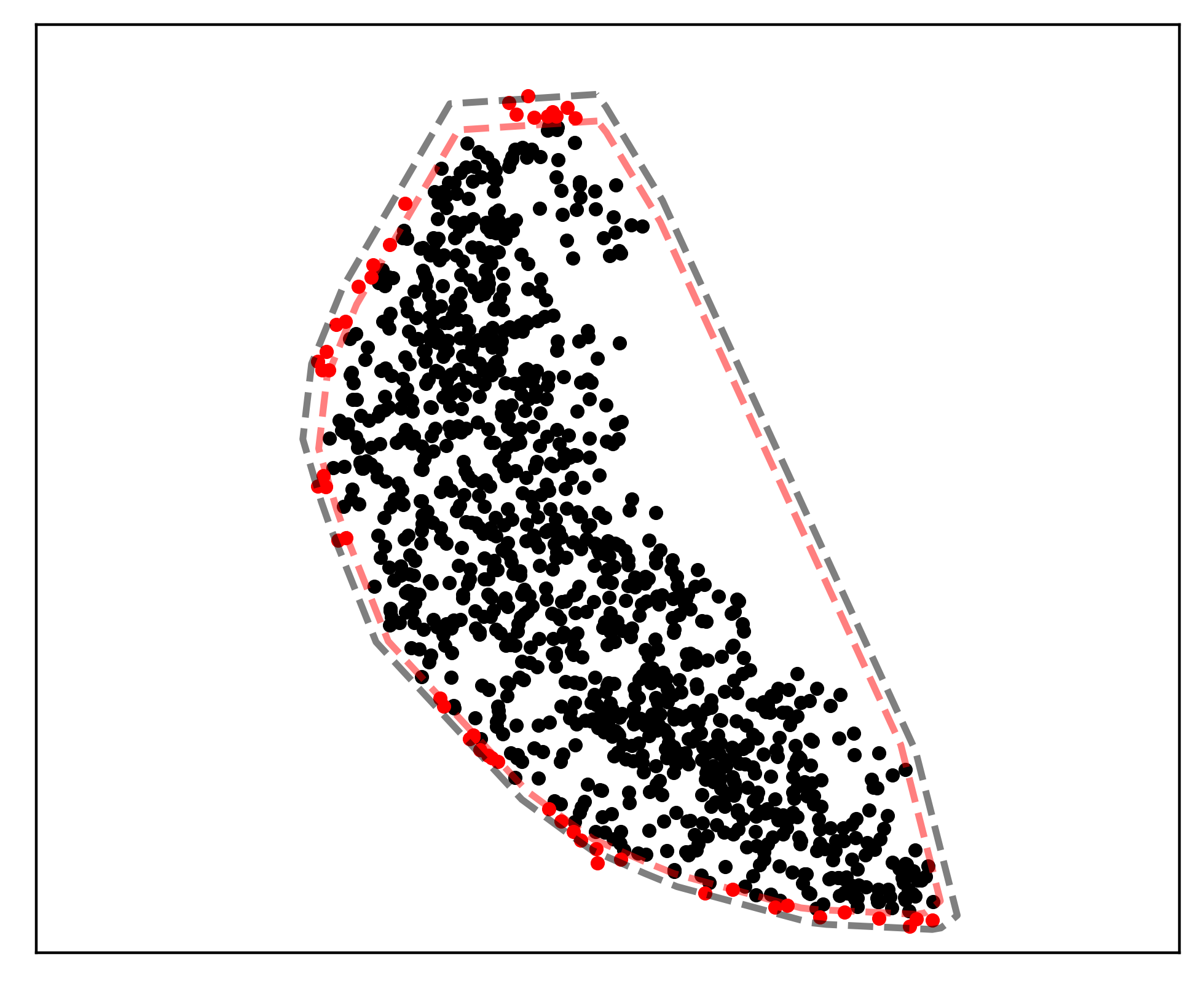} \\
    (e) \hspace{0.22\textwidth} (f) \hspace{0.22\textwidth} (g) \hspace{0.22\textwidth} (h) \\}
    
    \caption{Test points of the cluster labeled 1 of MNIST data for different embeddings. Boundary of polytopes $H$ and $P$ are shown in grey and red dashed lines, respectively. The points outside $P$ are shown using red dots. The methods are: (a) UMAP {\tiny{(Train+test)}} ($k=30$, $n_s=5$, $\zeta=49$), (b) UMAP ($k=30$, $n_s=5$, $\zeta=104$), (c) UMAP ($k=30$, $n_s=3$, $\zeta=75$), (d) UMAP ($k=30$, $n_s=1$, $\zeta=48$),  (e) UMAP ($k=15$, $n_s=5$, $\zeta=105$), (f) UMAP-MSE ($\zeta=53$),  (g) UMAP-CEMSE ($\zeta=46$), and (h) UMAP-CE ($\zeta=51$).}
    \label{suppfig:mnist_repulsion_effect}
\end{figure*}

\begin{figure*} [t]
    {\centering
    \includegraphics[width=0.3\textwidth]{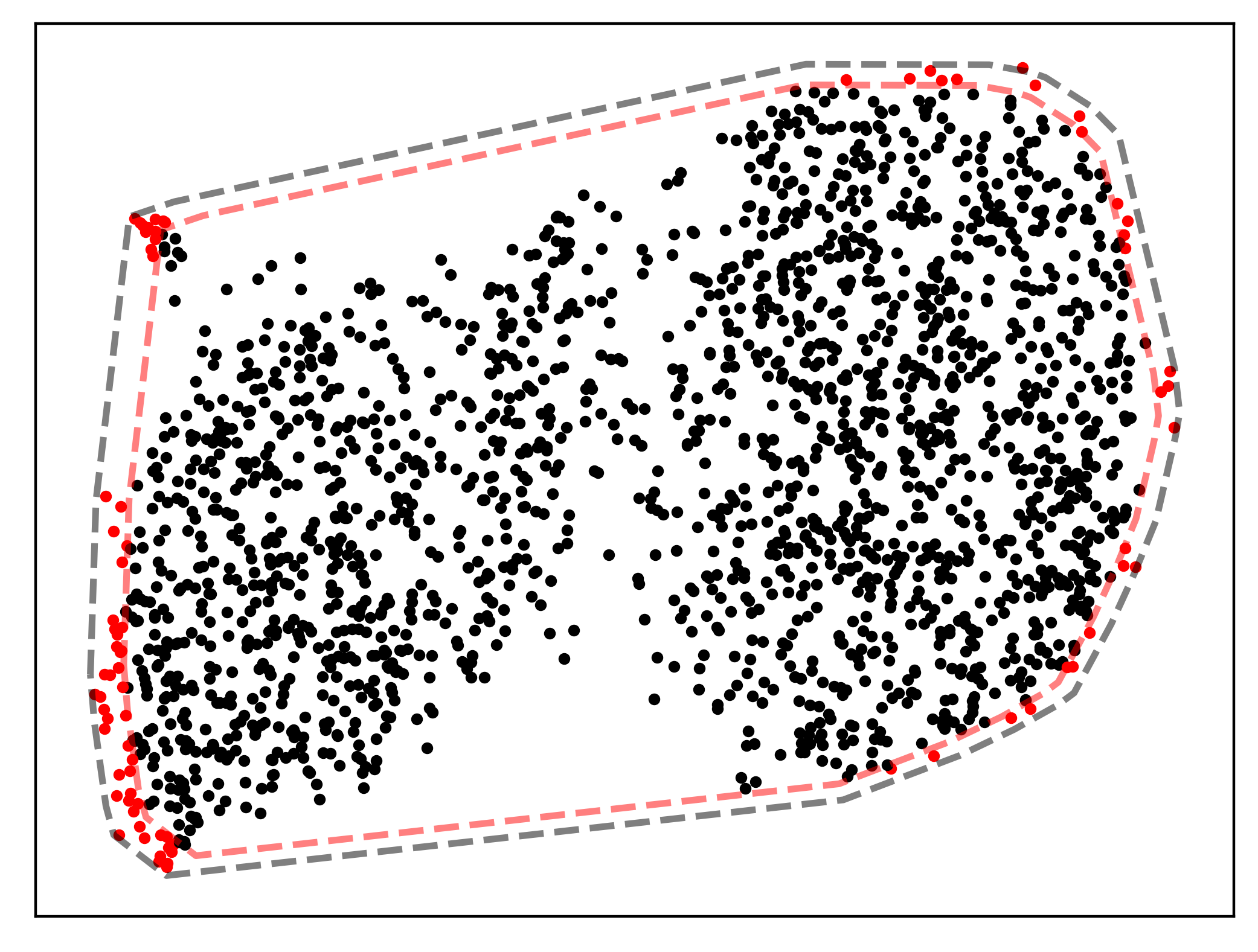} 
    \includegraphics[width=0.3\textwidth]{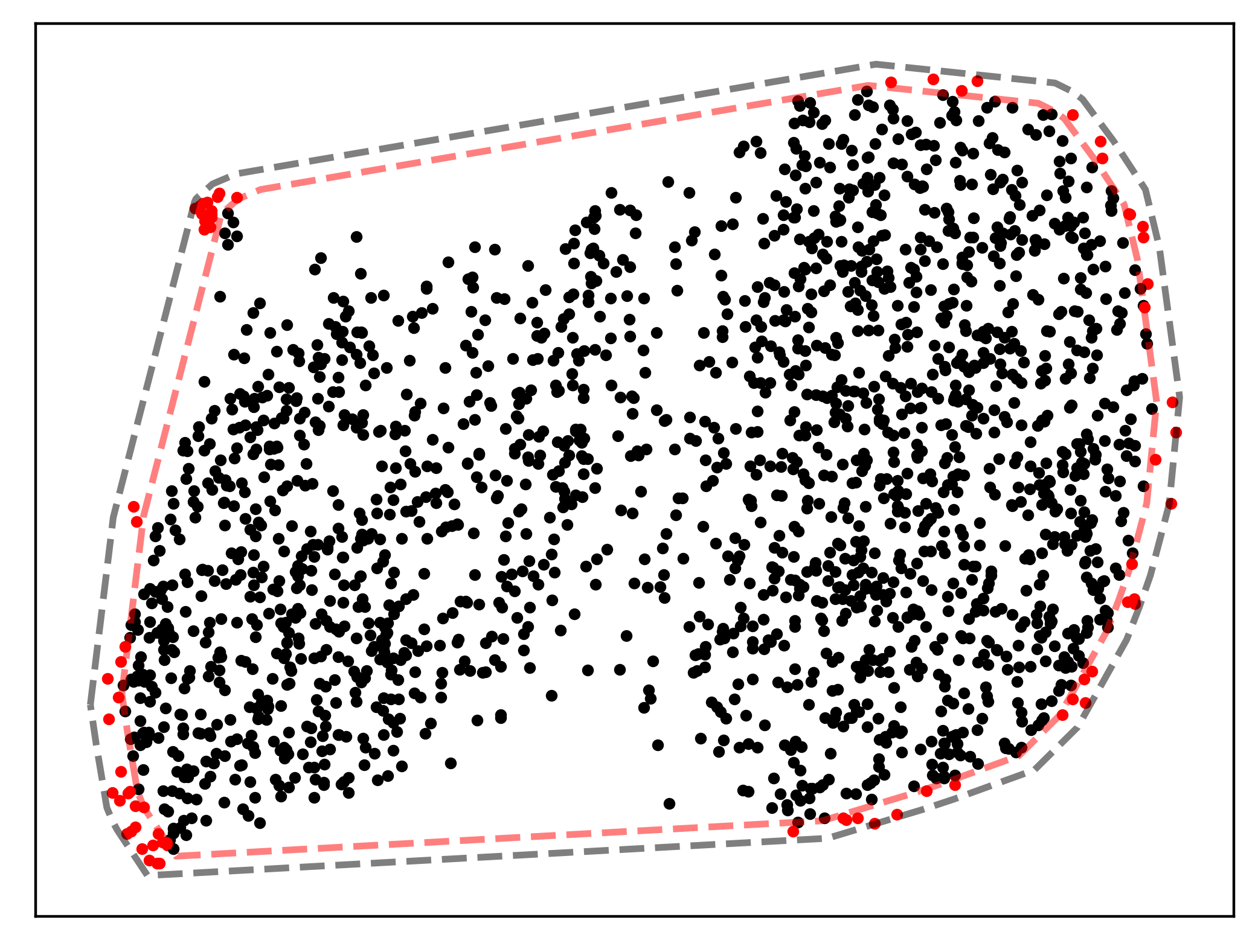} 
    \includegraphics[width=0.3\textwidth]{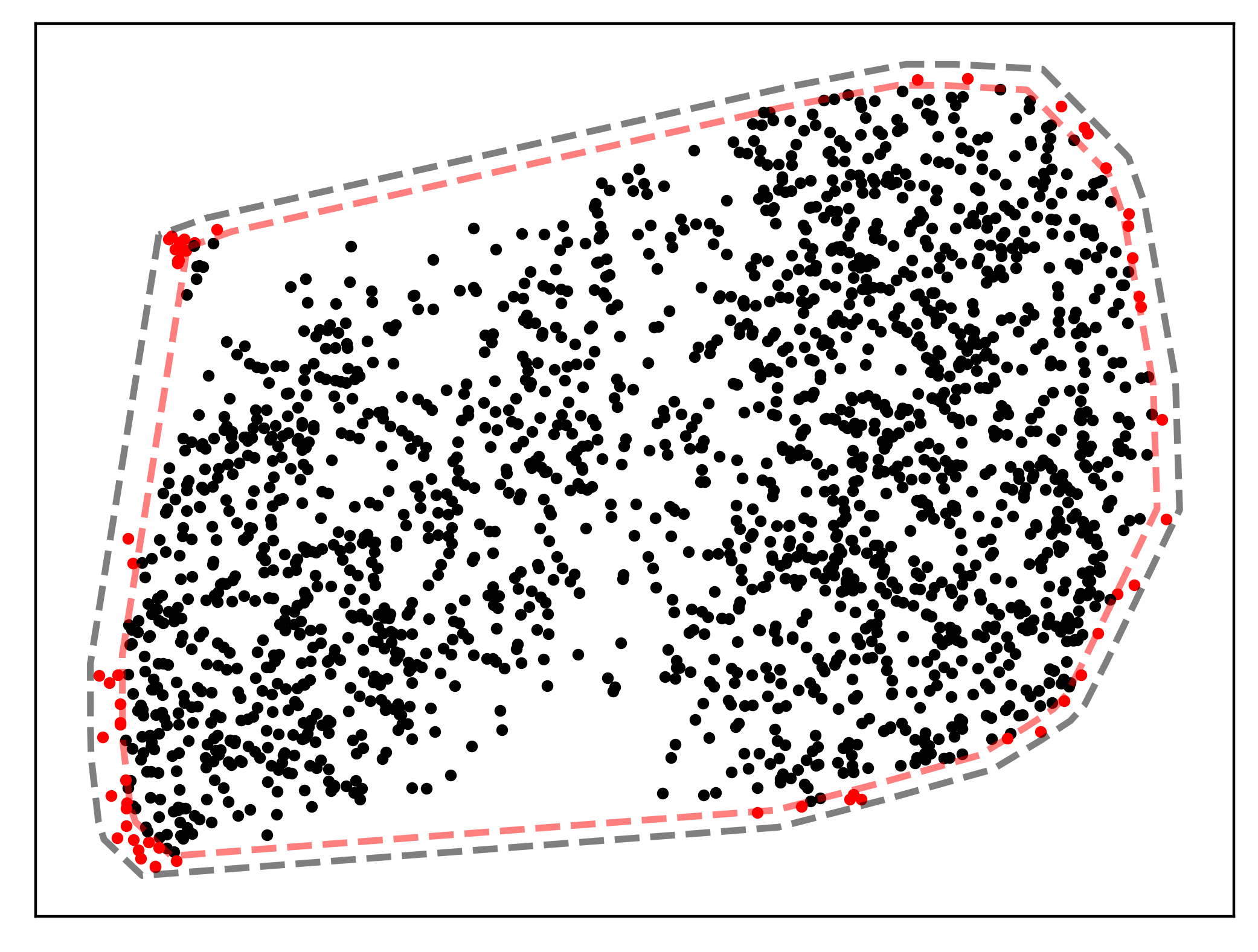} \\
    (a) \hspace{0.28\textwidth} (b) \hspace{0.28\textwidth} (c) \\}
    {\centering
    \includegraphics[width=0.3\textwidth]{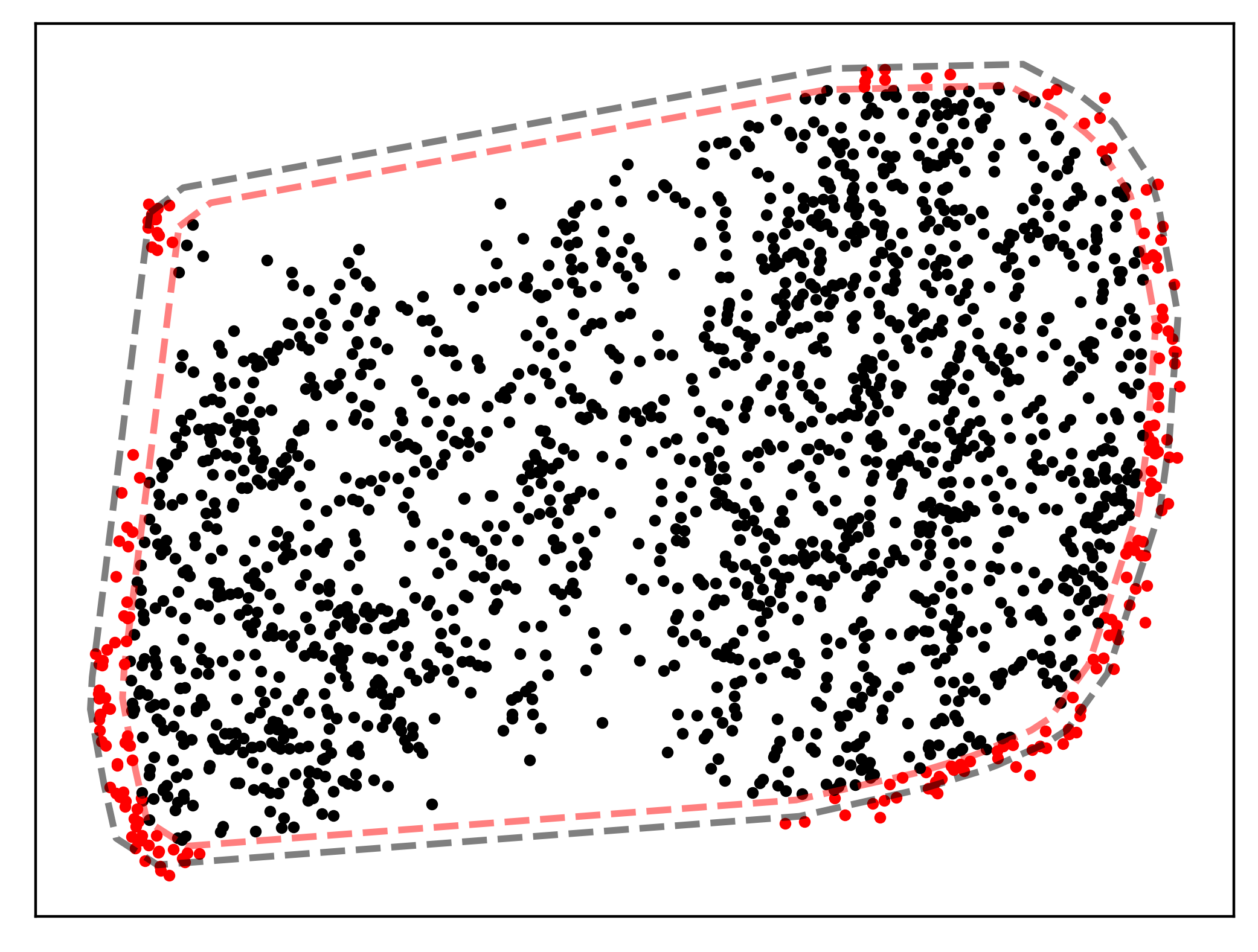} 
    \includegraphics[width=0.3\textwidth]{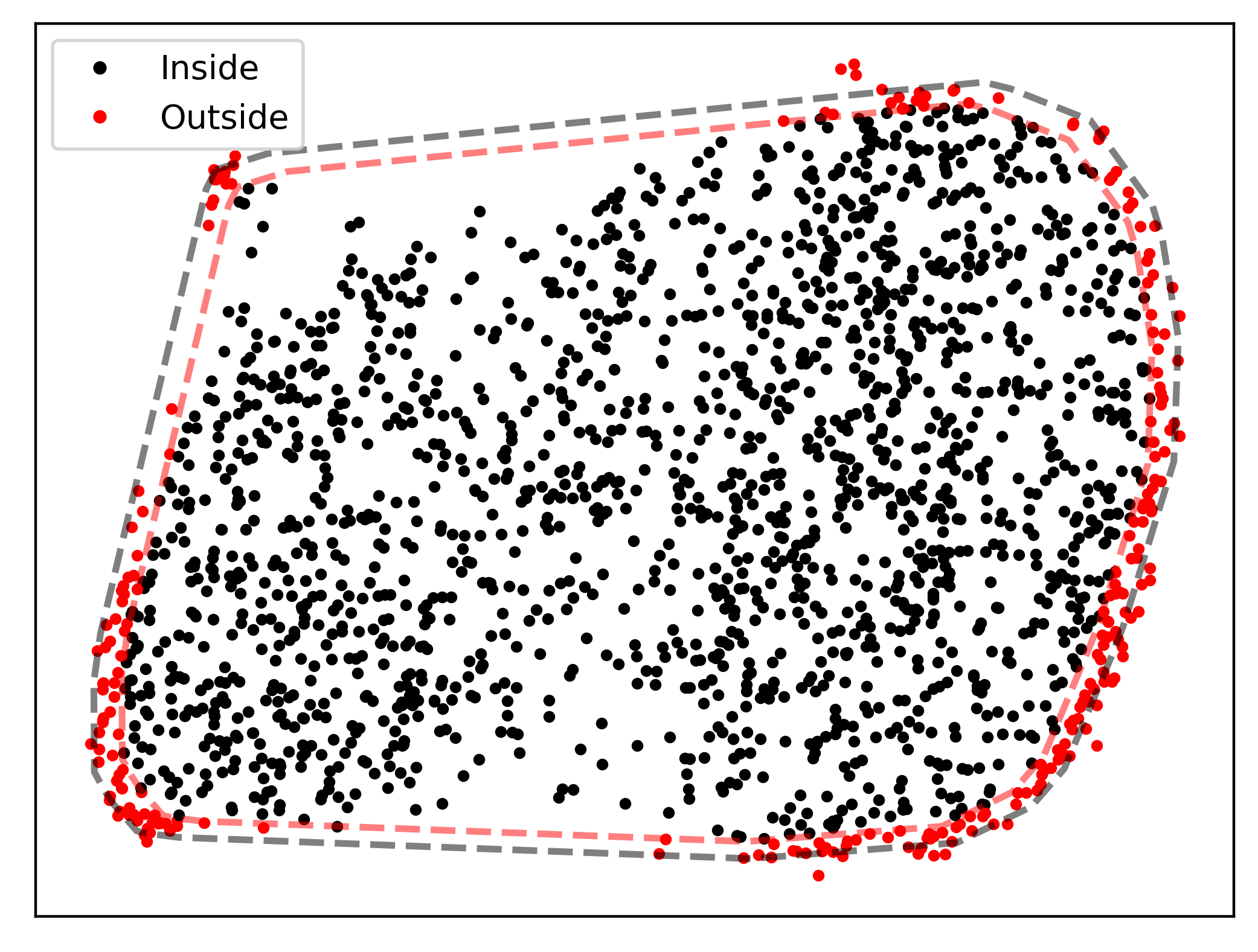} 
    \includegraphics[width=0.3\textwidth]{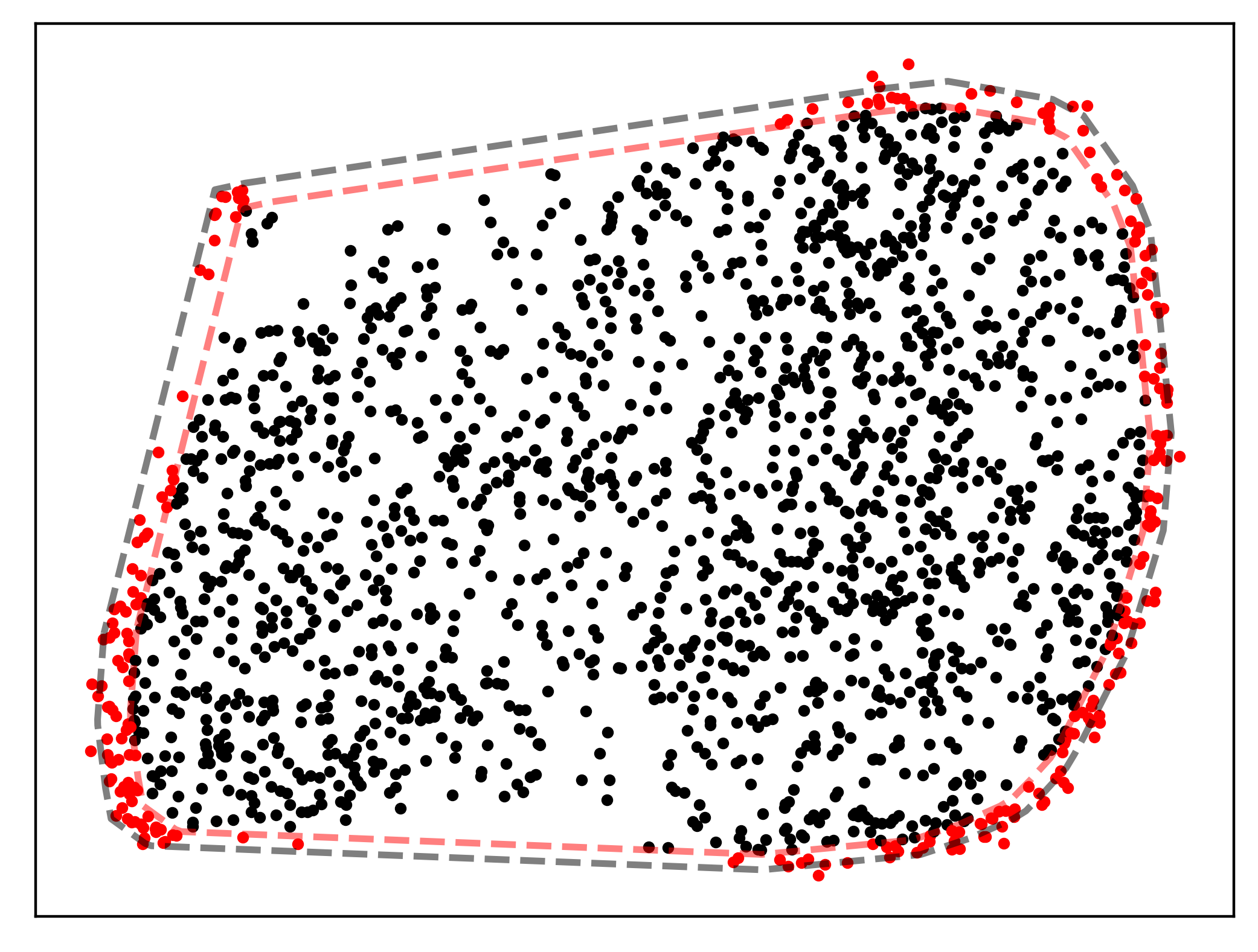} \\
    (d) \hspace{0.28\textwidth} (e) \hspace{0.28\textwidth} (f) \\}
    
    \caption{Test points of chest x-ray embeddings from Fig.~4 using UMAP. Embeddings for x-ray data for different algorithms for $k=30$. Boundary of polytopes $H$ and $P$ are shown in grey and red dashed lines, respectively. The points outside $P$ are shown using red dots. Top row: training and test data embedded together using (a) $k=15$ ($\zeta=82$), (b) $k=30$ ($\zeta=76$) and (c) $k=50$ ($\zeta=62$). Bottom row: Embedded training followed by out-of-sample test points using (d) $k=15$ ($\zeta=202$), (e) $k=245$ ($\zeta=76$) and (f) $k=50$ ($\zeta=243$).}
    \label{suppfig:UMAP_problem}
\end{figure*}

\begin{figure*} [t]
    {\centering
    \includegraphics[width=0.3\textwidth]{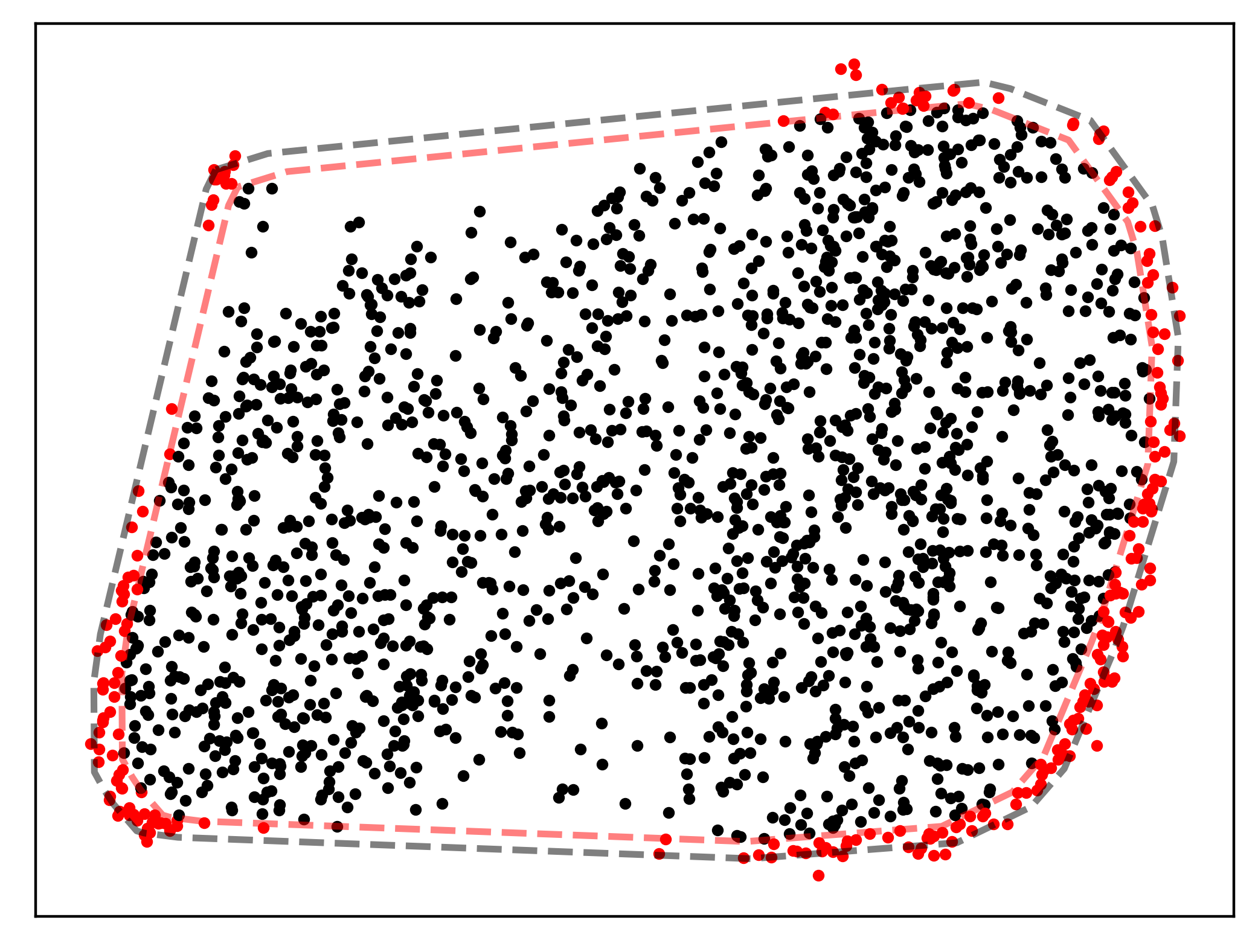} 
    \includegraphics[width=0.3\textwidth]{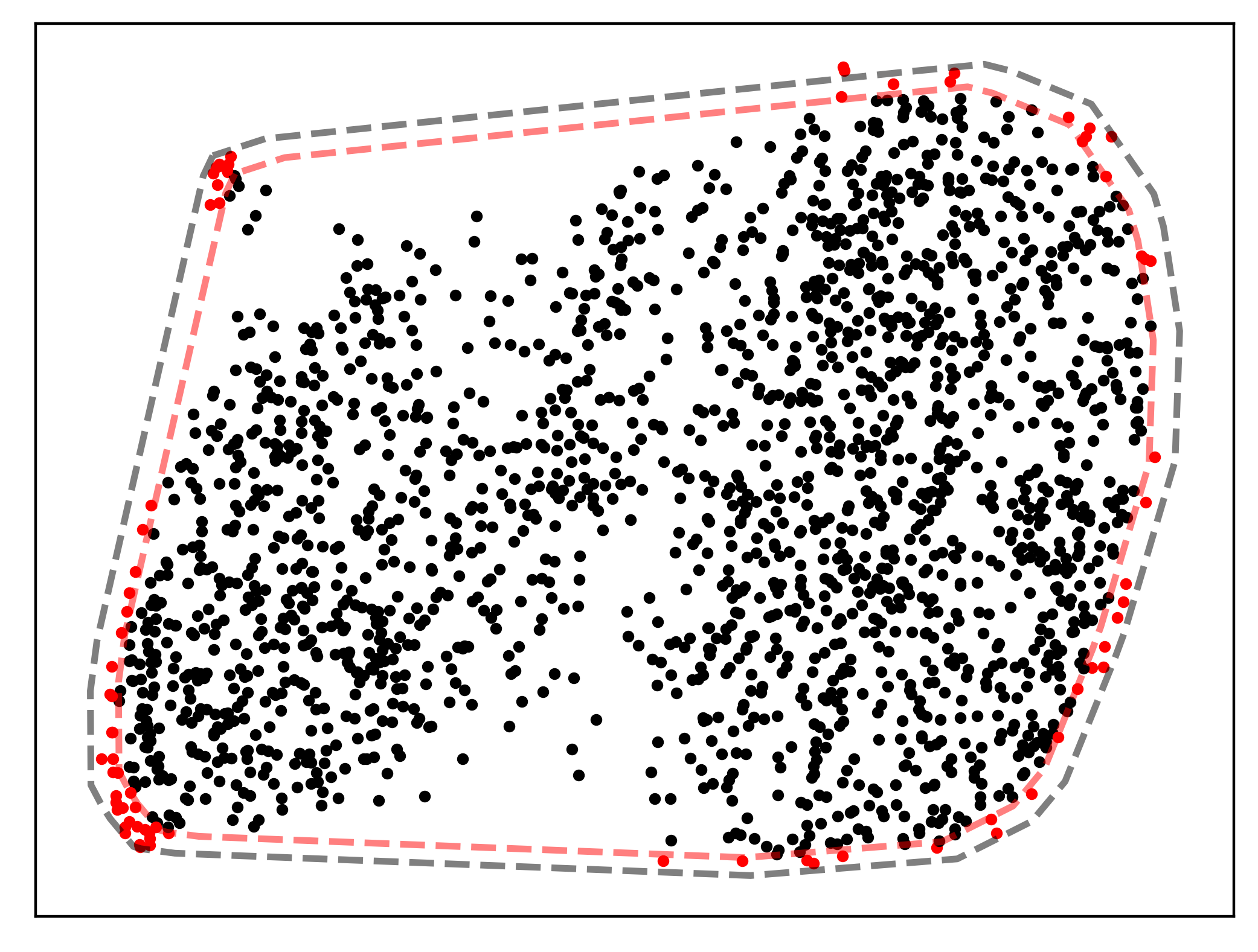} 
    \includegraphics[width=0.3\textwidth]{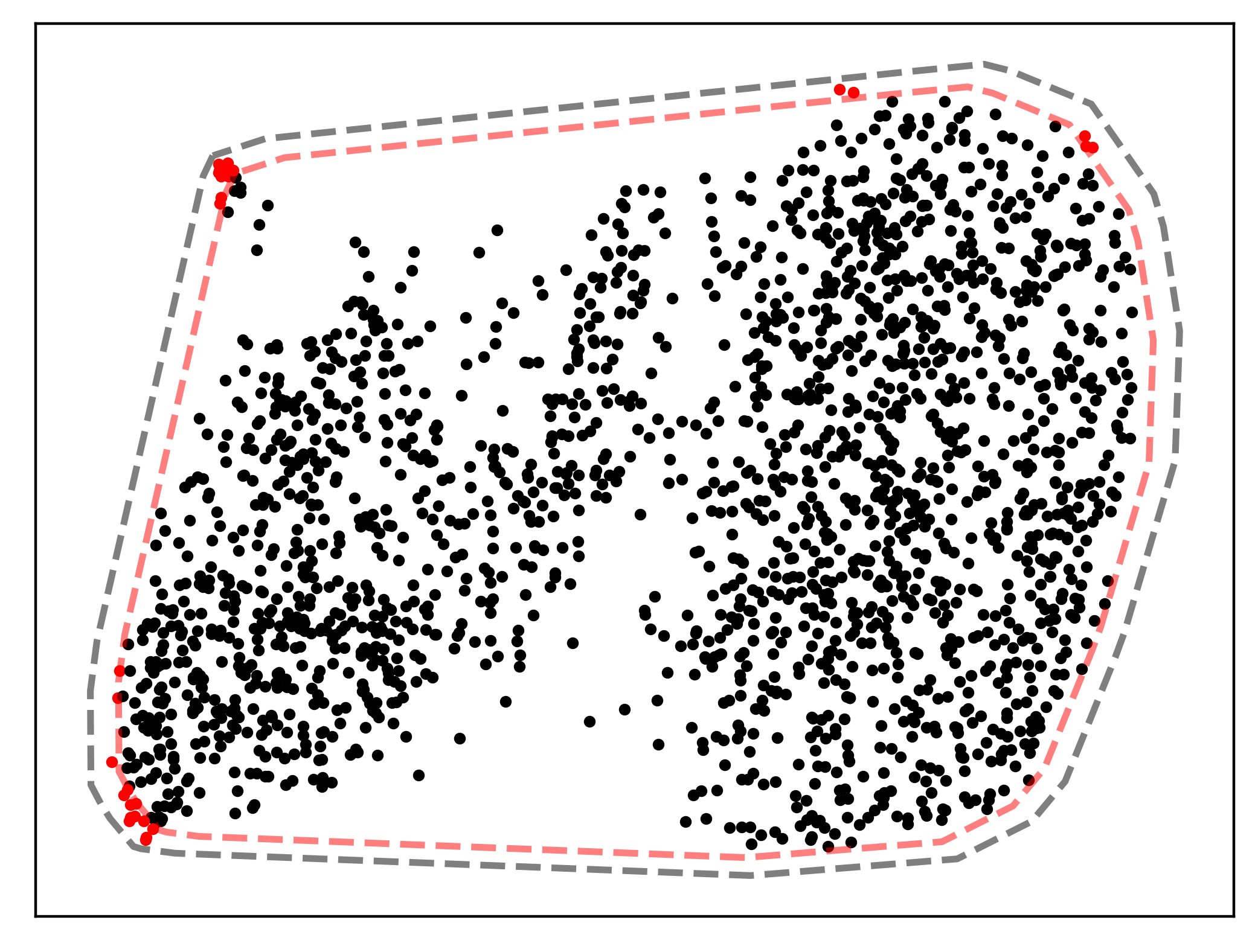} \\
    (a) \hspace{0.28\textwidth} (b) \hspace{0.28\textwidth} (c) \\}
    {\centering
    \includegraphics[width=0.3\textwidth]{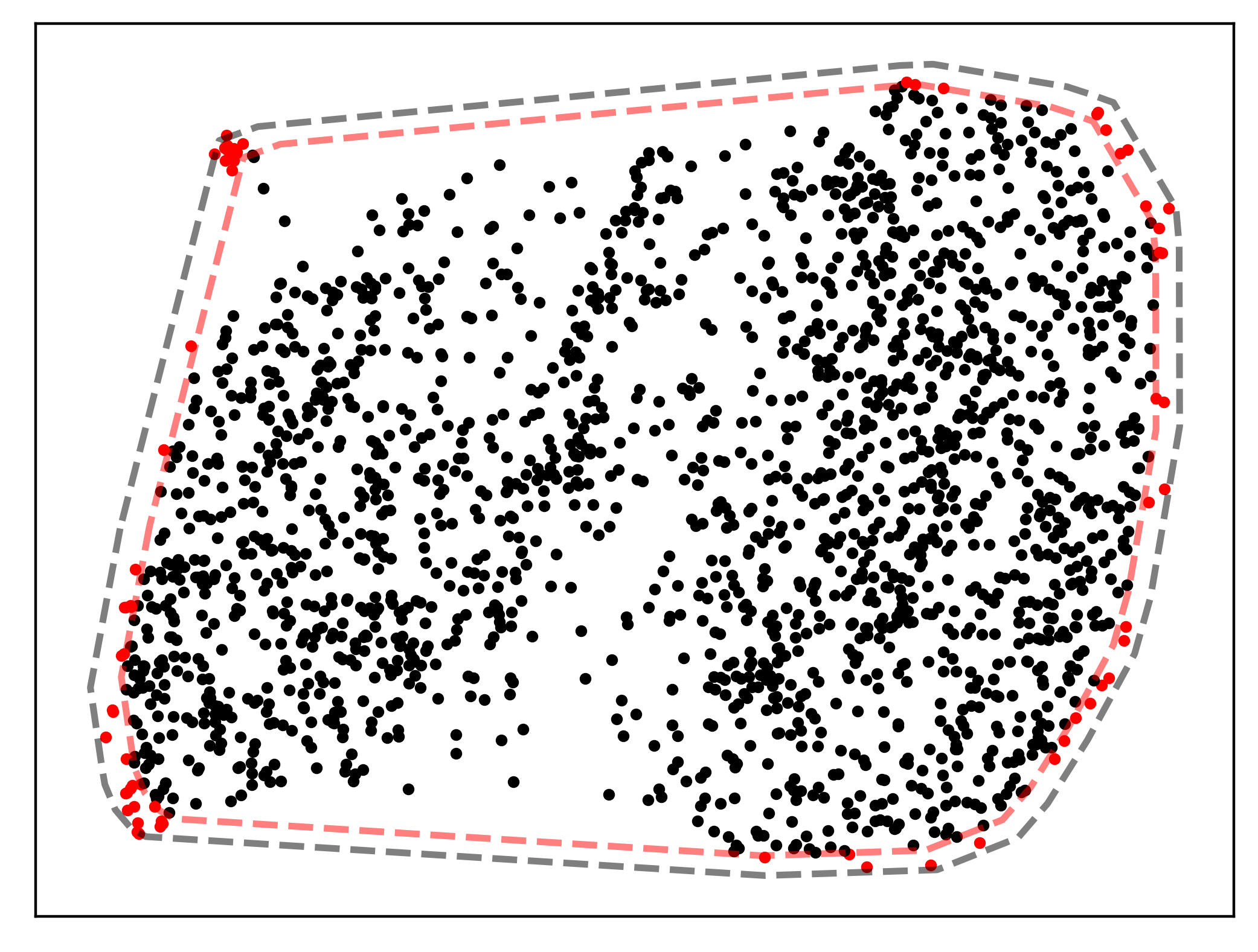} 
    \includegraphics[width=0.3\textwidth]{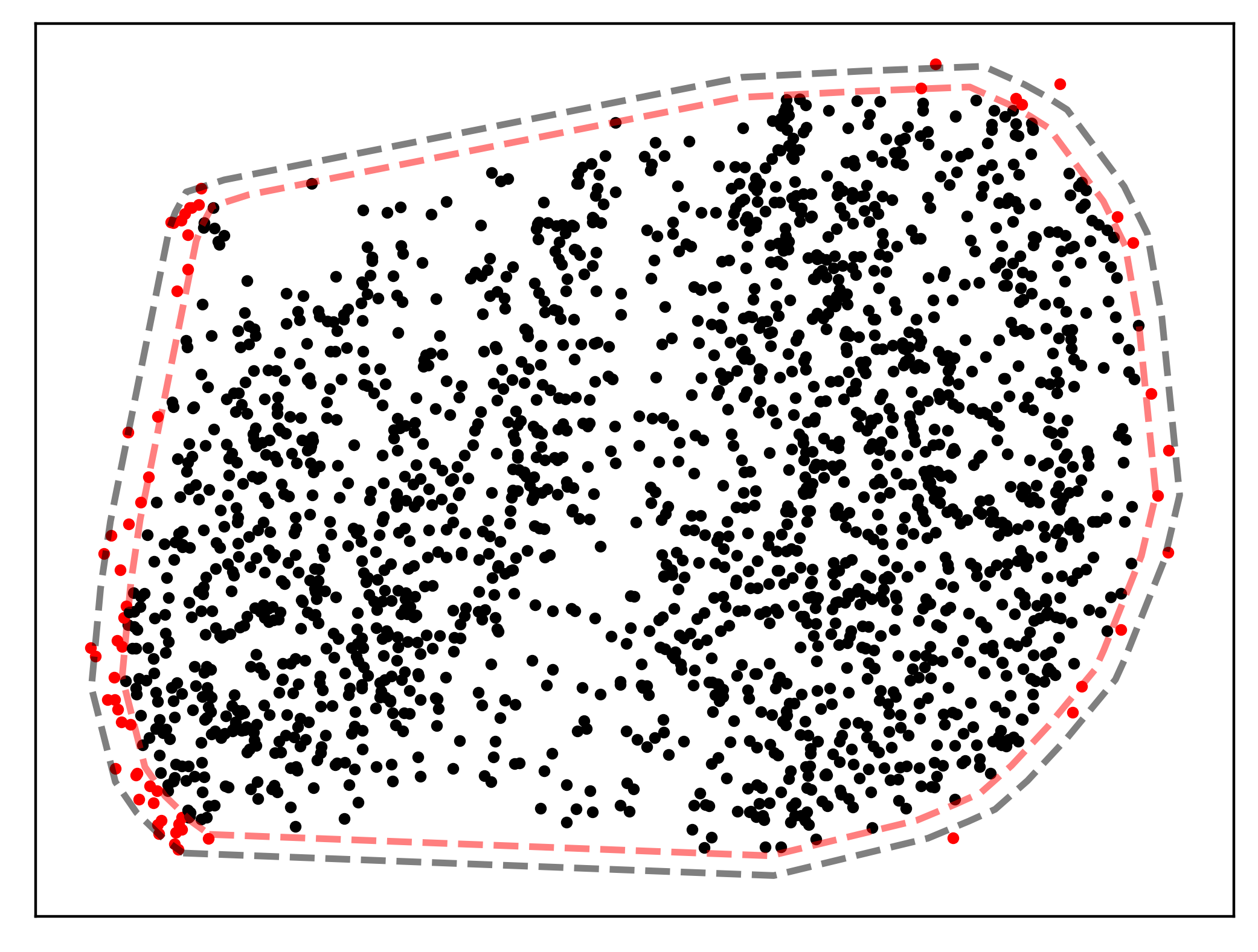} 
    \includegraphics[width=0.3\textwidth]{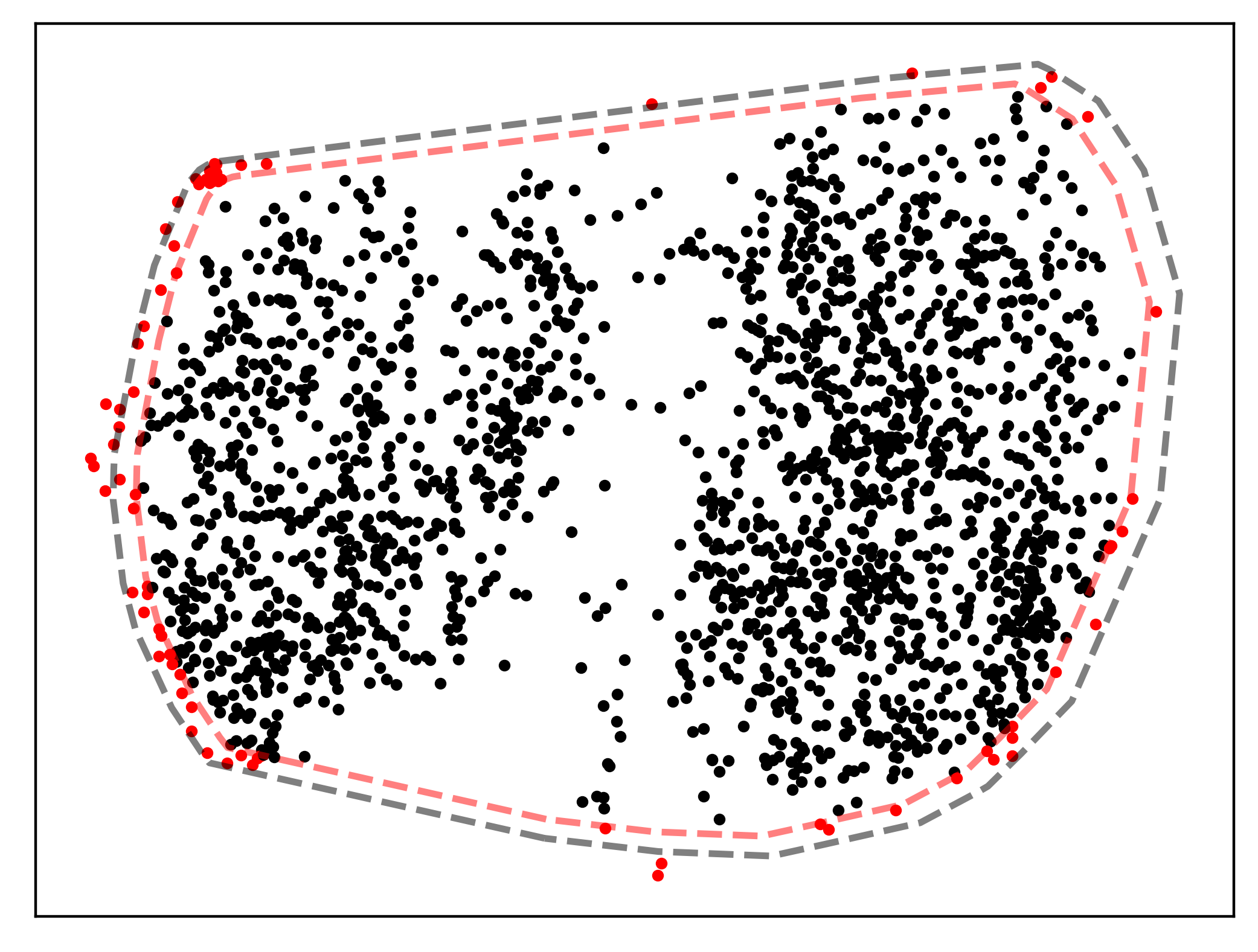} \\
    (d) \hspace{0.28\textwidth} (e) \hspace{0.28\textwidth} (f) \\}
    
    \caption{Test points of chest x-ray embeddings from Fig.~5 using different algorithms. Embeddings for x-ray data for different algorithms for $k=30$. Boundary of polytopes $H$ and $P$ are shown in grey and red dashed lines, respectively. The points outside $P$ are shown using red dots. Top row: UMAP for test points with $n_s$ set to (a) $5$ ($\zeta=245$), (b) $3$ ($\zeta=79$), and (c) $1$ ($\zeta=30$). Bottom row: parameterized embeddings using (d) P-UMAP-MSE ($\zeta=63$), (e) P-UMAP-CEMSE ($\zeta=70$), and (f) P-UMAP-CE ($\zeta=78$).}
    \label{suppfig:pneumonia_umaps}
\end{figure*}

\clearpage

\section{Scaling Repulsive Force}\label{suppsec:scalerep}
In the main text, we showcase the repulsion effect in the non-parametric UMAP by employing a lower negative sampling parameter $n_s$ during testing. Alternatively, the repulsion effect can be demonstrated by introducing the parameter repulsion strength $r_s$ in the update equation by
\begin{align}
v^{(t+a+i)} &= v^{(t+a)} + r_s\eta f_r(v^{(t+a)}, y^{(i)}).
\end{align}
This update equation incorporates the repulsion strength $r_s$ to modify the repulsive force during the iterative process.
The default settings of the UMAP algorithm set $r_s$ to $1$ during both training and testing. 
This value can be lowered during testing to reduce the repulsion effect.
In Fig.~\ref{suppfig:repulsion_strength_figure}, we compare UMAP embeddings for different values of $r_s$ during test time, specifically $0.6$, $1.0$, and $1.2$. 
Accumulation increases when $r_s>1.0$, and decreases when $r_s<1.0$. 
Notably, at $r_s=0.6$, we achieve an accumulation value close to the nominal one ($\zeta=76$).

\begin{figure*} [h]
    {\centering
    \includegraphics[width=0.3\textwidth]{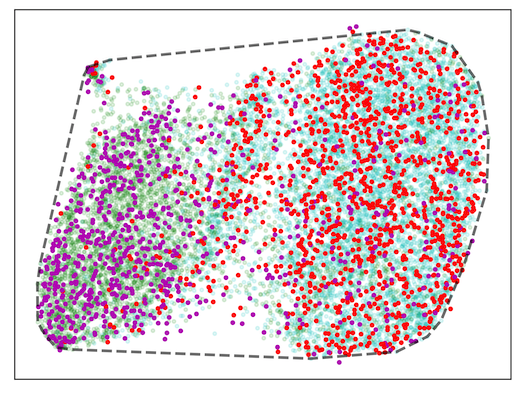} 
    \includegraphics[width=0.3\textwidth]{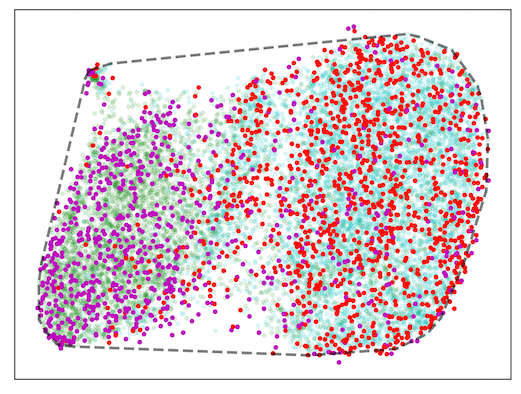} 
    \includegraphics[width=0.3\textwidth]{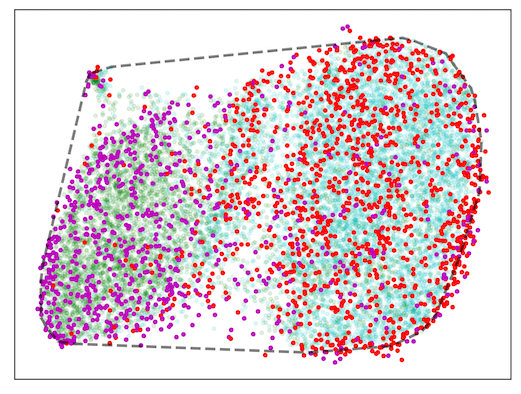} \\
    (a) \hspace{0.28\textwidth} (b) \hspace{0.28\textwidth} (c) \\}
    
    \caption{UMAP embeddings obtained by varying $r_s$ to (a) $0.6$ ($\zeta=81$), (b) $1.0$ ($\zeta=245$), and (c) $1.2$ ($\zeta=390$). The values of $\zeta$ indicate that as $r_s$ is increased, the accumulation of points at the periphery of the clusters also increases.}
    \label{suppfig:repulsion_strength_figure}
\end{figure*}

\section{Additional Discussion regarding regarding clinical data}~\label{suppsec:more_clinical_data}

\begin{figure*}[t]
    \centering
    \includegraphics[width=0.5\linewidth]{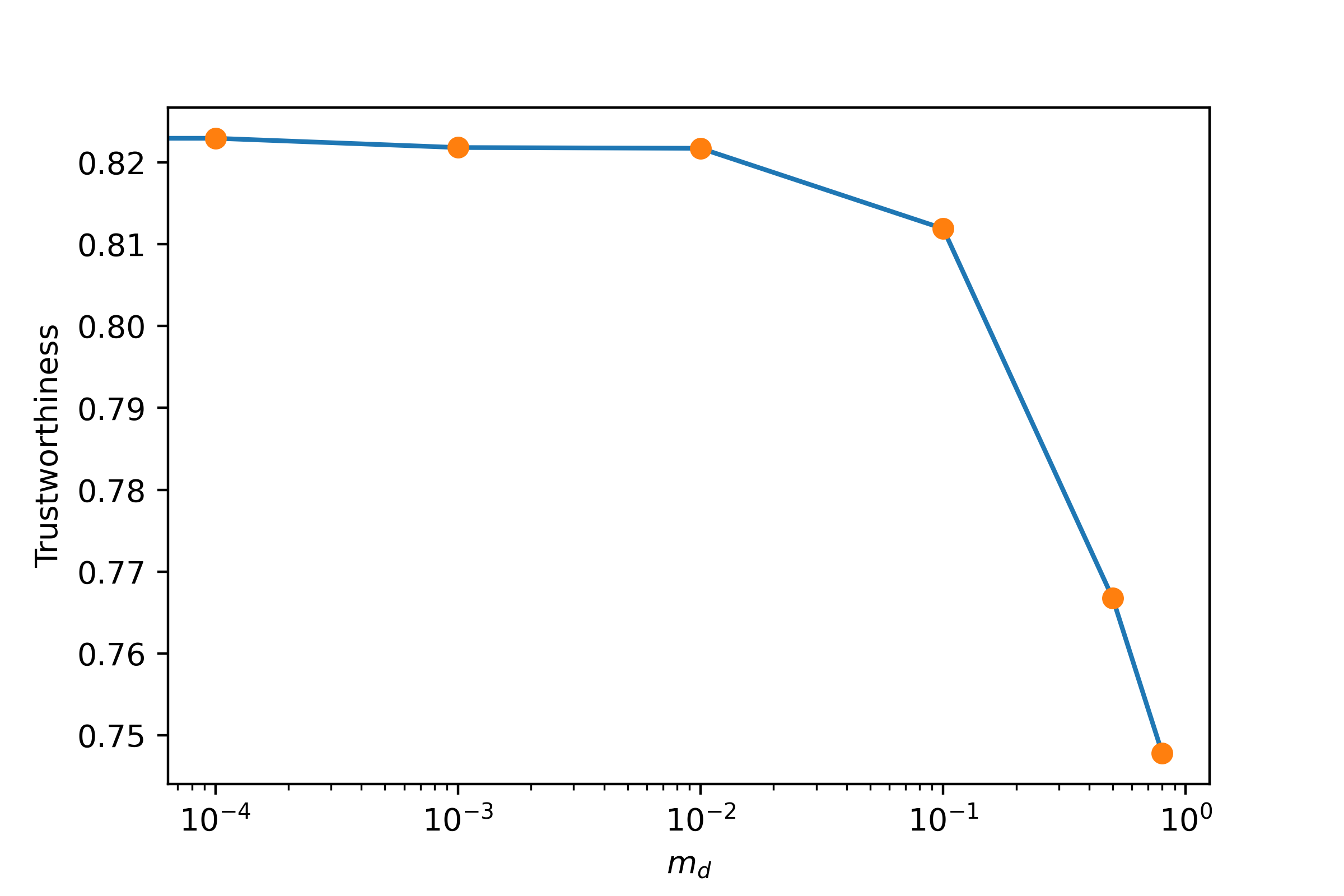}
    \caption{Trustworthiness of training embedding for `shortness of breath' for different values of minmum distance $m_d$.}
    \label{suppfig:trustworthienss_yale_shortness}
\end{figure*}

\begin{figure*}[t]
    \centering
    
    \includegraphics[width=1.6in]{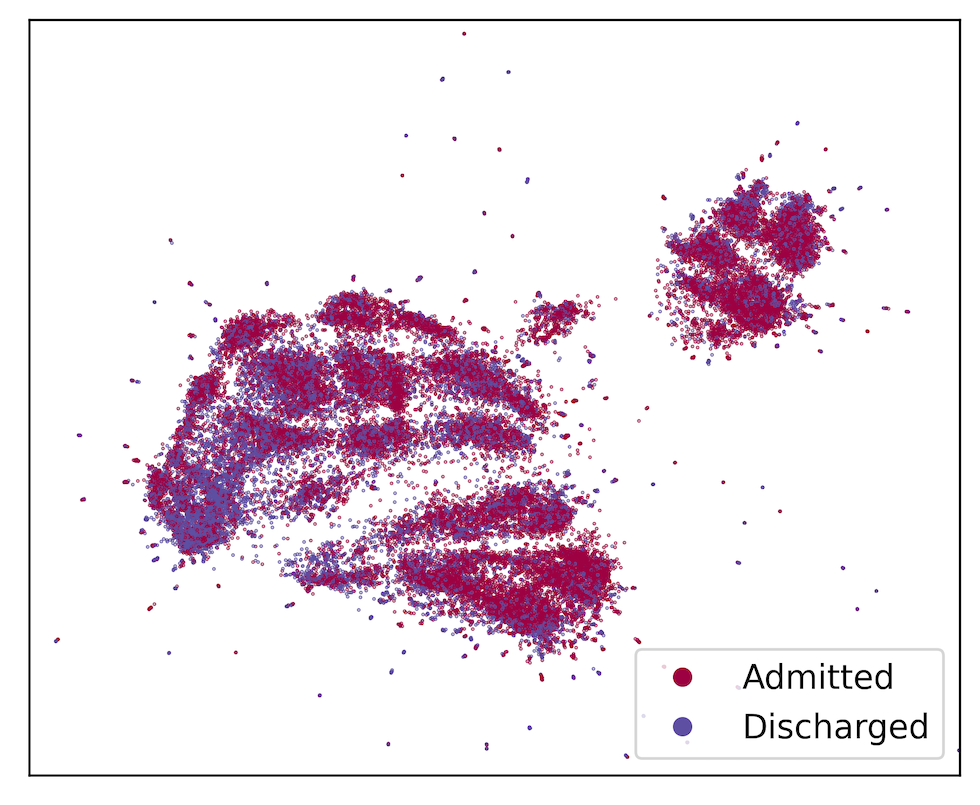} 
    \includegraphics[width=1.6in]{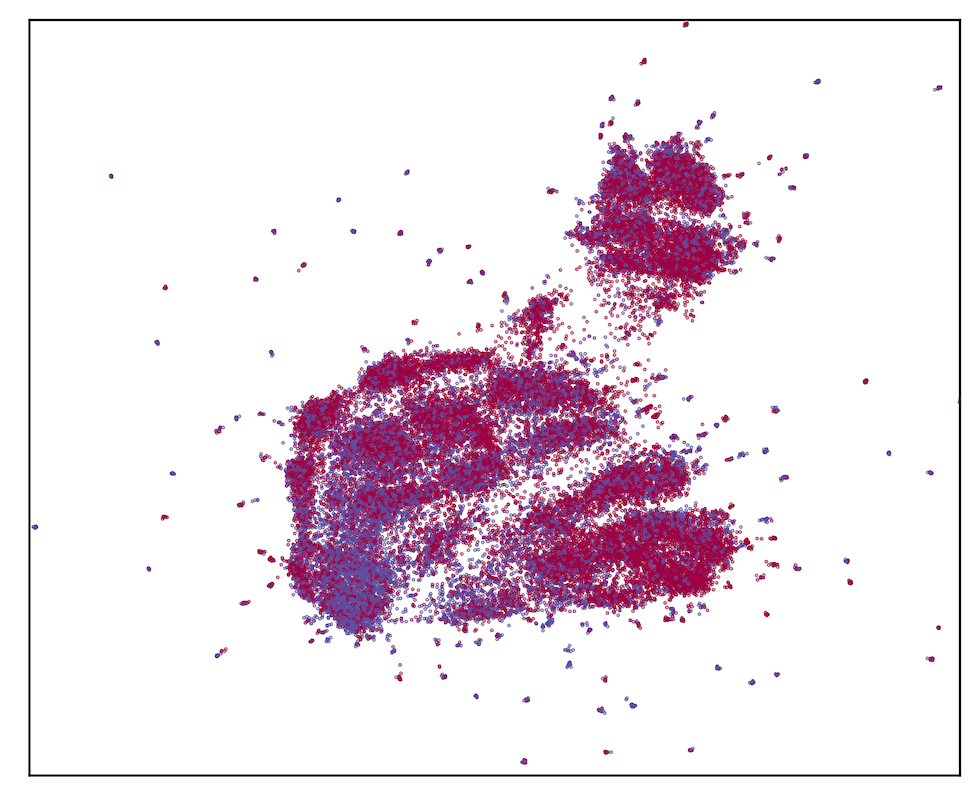} 
    \includegraphics[width=1.6in]{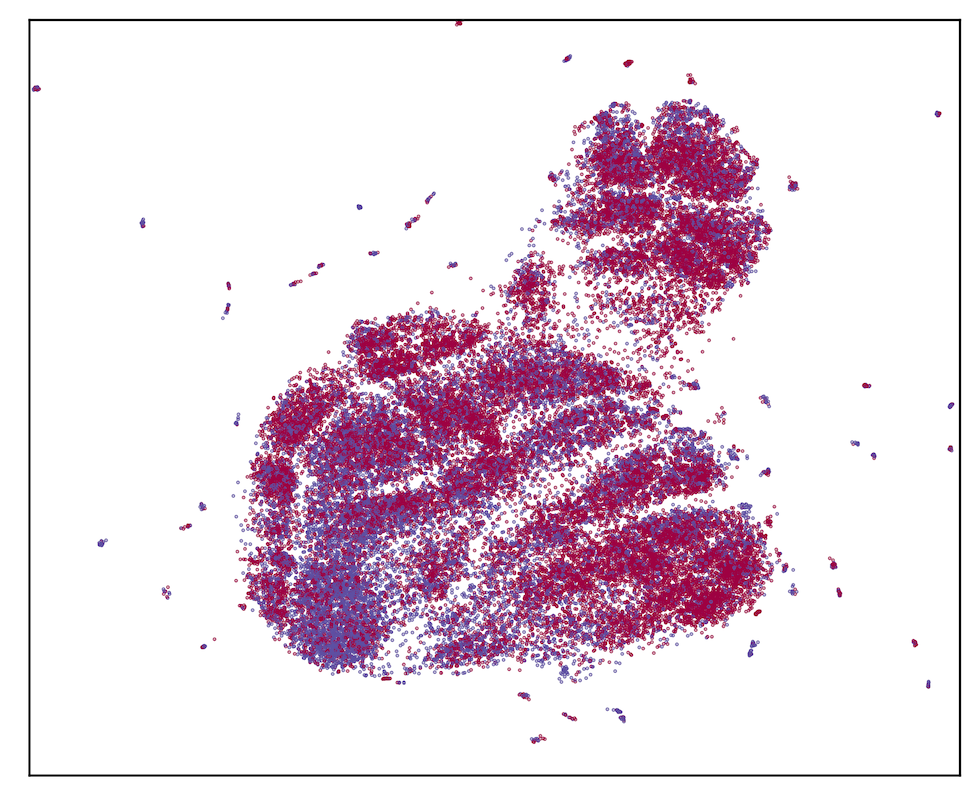} 
    \includegraphics[width=1.6in]{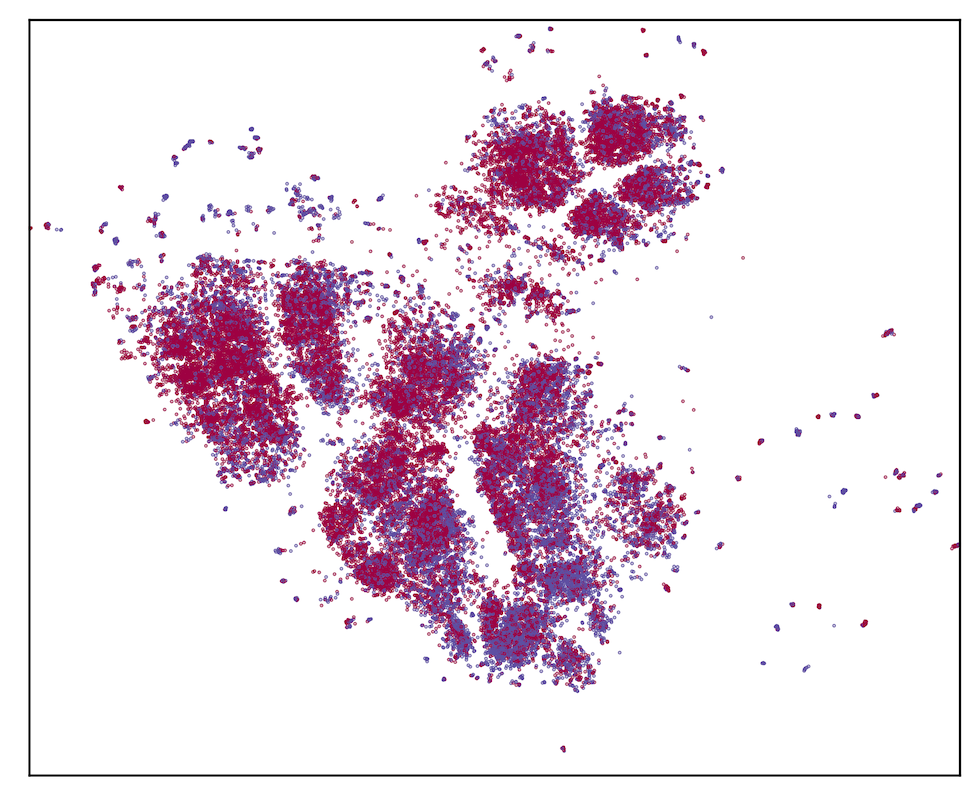} \\
    (a) \hspace{1.45in} (b) \hspace{1.45in} (c) \hspace{1.45in} (d) \\
    
    \caption{Embedding of patient features with `abdominal pain' as the chief complaint. (a) UMAP, (b) P-UMAP-MSE, (c) P-UMAP-CEMSE, (d) P-UMAP-CE.}
    \label{suppfig:abdominal_pain}
\end{figure*}
\begin{table*}[h]
 \caption{Results on Clinical Dataset with `abdominal pain' as the chief complaint.} \label{supptab:yale_data_table_abdominal_pain}
  \centering
  \begin{tabular}{l|l|l|l|l|l|l|l|l|l|}
    \toprule
             & \multicolumn{3}{|c|}{Trustworthiness (Train)} & \multicolumn{3}{|c|}{Trustworthiness (Train+Test)} & \multicolumn{2}{|c|}{k-NN classifier} \\
             \cmidrule(r){2-9}
             & $T_{5}$ & $T_{30}$ & $T_{150}$ & $T_{5}$ & $T_{30}$ & $T_{150}$ & $1$-NN Error & $5$-NN Error  \\
    \midrule
    UMAP ($n_s=5$)  & $0.8423$ & $0.8216$ & $0.8055$ & $0.8338$ & $0.8194$ & $0.8026$ & $41.21\%$ & $36.19\%$ \\
    UMAP ($n_s=3$) & -- & -- & -- & $0.8362$ & $0.8217$ & $0.8050$ & $40.66\%$ & $35.55\%$ \\
    P-UMAP-MSE    & $0.8285$ & $0.8128$ & $0.7973$ & $0.8200$ & $0.8095$ & $0.7940$ & $40.58\%$ & $35.51\%$  \\
    P-UMAP-CEMSE  & $0.8388$ & $0.8290$ & $0.8388$ & $0.8350$ & $0.8300$ & $0.8139$ & $40.03\%$ & $35.21\%$  \\
    P-UMAP-CE     & $\mathbf{0.8798}$ & $\mathbf{0.8577}$ & $\mathbf{0.8404}$ & $\mathbf{0.8730}$ & $\mathbf{0.8569}$ & $\mathbf{0.8404}$ & $\mathbf{39.24}\%$ & $\mathbf{34.78}\%$  \\
    
    \bottomrule
  \end{tabular}
\end{table*}

\begin{figure*} [t]
    \centering
    \includegraphics[width=1.5in]{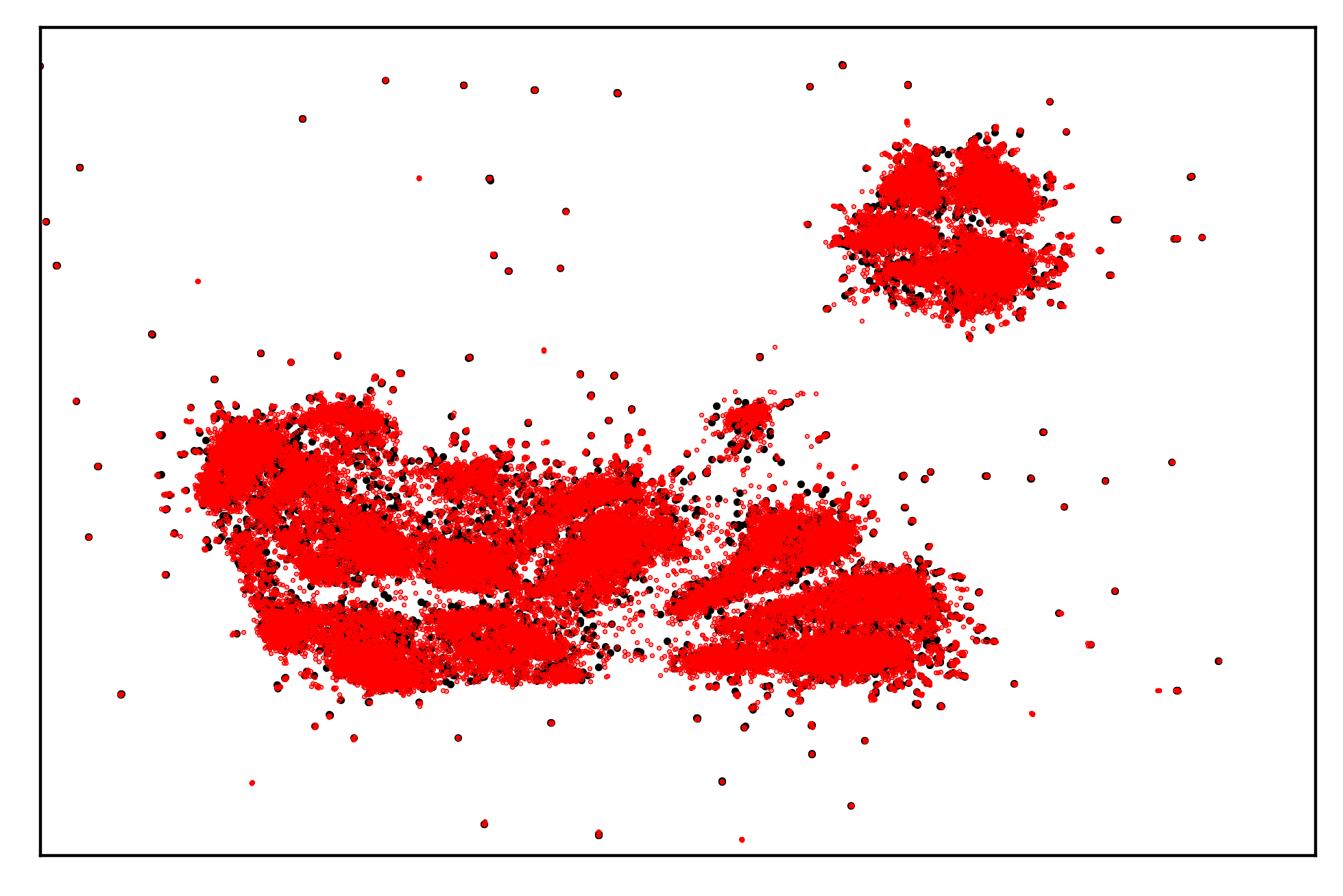}
    \includegraphics[width=1.5in]{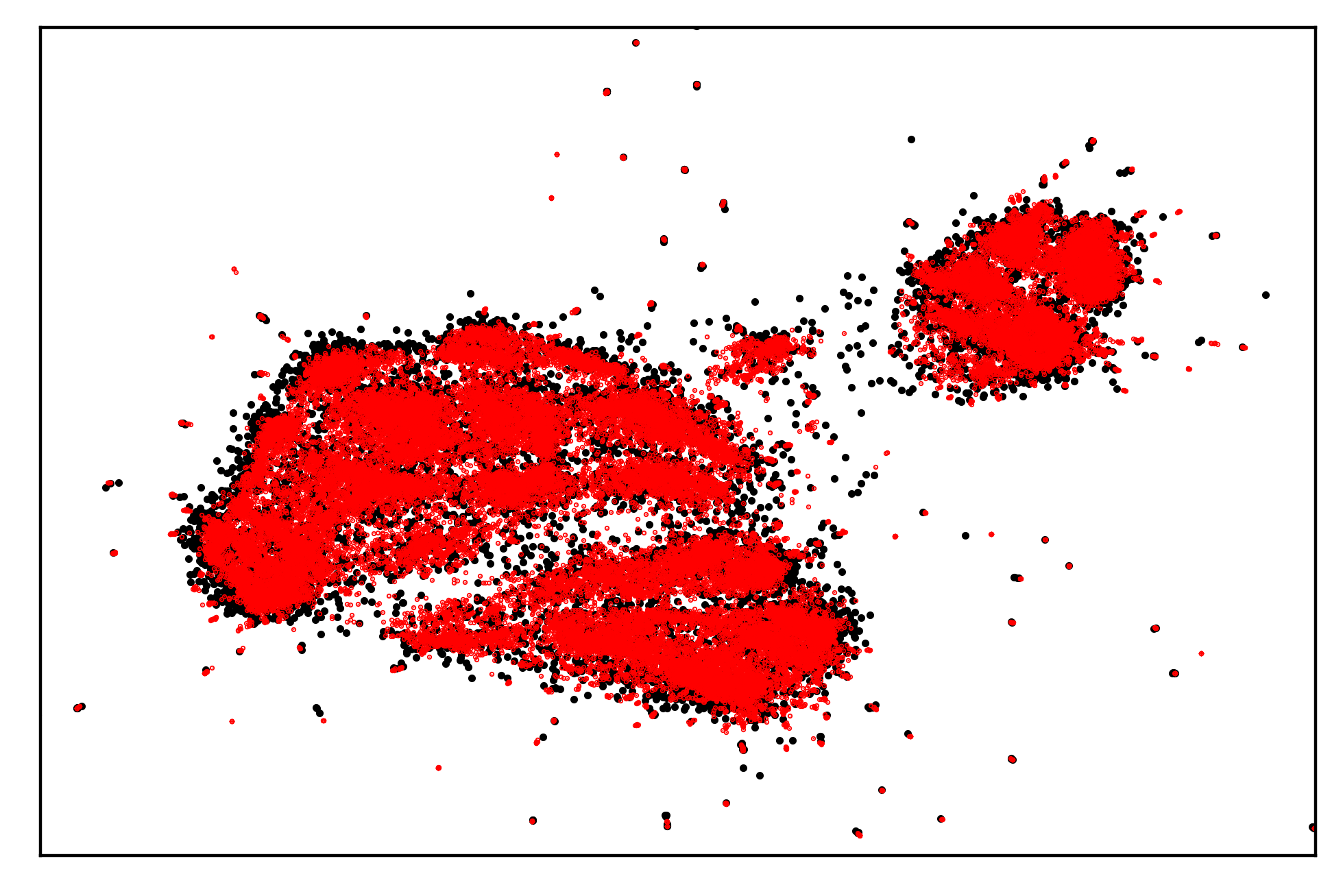}
    \includegraphics[width=1.5in]{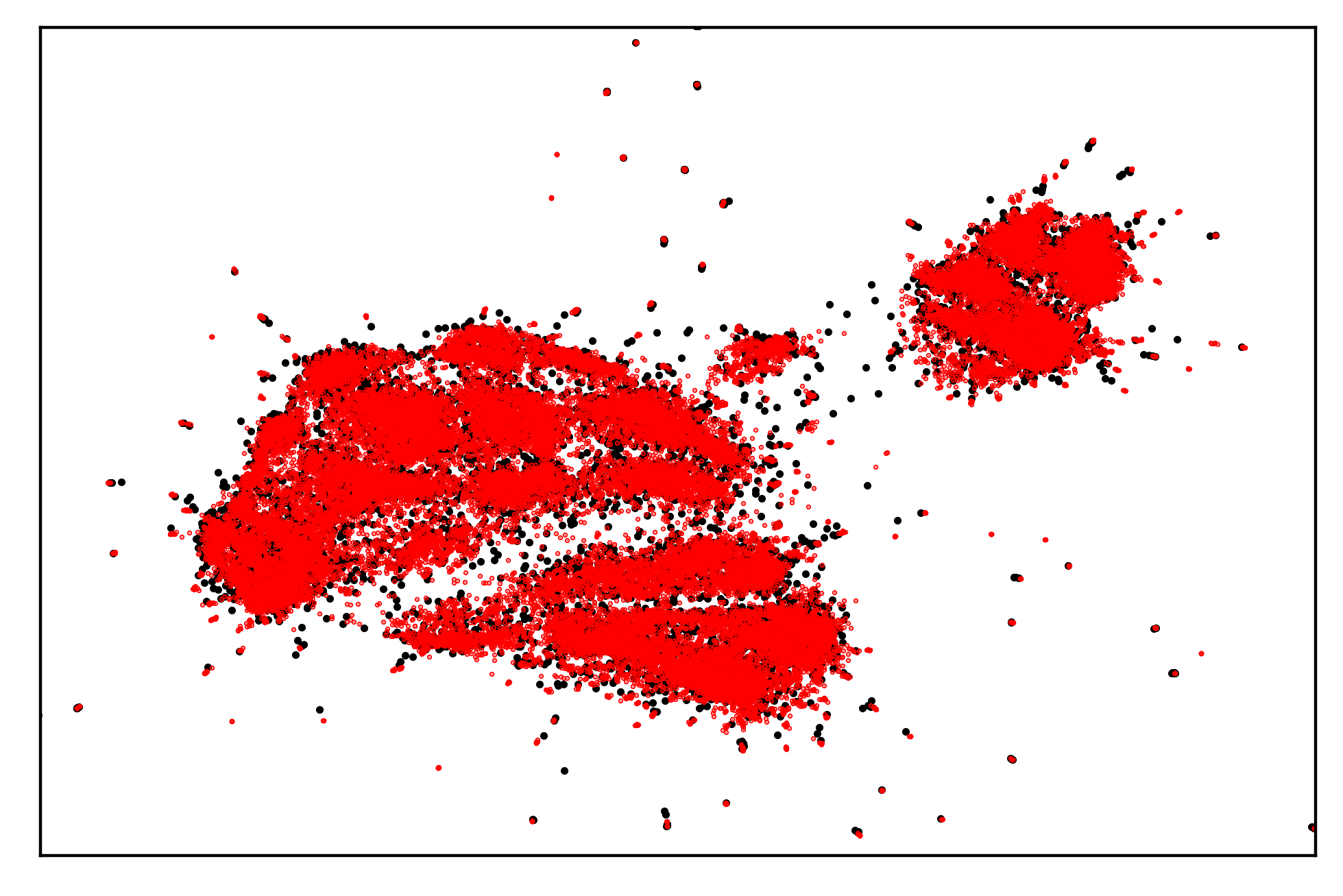}
    \includegraphics[width=1.5in]{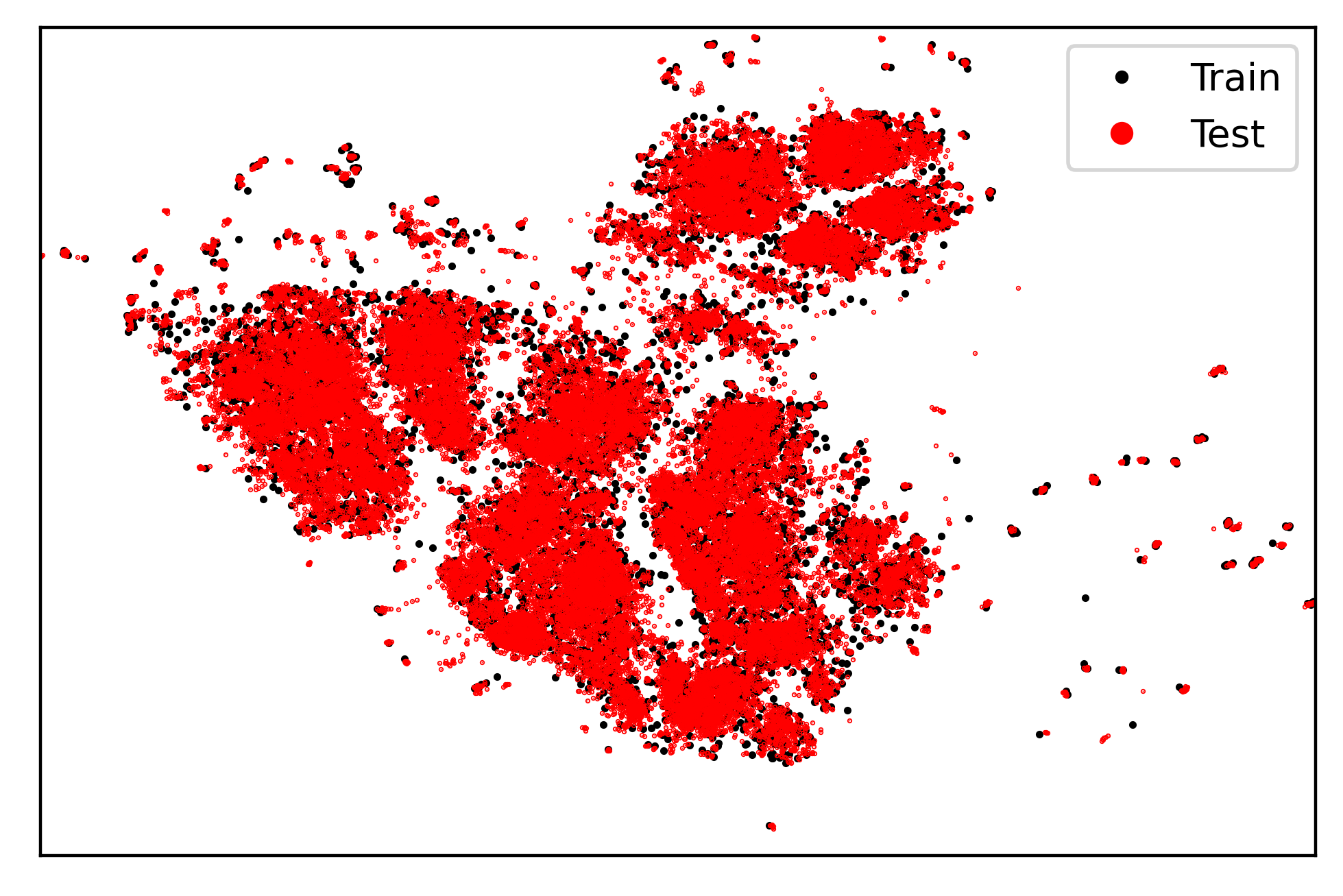} \\
    (a) \hspace{1.45in} (b) \hspace{1.45in} (c) \hspace{1.45in} (d) \\
    \caption{Repulsion effect in the clinical dataset for patients with `abdominal pain' as the chief complaint. (a) UMAP when training and test data are embedded together. (b-d) Embedded using training followed by test data is embedded as out-of-sample data using (b) UMAP showing repulsion effect, (c) UMAP embedding showing reduced repulsion effect when $n_s$ is set to 3 for test points, and (d) P-UMAP-CE. Both (c) and (d) show reduced repulsion effect.} \label{suppfig:repulsion_efect_yale_abdominal}
\end{figure*}

In the main text, we discussed that the minimum distance $m_d$ was set to $0$, $0.0001$, $0.001$, $0.01$, $0.1$, $0.5$, $0.8$, and $1.0$ for the UMAP embedding of the chief complaint `shortness of breath'. The corresponding trustworthiness values for them are shown in Fig.~\ref{suppfig:trustworthienss_yale_shortness}. The highest trustworthiness was obtained for $m_d=0.0001$.

Now, we look into another chief complaint, `Abdominal Pain'. 
Abdominal pain was responsible for $54,315$ hospital visits in the dataset, of which $19,482$ visits resulted in admission. After pre-processing, each visit corresponds to $1016$ dimensional vectors. 
The training data consists of $43,452$ visits, and the test data consists of $10,863$ visits. The 2-dimensional embeddings obtained from different algorithms are shown in Fig.~\ref{suppfig:abdominal_pain}. 
As previously mentioned, all the algorithms show a similar embedding structure. 
The points of admitted cases are clustered (blue) in different parts of the embedding. 
Similar to the `shortness of breath case', the trustworthiness of the embeddings is highest for P-UMAP-CE (Table~\ref{supptab:yale_data_table_abdominal_pain}). 
For the low number of nearest neighbors of 5, the trustworthiness of UMAP embedding is higher than P-UMAP-MSE and P-UMAP-CEMSE. 
The k-NN classifiers also show a similar trend in accuracy, with P-UMAP-CE showing the lowest errors.

Fig.~\ref{suppfig:repulsion_efect_yale_abdominal} shows the effect for `abdominal pain'. The plots are similar to what we obtained for `shortness of breath' in the main text. 
As usual, Fig.~\ref{suppfig:repulsion_efect_yale_abdominal}(a) shows the embedding when the training and test data are embedded together and naturally display no repulsion effect. 
The clumping up of test points in the periphery of the clusters appears when test points are embedded using the rule-based method (Fig.~\ref{suppfig:repulsion_efect_yale_abdominal}(b)), which is visible due to the dark boundary around the training embedding of lighter color. The dark boundary is also absent when a lower value of $n_s$ is used (Fig.~\ref{suppfig:repulsion_efect_yale_abdominal}(c)) as well as when parameterized UMAP is used (Fig.~\ref{suppfig:repulsion_efect_yale_abdominal}(d)).


\section{Additional Discussion Regarding Force Ratios}~\label{suppsec:supp_afr_rfr}

Fig.~\ref{suppfig:choiceofpoints} shows the out-of-sample points selected for analysis in section~4.4. The green region shows points outside the boundary, and the purple region shows points inside the boundary. 

\begin{figure*}[ht]
    \centering
    \includegraphics[width=0.3\linewidth]{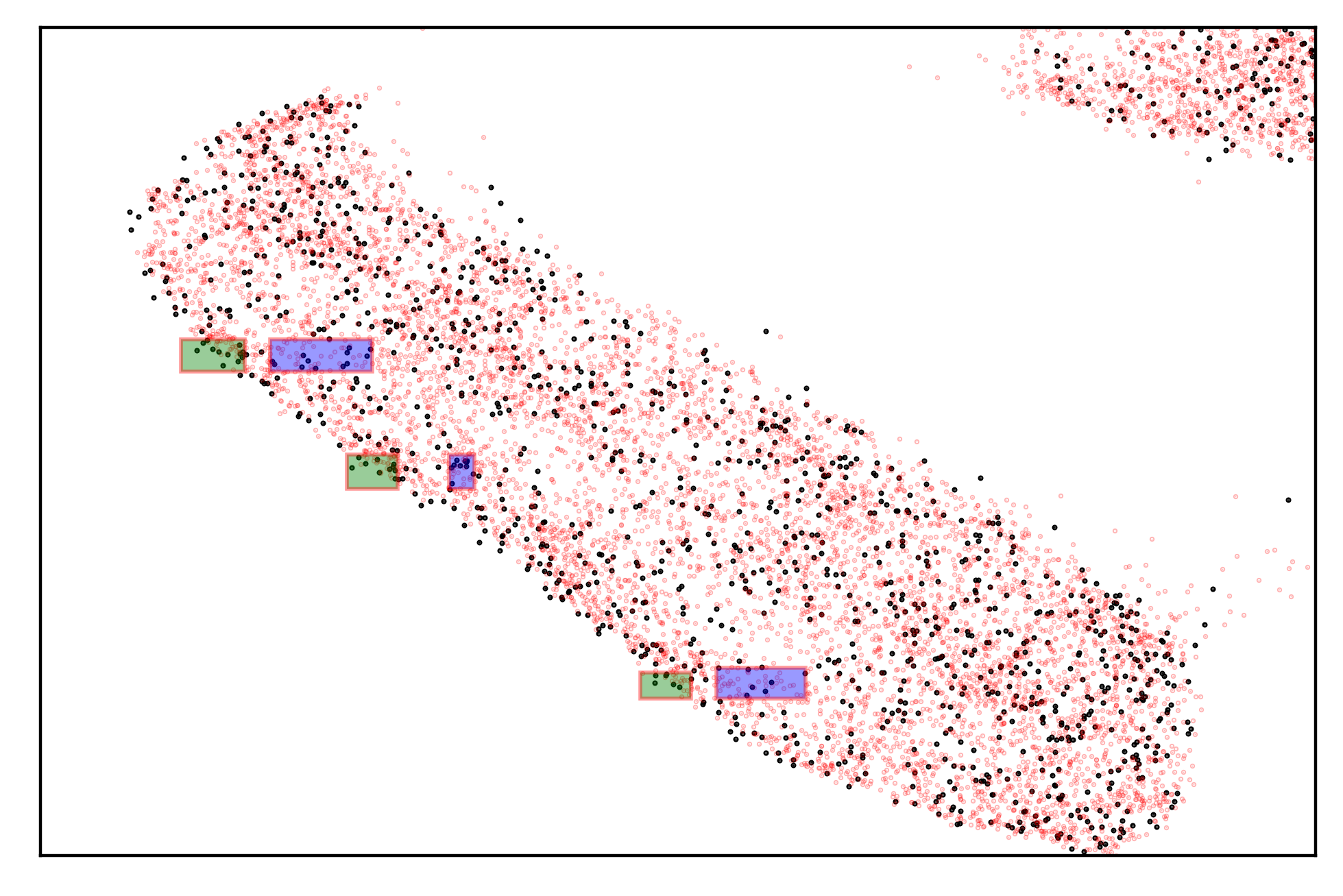}
    \includegraphics[width=0.3\linewidth]{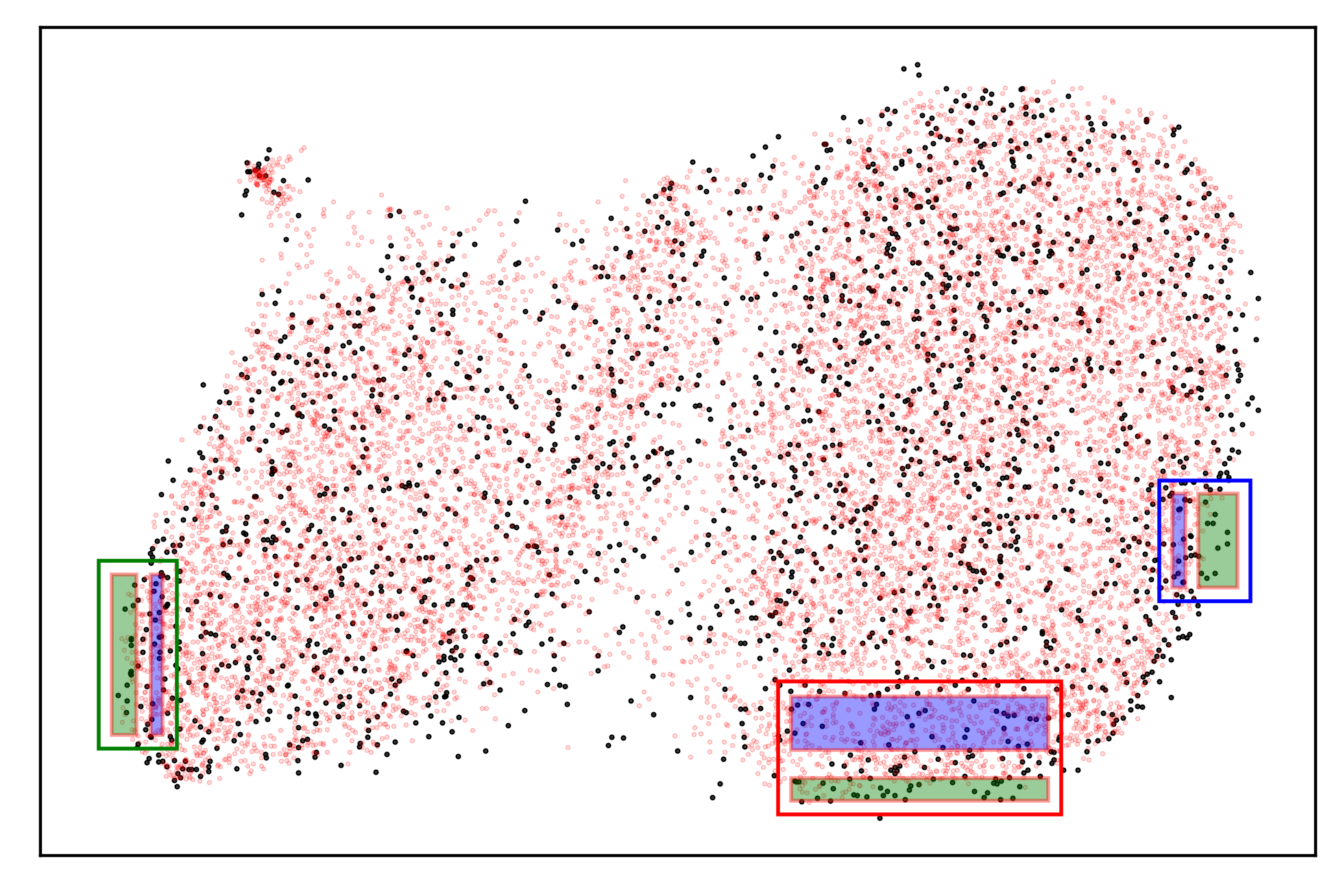}
    \includegraphics[width=0.3\linewidth]{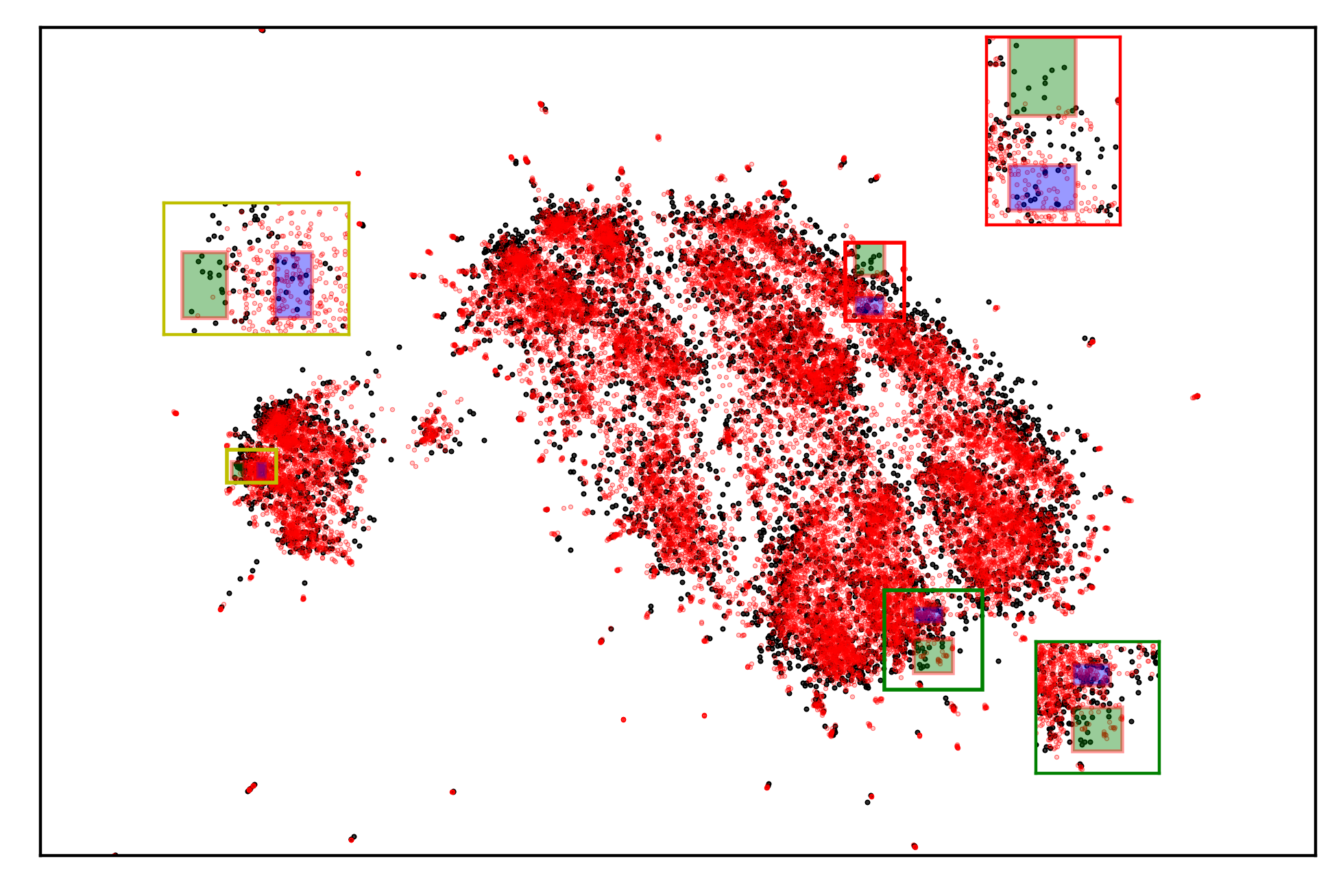} \\
    (a) \hspace{0.3\linewidth} (b) \hspace{0.3\linewidth} (c)
    \caption{Points chosen for analysis in Section~4.4. Green shading indicates points outside the cluster periphery ($S_1$) and purple shading indicates points inside the cluster periphery ($S_2$). (a) 58 out-of-sample points selected from MNIST dataset, (b) 112 out-of-sample points selected from chest x-ray data, (c) 80 out-of-sample points selected from clinical data.}
    \label{suppfig:choiceofpoints}
\end{figure*}

\end{document}